\DeclareMathSizes{\@xpt}{9}{7}{5}
\newtheorem{lemma}[theorem]{Lemma}
\title{
Prospective Learning: Learning for a Dynamic Future
}
\author{%
Ashwin De Silva$^{*,1}$
\And
Rahul Ramesh$^{*,2}$
\And
Rubing Yang$^{*,2}$
\AND
Siyu Yu$^{1}$
\And
Joshua T.~Vogelstein$^{\dagger,1}$
\And
Pratik Chaudhari$^{\dagger,2}$
\AND
$^{*,\dagger}$ Equal Contribution\\
Email: \{ldesilv2, syu80, jovo\}@jhu.edu, \{rahulram, rubingy, pratikac\}@upenn.edu
}
\begin{document}

{
    \footnotesize
}
\clearpage
\setcounter{page}{1}
\maketitle


\begin{abstract}
In real-world applications, the distribution of the data, and our goals, evolve over time. The prevailing theoretical framework for studying machine learning, namely probably approximately correct (PAC) learning, largely ignores time. As a consequence, existing strategies to address the dynamic nature of data and goals exhibit poor real-world performance.
This paper develops a theoretical framework called ``Prospective Learning'' that is tailored for situations when the optimal hypothesis changes over time. 
In PAC learning, empirical risk minimization (ERM) is known to be consistent. 
We develop a learner called Prospective ERM, which returns a sequence of predictors that  make predictions on future data.  We prove that the risk of prospective ERM converges to the Bayes risk under certain assumptions on the stochastic process  generating the data. Prospective ERM, roughly speaking, incorporates time as an input in addition to the data. We show that standard ERM as done in PAC learning, without incorporating time, can result in failure to learn when distributions are dynamic.
Numerical experiments illustrate that prospective ERM can learn synthetic and visual recognition problems constructed from MNIST and CIFAR-10.
Code at \href{https://github.com/neurodata/prolearn}{https://github.com/neurodata/prolearn}.
\end{abstract}


\section{Introduction}

All learning is for the future. Learning involves updating decision rules or policies, based on past experiences, to improve future performance. Probably approximately correct (PAC) learning has been extremely useful to develop algorithms that minimize the risk---typically defined as the expected loss---on unseen samples under certain assumptions. The assumption, that samples are independent and identically distributed (IID) within the training dataset and at test time, has served us well. 
But it is neither testable nor believed to be true in practice. The future is always different from the past: both distributions of data and goals of the learner may change over time. Moreover, those changes may cause the optimal hypothesis to change over time as well. 
There are numerous mathematical and empirical approaches that have been developed to address this issue, e.g., techniques for being invariant to~\cite{blum2023machine}, or adapting to, distribution shift~\cite{quinonero-candelaDatasetShiftMachine2009a}, modeling the future as a different task, etc.
But we lack a first-principles framework to address problems where data distributions and goals may change over time in such a way that the optimal hypothesis is time-dependent.
And as a consequence, machine learning-based AI today is brittle to changes in distribution and goals.

This paper develops a theoretical framework called ``Prospective Learning'' (PL).
Instead of data arising from an unknown probability distribution like in PAC learning, prospective learning assumes that data comes from an unknown stochastic process, that the loss considers the future, and that the optimal hypothesis may change over time.
A prospective learner uses samples received up to some time $t \in \naturals$ to output an infinite sequence of predictors, which it uses for making predictions on data at all future times $t' > t$.
We discuss how prospective learning is related to existing problem formulations in the literature in~\cref{s:related,s:isnt}.

\paragraph{Why should one care about prospective learning?}
Imagine a deployed machine learning system. The designer of this system desires to optimize---not the risk upon the past training data, or the risk on the immediate future data---but the risk on all data that the model will make predictions upon in the future. As data evolves, e.g., due to changing trends and preferences of the users, the optimal hypothesis to make predictions also changes. Time is the critical piece of information if the system designer is to achieve their goals. Both in the sense of how far back in time a particular datum was recorded, and in the sense of how far ahead in the future this system will be used to make predictions. The designer must take time into account to avoid retraining the model periodically, \emph{ad infinitum}.

Biology is also rich with examples where systems seem to behave prospectively.  The principle of allostasis, for example, states that regulatory processes of living things anticipate the needs of the organism and prepare to satisfy these needs before, rather than after, they arise~\cite{Sterling2012-mf}. For example, mitochondria increase their energy production to anticipate the demands of muscles~\cite{wang2017mitochondrial}, neural circuits anticipate changes in sensory stimuli and the task (i.e., predictive coding~\cite{Huang2011-hp}), and individual organisms optimize their actions with respect to anticipated changes in their environments~\cite{seligman2013navigating, Seligman2016-zt}. These regulatory principles were learned early in evolutionary time so they must be important. In short, the world---including our internal drives---changes all the time,  and learning systems must anticipate (that is, prospect) these changes to thrive.

\subsection*{Contributions}
\label{s:contributions}

\begin{itemize}[nosep]
\item \cref{s:definition} defines Prospective Learning (PL) as an approach to address problems where the optimal hypothesis may evolve over time (due to shifts in distributions and/or goals of the learner). It also provides illustrative examples of PL.

\item \cref{s:related,s:isnt} put prospective learning in context relative to existing ideas in the literature to address changes in the data distribution.

\item \cref{s:theory} takes steps towards a theoretical foundation for prospective learning. We define strongly learnability (i.e., there exists a prospective learner whose risk is arbitrarily close to the Bayes optimal learner) and weakly learnability (i.e., there exists a prospective learner whose risk is better than chance)~\cite{Kearns1994-zk}. Empirical risk minimization (ERM) without incorporating time, can result in failure to strongly, or even weakly, learn prospectively~\cite{Lugosi1995-pa}.

\item \cref{sec:perm} introduces prospective empirical risk minimization, and proves that it can learn prospectively under certain assumptions on the stochastic process and loss.

\item \cref{sec:experiments} demonstrates that our prospective learners
can prospectively learn several canonical problems constructed using synthetic,  MNIST~\citep{lecun1990handwritten} and CIFAR-10~\citep{Krizhevsky2009-ko} data.
In contrast, a number of existing algorithms, including ERM, online and continual learning algorithms, fail.
\cref{s:llm} demonstrates that current large language models, which use Transformer-based architectures trained using auto-regressive losses, fail to learn prospectively. 
\end{itemize}


\section{A definition of prospective learning}
\label{s:definition}

A prospective learner minimizes the expected cumulative risk of the future using past data. Such a learner is defined by the following key ingredients (see~\cref{fig:scenarios} (left) for schematic illustration).

\textbf{Data.} Let the input and output at time $t$ be denoted by $x_t \in \XX$ and $y_t \in \YY$ respectively. Let $z_t = (x_t, y_t)$. We will model the data as a stochastic process $Z \equiv (Z_t)_{t \in \naturals}$ defined on an appropriate probability space $(\Om, \FF, \P)$.
At time $t \in \naturals$, denote past data by $z_{\leq t} \equiv (z_1, \dots, z_t)$ and future data by $z_{> t} \equiv (z_{t+1}, \dots)$.
We will find it useful to distinguish between the realization of the data, denoted by $z_{\leq t}$, and the corresponding random variable, $Z_{\leq t}$.

\textbf{Hypothesis class.}
At each time $t$, a prospective learner selects an infinite sequence $h \equiv (h_1,\dots,h_t,h_{t+1},\dots)$ which it uses to make predictions on data at any time in the future. Each element of this sequence $h_t: \XX \mapsto \YY$ and therefore $h_t \in \YY^\XX$.%
\footnote{We will use some non-standard notation in this paper. In particular, a hypothesis $h$ will always refer to sequence of predictors $h \equiv (h_1,\dots,h_t,h_{t+1},\dots)$. This helps us avoid excessively verbose mathematical expressions.}
The hypothesis class $\HH$ is the space of such hypotheses, $h \in \HH \subseteq (\YY^\XX)^\naturals$.%
\footnote{When we say that ``learner selects a hypothesis'' in the sequel, it will always mean that the learner selects an infinite sequence from within the hypothesis class $\HH$.}
We will again use the shorthand $h_{\leq t} \equiv (h_1,\dots,h_t)$.
We will sometimes talk about a ``time-agnostic hypothesis'' which will refer to a hypothesis such that $h_t = h_{t'}$ for all $t, t' \in \naturals$.
Observe that this makes our setup different from the standard setup in PAC learning where the learner selects a single hypothesis in $\YY^\XX$.
One could also think of prospective learning as using a single time-varying hypothesis $h: \naturals \times \XX \mapsto \YY$, i.e., the hypothesis takes both time and the datum as input to make a prediction.

\textbf{Learner.}
A prospective learner is a map from the data received up to time $t$, to a hypothesis that makes predictions on the data over all time (past and future):
\(
    (\XX \times \YY)^t \to (\YY^\XX)^\naturals.
\)
The learner gives as output a hypothesis $h(z_{\leq t}) \in \HH$. Unlike a PAC learner, a prospective learner can make different kinds of predictions at different times. This is a crucial property of prospective learning. In other words, after receiving data up to time $t$, the hypothesis selected by the prospective learner can predict on samples at any future time $t' > t$.

\textbf{Prospective loss and risk.}
The future loss incurred by a hypothesis $h$ is
\beq{
    \bar \ell_t(h, Z) = \limsup_{\tau \to \infty} \frac{1}{\tau} \sum_{s=t+1}^{t+\tau} \ell (s, h_s(X_s), Y_s),
    \label{eq:ell_bar}
}
where $\ell: \naturals \times \YY \times \YY \mapsto [0,1]$ is a bounded loss function.\footnote{The limsup is guaranteed to exist if $\ell$ is bounded. If the series converges, we can use lim instead.}
Prospective risk at time $t$ is the expected future loss
\footnote{There are many real world scenarios where expected future loss may not be sufficient for good performance, e.g., for portfolio managements or inference by biological learners who optimize for a balance between value and risk. Moreover, the risk functional could, in general, also change over time. In this paper, we will focus only on the expected future loss.}
\beq{
    R_t(h)
    = \E \sbr{\bar \ell_t(h,Z) \mid z_{\leq  t}}
    = \int \bar \ell_t(h,Z) \dd{\P_{Z \mid z_{\leq t}}},
    \label{eq:prospective_risk}
}
where we assume that $h$ is a random variable and $h\in \sigma(Z_{\leq t})$ where $\s(\cdot)$ denotes the filtration (an increasing sequence of sigma algebras) of the stochastic process $Z$. We have used the shorthand  $\E[Y \mid x]$ for  $\E[Y \mid X = x]$. Observe that we have conditioned the prospective risk of the hypothesis $h$ upon the realized data $z_{\leq t}$. We can take an expectation over the realized data, to obtain the expected prospective risk
\[
    \E \sbr{R_t(h)} = \int R_t(h) \dd{\P_{Z_{\leq t}}}.
\]

\textbf{Prospective Bayes risk}
is the minimum risk achievable by any hypothesis. In PAC learning, it is a constant that depends upon
the (fixed) true distribution of the data and the risk function.
In prospective learning, the optimal hypothesis can predict differently at different times.
We therefore define the prospective Bayes risk at a time $t$ as
\beq{
    R_t^* = \inf_{h\in \sigma(Z_{\leq t})}  R_t(h),
    \label{eq:bayes_risk}
}
which is the minimum achievable prospective risk by any learner that observes data $z_{\leq t}$.  We define the Bayes optimal learner as any learner that achieves a Bayes optimal risk at every time $t \in \naturals$. In certain contexts, one might be interested in the limiting prospective Bayes risk as $t \to \infty$.

\subsection{Different prospective learning scenarios with illustrative examples}
\label{s:examples}

We next discuss four prospective learning scenarios that are relevant to increasingly more general classes of stochastic processes.
Our goal is to illustrate, using examples, how the definitions developed in the previous section capture these scenarios.
We will assume that for all times $t$ we have $X_t = 1$, $Y_t \in \{0, 1\}$. We will also focus on the time-invariant zero-one loss $\ell(t, \hat y, y) =\delta(\hat y \neq y)$ for all $t$, here $\delta$ is the Dirac delta function.
\cref{fig:scenarios} shows example realizations of the data for each scenario.

\begin{figure}[!t]
\centering
\begin{subfigure}[c]{0.34\linewidth}
    \centering
    \includegraphics[width=\linewidth]{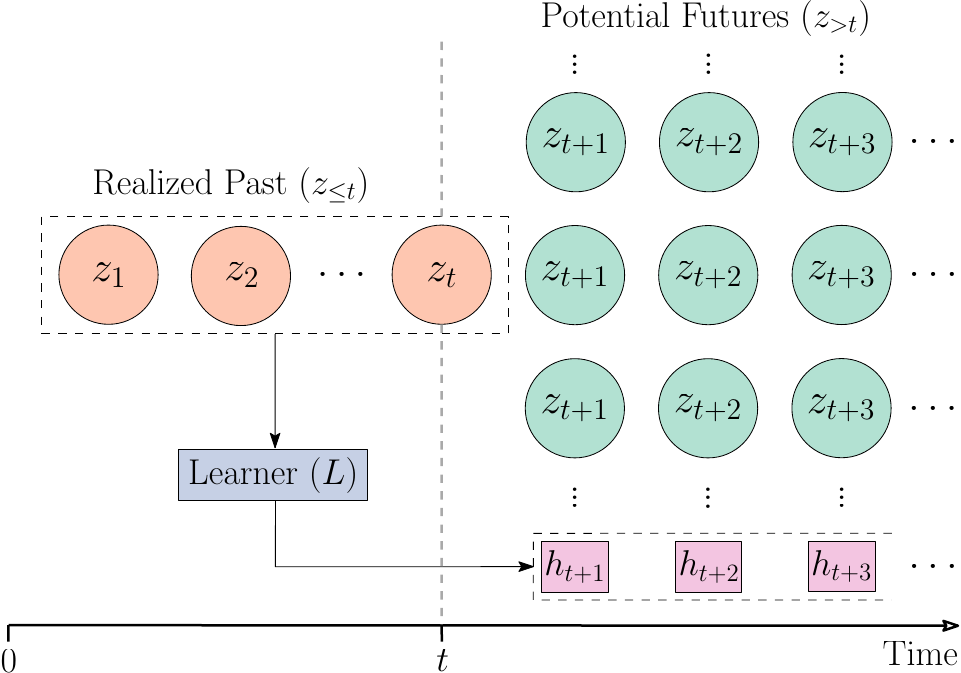}
\end{subfigure}
\begin{subfigure}[c]{0.65\linewidth}
    \centering
    \includegraphics[width=\linewidth]{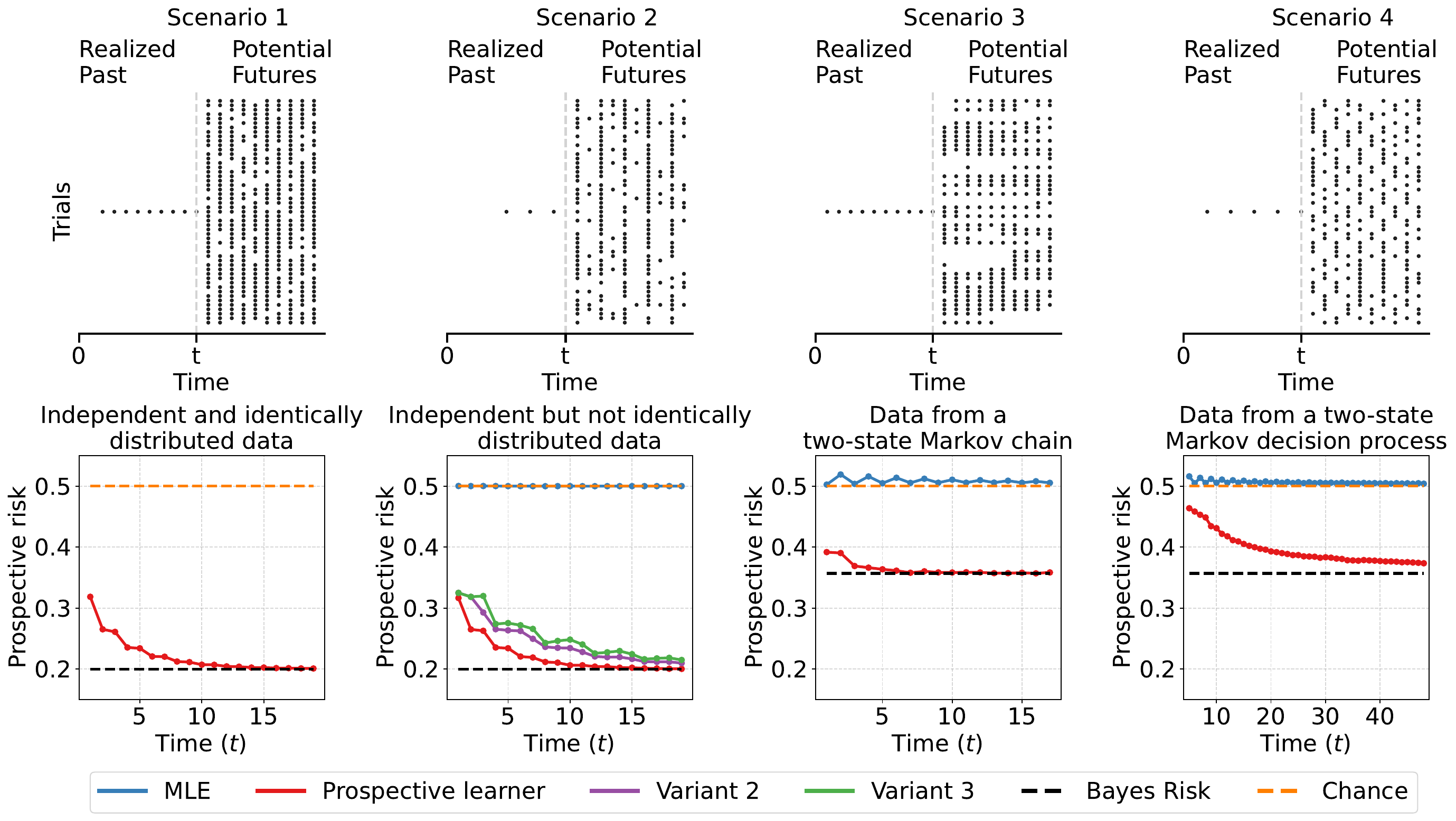}
\end{subfigure}
\caption{
A schematic for prospective learning (left) and realizations of the examples for the four scenarios (top right); dots denote 1s and empty spaces denote 0s for $Y_t \in \{0,1\}$ with $X_t = 1$ for all times $t$. Prospective risk of learners at different times is shown in the bottom panels and discussed in~\cref{s:examples}.
\textbf{\cref{eg:case1}:}
For Bernoulli probability $p=0.2$, the maximum-likelihood estimator (MLE) in blue uses a time-agnostic hypothesis $h_t(X_t) = \mathbf{1}(\hat p_t > 0.5)$ where $\hat p_t = t^{-1} \sum_{s=1}^t y_s$, ties at $\hat p_t=0.5$ are broken randomly. The risk of this learner converges to the Bayes risk.
\textbf{\cref{eg:case2}:}
For Bernoulli probability $p=0.2$, the MLE estimator (blue) performs at chance levels.
A prospective learner (red) that alternates between two predictors at even and odd times converges to Bayes risk.
Variants of this learner that use less information from the stochastic process (purple does not know that the data distributions at even and odd times are tied, green does not know that the distribution shifts at every time-step) also converge to Bayes risk, but more slowly.
\textbf{\cref{eg:case3}:}
For $\th = 0.1$ and $\g=0.9$ in the discounted prospective risk, the MLE estimator (blue) again performs at chance levels. A prospective learner that computes an estimate of the transition probability of the two-state Markov chain to estimate $\P(Y_{t'} \mid y_t)$ for future times $t' > t$ converges to Bayes risk.
\textbf{\cref{eg:case4}:}
For $\th_0 = \th_1 = 0.1$, the MLE estimator (blue) performs at chance levels.
A prospective learner that uses a variant of Q-learning (described in the text and~\cref{s:app:scenario4}) converges to the prospective Bayes risk.
}
\label{fig:scenarios}
\end{figure}

\begin{scenario}[\textbf{Data is independent and identically distributed}]
\label{eg:case1}
Formally, this consists of stochastic processes where $\P_{Z_{t'} \mid Z_{\leq t}} = \P_{Z_t}$ for all $t, t' \in \naturals$. As an example, consider $Y_t \sim \text{Bernoulli}(p)$ for some unknown parameter $p \in [0,1]$.
Prospective Bayes risk is equal to $\min(p,1-p)$ in this case.
A time-agnostic hypothesis, for example one that thresholds the maximum likelihood estimator (MLE) of the Bernoulli probability, converges to the limiting prospective Bayes risk.\footnote{We show an interesting observation in~\cref{s:app:promap}: if the prior of a Bayesian learner is different from the true Bernoulli probability, then prospective learning can improve upon the maximum \emph{a posteriori} estimator.}

\end{scenario}

\begin{scenario}[\textbf{Data is independent but not identically distributed}]
\label{eg:case2}
This consists of stochastic processes where $\P_{Z_t \mid Z_{\leq t}} = \P_{Z_t}$ for all $t \in \naturals$.
Consider $Y_t \sim \text{Bernoulli}(p)$ if $t$ is odd, and $Y_t \sim \text{Bernoulli}(1-p)$ if $t$ is even, i.e., data is drawn from two different distributions at alternate times.
Prospective Bayes risk is again equal to $\min(1-p,p)$ in this case.
A time-agnostic hypothesis can only perform at chance level.
But a prospective learner, for example one that selects a hypothesis that alternates between two predictors at even and odd times, can converge to prospective Bayes optimal risk.
We can also construct variants, e.g., when the relationship between the Bernoulli probabilities are not known (Variant 1 in~\cref{fig:scenarios}), or when the learner does not know that the data distribution changes at every time step  (Variant 2 in~\cref{fig:scenarios} where we implemented a generalized likelihood ratio test to determine whether the distribution changes). The risk of these variants also converges to prospective Bayes risk, but they need more samples because they use more generic models of the stochastic process.
This scenario is closely related to (online) multitask/meta-learning~\cite{finnModelagnosticMetalearningFast2017}.
\end{scenario}

\begin{scenario}[\textbf{Data is neither independent nor identically distributed}]
\label{eg:case3}
Formally, this scenario consists of general stochastic processes.
As an example, consider a Markov process $\P(Y_{t+1} = k \mid Y_t =k) = \th$ with two states $k \in \{0,1\}$ and $Y_1 \sim \text{Bernoulli}(\th)$.
The invariant distribution of this Markov process is $\P(0) = \P(1) = 1/2$. Prospective Bayes risk is also equal to 1/2. For stochastic processes that have a invariant distribution, it is impossible to predict the next state infinitely far into the future and therefore it is impossible to prospect. The prospective Bayes risk is trivially chance levels.
In such situations, the learner could consider losses that are discounted over time. For example, one could use a slightly different loss than the one in~\cref{eq:ell_bar} to write
\beq{
    \textstyle \bar \ell_t(h, Z) = (1-\g) \sum_{s=t+1}^\infty \g^{s-t-1} \ell(h_s(X_s), Y_s)
    \label{eq:ell_bar_discounted}
}
for some $\g \in [0, 1)$. In this example, we can calculate the prospective Bayes risk analytically; see~\cref{s:app:scenario3}.
For $\g = 0.9$, $\th = 0.1$ and the zero-one loss, limiting prospective Bayes risk is $0.357$.
Now consider a learner which computes the MLE of the transition matrix $\G_t^{t'-t}$.
It calculates $\P(Y_{t'} \mid y_t) = \hat p_{t'}$ where $[1-\hat p_{t'}, \hat p_{t'}] = \G_t^{t'-t} [1-y_t, y_t]^\top$ and uses the
hypothesis $h_{t'}(X_{t'}) = \mathbf{1}(\hat p_{t'} > 0.5)$ (ties broken randomly).
We can see in~\cref{fig:scenarios} that this learner converges to the prospective Bayes risk.
This example shows that if we model the changes in the data, then we can perform prospective learning.
This scenario is closely related to certain continual learning problems~\cite{vogelstein2020representation,rameshModelZooGrowing2022}.
\end{scenario}

\begin{scenario}[\textbf{Future depends upon the current prediction}]
\label{eg:case4}
Problems where predictions of the learner affect future data are an interesting special case of~\cref{eg:case3}.
Prospective learning can also be used to address such scenarios.
For $\th_0, \th_1 \in [0, 1]$, consider a Markov decision process (MDP)
$\P(Y_{t+1}=j \mid Y_t =j', h_{t+1}(1)=k) = \theta_k$ if $j=j'$ and $1-\theta_k$ otherwise.
I.e., the prediction $h_{t+1}(X_{t+1}) = k$ (recall that $X_t = 1$ for all times) is the decision and the MDP remains in the same state with probability $\th_k$.
Prospective Bayes risk for this example is the same as that of the example in~\cref{eg:case3}.
We can construct a prospective learner using a variant of Q-learning to first estimate the hypothesis and then estimate the probability $\P(Y_{t'} \mid y_t)$ like~\cref{eg:case3} above to predict on future data at time $t'$.
See~\cref{s:app:scenario4}.
Prospective risk of this learner converges to Bayes risk in~\cref{fig:scenarios}.
This scenario is closely related to reinforcement learning~\cite{Sutton1998-ez}.
\end{scenario}

\section{How is prospective learning related to other learning paradigms?
\protect\footnote{Also see~\cref{s:isnt} for a more elaborate discussion.}}
\label{s:related}

\textbf{Distribution shift.}
Prospective learning~\citep{de2023prospective} is equivalent to PAC learning~\citep{vapnik1991principles} in~\cref{eg:case1} when data is IID. Situations when this assumption may not be valid are often modeled as a distribution shift between train and test data~\citep{quinonero-candelaDatasetShiftMachine2009a}. Techniques such as propensity scoring~\citep{agarwal2011linear,fakoor22data} or domain adaptation~\citep{Daume2007-cv, ben2010theory} reweigh or map the train/test data to get back to the IID setting; techniques like domain invariance~\citep{arjovsky2020out, blum2023machine} build a statistic that is invariant to the shift. Typically, the loss is unchanged across train and test data. If the set of marginals $\{\P(Z_t)\}$ of the stochastic process only has two elements, then PL is equivalent to the classical distribution shift setting. But otherwise, in PL, data is correlated across time, distributions (marginals) can shift multiple times, and risk changes with time.

\textbf{Multi-task, meta-, continual, and lifelong learning.}
A changing data distribution could be modeled as a sequence of tasks. Depending upon the stochastic process, different concepts are relevant, e.g., multi-task learning~\citep{baxterModelInductiveBias2000a} is useful for~\cref{eg:case2} and~\cref{s:periodic} when there are a finite number of tasks. Much of continual or lifelong learning~\citep{rameshModelZooGrowing2022,vogelstein2020representation} focuses on ``task-incremental'' and ``class-incremental" settings~\citep{van2019three}, in which the learner knows when the task switches.
PL does not make this assumption, and therefore, the problem is substantially more difficult.  ``Data-incremental'' (or task-agnostic) setting~\citep{de2021continual}, is similar to PL. But the main difference is the goal: continual or lifelong learning seeks to minimize past error. As a consequence, continual learning methods are poor prospective learners; see~\cref{sec:experiments}.
Online meta-learning~\citep{thrun1998learning,dhillon2019a,finn2017model,fakoor2019meta} is close to task-agnostic continual learning, except that the former models tasks as being sampled IID from some distribution of tasks. Due to this, one cannot predict which task is next, and therefore cannot prospect.

\textbf{Sequential decision making and online learning.}
PL builds upon works on learning from streaming data.
But our goals are different. For example, \citet{gama2013evaluating} minimize the error on samples from a stationary process; \citet{hayes2020remind} minimize the error on a fixed held-out dataset or on all past data---neither of these emphasizes prospection.
There is a rich body of work on sequential decision making, e.g., predicting a finite-state, stationary ergodic process from past data~\citep{Cover75}.
Even in this simple case, there does not exist a consistent estimator using the finite past $Z_{1:t}$~\citep{Bailey76,ryabko88,ornstein78}.
This is also true for regression~\citep{morvai96,nobel03}, when the true hypothesis $f^*$ s.t. $Y_t=f^*(X_t)$ is fixed.
In other words, Bayes risk $R_t^*$ in~\cref{thm:time-aware} may be non-zero in PL even for finite-state, stationary ergodic processes.
\citet{Hanneke21} lifts the restriction on stationarity and ergodicity. They obtain conditions on the input process $X$ for consistent inductive (predict at time $t' > t$ using data up to $t$), self-adaptive (predict at time $t'$ using $Z_{\leq t}$ and $X_{t+1:t'}$) and online learning~\citep{Rakhlin2008LectureNotes,shalev2012online} (predict at $t'$ using $Z_{\leq t'}$).
They prove the existence of a learning rule that is consistent for every $X$ that admits self-adaptive learning.
If $X$ is ``smooth'', i.e., input marginals have a similar support over time, then ERM has a similar sample complexity as that of the IID setting~\cite{block24}. \citet{Haghtalab22} give algorithmic guarantees for several online estimation problems in this setting.

The true hypothesis in PL can change over time. This is different from the continual learning setting where we can find a common hypothesis for tasks at all time \cite{PengIdealContinualLr2023}, and this is why our proofs work quite differently from existing ones in the literature. Instead of a hypothesis class $\mathcal{H} \subseteq \YY^\XX$, we define the notion of a hypothesis class that consists of sequences of predictors, i.e., subset of $(\YY^\XX)^\naturals$; we can do ERM in this new space. Instead of consistency of prediction as in~\citet{Hanneke21}, we give guarantees for strong learnability, i.e., convergence of the ERM risk to the Bayes risk.

\textbf{Information theory.}
There are also works that have sought to characterize classes of stochastic processes that can be predicted fruitfully. \citet{bialek2001predictability} defined a notion called predictive information (closely related to the information bottleneck principle~\citep{tishbyInformationBottleneckMethod1999}) and showed how it is related to the degrees of freedom of the stochastic process.
\citet{Shalizi2001-bx} showed that a causal-state representation called an $\epsilon$-machine is the minimal sufficient statistic for prediction.



\section{Theoretical foundations of prospective learning}
\label{s:theory}

\begin{definition}[\textbf{Strong Prospective Learnability}]
A family of stochastic processes is strongly prospectively learnable, if there exists a learner with the following property: 
there exists a time $t'(\e,\delta)$ such that for any $\e, \d > 0$ and for any stochastic process $Z$ from this family, the learner outputs a hypothesis $h$ such that
\(
    \P \sbr{R_t(h) - R^*_t < \e} \geq 1 - \delta,
\)
for any $t > t'$.
\end{definition}
This definition is similar to the definition of strong learnability in PAC learning with one key difference. Prospective Bayes risk $R_t^*$ depends upon the realization of the stochastic process $z_{\leq t}$ up to time $t$. In PAC learning, it would only depend upon the true distribution of the data.
Not all families of stochastic processes are strongly prospectively learnable.
We therefore also define weak learnability with respect to a ``chance'' learner that predicts $\E[Y]$ and achieves a prospective risk $R_t^0$.\footnote{We can also define weak learnability with respect to the prospective risk of a particular learner, even one that is not prospective. This may be useful to characterize learning for stochastic processes which do not admit strong learnability.
}

\begin{definition}[\textbf{Weak Prospective Learnability}]
A family of stochastic processes is weakly prospectively learnable, if there exists a learner with the following property:
there exists an $\e>0$ such that for any $\d>0$, there exists a time $t'(\e,\delta)$ such that for any stochastic process $Z$ from this family,
\(
    \P \sbr{R^0_t - R_t(h) > \e} \geq 1-\delta,
\)
for any $t > t'$.
\end{definition}

In PAC learning for binary classification, strong and weak learnability are equivalent~\citep{schapire1990strength} in the distribution agnostic setting, i.e., when strong and weak learnability is defined as the ability of a learner to learn any data distribution. But even in PAC learning, if there are restrictions on the data distribution, strong and weak learnability are not equivalent~\cite{kearns1988thoughts}. This motivates~\cref{prop:time-agnostic} below. 
Before that, we define a time-agnostic empirical risk minimization (ERM)-based learner.
In PAC learning, ERM selects a hypothesis that minimizes the empirical loss on the training data.
It outputs a time-agnostic hypothesis, i.e., using data, say, $z_{\leq t}$ standard ERM returns the same predictor for future data from any time $t' >t$. There is a natural application of ERM to prospective learning problems, defined below.

\begin{definition}[\textbf{Time-agnostic ERM}]
\label{def:time_agnostic_erm}
Let $\HH$ be a hypothesis class that consists of time-agnostic predictors, i.e., $h_t = h_{t'}$ for any $t, t' \in \naturals$ for all predictors $h \in \HH$.
Given data $z_{\leq t}$, a learner that returns
\beq{
    \hat h = \argmin_{h \in \HH} \f{1}{t} \sum_{s=1}^t \ell(s, h_s(x_s), y_s)
    \label{eq:time_agnostic_erm}
}
is called a time-agnostic empirical risk minimization (ERM)-based learner.
\end{definition}
Time-agnostic ERM in prospective learning may use a time-dependent loss $\ell(s, h_s(x_s), y_s)$ upon the training data.
This ERM is not very different from standard ERM in PAC learning (when instantiated with the hypothesis class that consists of sequences of predictors, that we are interested here).
If data is IID (\cref{eg:case1}), then there is no information provided by time in the training samples. But if there are temporal patterns in the data, take~\cref{eg:case2,eg:case3} or~\cref{eg:case4} as examples, then time-agnostic ERM as defined here will return predictors that are different than those of standard ERM that uses a time-invariant loss.

\begin{proposition}
\label{prop:time-agnostic}
There exist stochastic processes for which time-agnostic ERM is not a weak prospective learner.
There also exist stochastic processes for which time-agnostic ERM is a weak prospective learner but not a strong one.
\end{proposition}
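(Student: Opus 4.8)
The plan is to prove both statements by explicit construction, working throughout in the simplest binary setting of \cref{s:examples}: $\YY=\{0,1\}$, $X_t\equiv 1$, the time-invariant zero-one loss $\ell(t,\hat y,y)=\delta(\hat y\neq y)$, and a hypothesis class of time-agnostic predictors as in \cref{def:time_agnostic_erm}. Since $\XX=\{1\}$ there are only two such predictors, ``always-$0$'' and ``always-$1$'', so time-agnostic ERM on $z_{\le t}$ reduces to thresholding the empirical label frequency $\hat p_t=t^{-1}\sum_{s\le t}y_s$ (the MLE in \cref{fig:scenarios}), with the tie at $\hat p_t=1/2$ broken arbitrarily. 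Both witnessing processes will be instances of a single two-periodic family: $Y_t\sim\mathrm{Bernoulli}(a)$ for odd $t$ and $Y_t\sim\mathrm{Bernoulli}(b)$ for even $t$, independent across $t$. The one computation needed is that for any hypothesis $h\in\sigma(Z_{\le t})$ — in particular a constant predictor — the future loss in \cref{eq:ell_bar} is almost surely a constant, equal to the Ces\`aro limit of the per-time errors $\P(Y_s\neq h_s)$; this is a routine consequence of Kolmogorov's strong law for independent bounded variables, together with the observation that an independent process reveals nothing about its own future, so that the infimum in \cref{eq:bayes_risk} is really over deterministic sequences. Consequently $\hat p_t\to(a+b)/2$ almost surely, the two constant predictors have prospective risks $(a+b)/2$ and $1-(a+b)/2$, the chance learner has $R^0_t=1/2$, and (taking $a<1/2$) the pointwise-mode predictor gives $R^*_t=\tfrac12 a+\tfrac12\min(b,1-b)$. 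Each claim is then just a comparison of these numbers.

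For the first claim, take $a=p$ and $b=1-p$ with $p\neq 1/2$, which is exactly \cref{eg:case2}. Then $(a+b)/2=1/2$, so \emph{both} constant predictors — hence $\hat h$, however the tie is broken — have prospective risk $R_t(\hat h)=1/2=R^0_t$ for every $t$. Thus $R^0_t-R_t(\hat h)=0$ deterministically, which cannot exceed any $\e>0$, so time-agnostic ERM fails the weak-learnability inequality (e.g.\ for $\d=1/2$) and is not a weak prospective learner. Observe that $R^*_t=\min(p,1-p)<1/2$, attained by the predictor that alternates between $0$ and $1$; so this process is even strongly learnable — just not by a time-agnostic learner, which is the point of the example.

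For the second claim, take $a<1/2<b$ with $a+b<1$ (concretely, $a=0.3$, $b=0.6$). Now $(a+b)/2<1/2$, so ERM strictly prefers ``always-$0$'', and $\P[\hat p_t\ge 1/2]\to 0$ exponentially fast by Hoeffding's inequality; hence for every $\d>0$ there is $t'(\d)$ with $\P[\hat h=\text{always-}0]\ge 1-\d$ for all $t>t'$, and on that event $R_t(\hat h)=(a+b)/2$. Since then $R^0_t-R_t(\hat h)=(1-a-b)/2>0$, the choice $\e=(1-a-b)/4$ makes time-agnostic ERM a weak prospective learner. But the pointwise-mode predictor — predict $0$ at odd times and $1$ at even times — attains $R^*_t=(a+1-b)/2$, so on the same high-probability event $R_t(\hat h)-R^*_t=b-\tfrac12>0$; taking any $\e<b-1/2$ and $\d<1/2$ then contradicts the strong-learnability inequality for all large $t$, so time-agnostic ERM is not a strong prospective learner.

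The only delicate steps are evaluating the $\limsup$-defined loss of \cref{eq:ell_bar} conditionally on $z_{\le t}$ — which is where the strong law enters, and where independence is used to reduce the Bayes infimum of \cref{eq:bayes_risk} to deterministic sequences — and threading the $(\e,\d,t')$ quantifiers of the two learnability definitions, which here only needs concentration of $\hat p_t$; everything else is arithmetic. I expect the main obstacle, such as it is, to be stating these cleanly rather than any genuine difficulty. It is also worth recording in the proof that the dichotomy is sharp within this family: time-agnostic ERM is a weak learner exactly when $a+b\neq 1$, and a strong learner exactly when in addition $a$ and $b$ fall on the same side of $1/2$ — which is why the knife-edge $a+b=1$ of \cref{eg:case2} is the natural witness for the first claim, and a process whose two marginals straddle $1/2$ asymmetrically the natural witness for the second.
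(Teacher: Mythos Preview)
Your proposal is correct and complete, but it takes a different construction from the paper's own proof. You work in the trivial-covariate setting $\XX=\{1\}$ of \cref{s:examples}, where time-agnostic ERM reduces to thresholding the label frequency, and witness both claims with a single two-periodic Bernoulli family (the knife-edge $a+b=1$ of \cref{eg:case2} for the first claim; an asymmetric straddle $a<1/2<b$, $a+b<1$ for the second). The paper instead uses nontrivial input spaces: for the first claim $\XX=\{-1,1\}$ with two distributions whose labels are exactly flipped, so that $R_1(h)+R_2(h)=1$ for \emph{every} predictor $h$ regardless of the hypothesis class $\GG$; for the second claim $\XX=\{-1,0,1\}$, where the two distributions agree on the optimal label at $x=\pm 1$ but disagree at $x=0$, forcing any time-agnostic $h$ to incur risk $\ge 1/3$ while still beating chance.

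What each buys: your route is more elementary --- all risks are closed-form averages of two Bernoulli parameters, and the only hypothesis class that exists on $\YY^{\{1\}}$ is $\{0,1\}$, so there is nothing to quantify over. The paper's route makes a slightly stronger point for the first claim: because $R_1(h)=1-R_2(h)$ holds identically, no time-agnostic predictor from \emph{any} hypothesis class can beat chance, whereas in your construction this follows only because the class is forced to be $\{\text{always-}0,\text{always-}1\}$. For the second claim the paper's three-point construction cleanly separates the ``agree'' region (where ERM learns correctly) from the ``disagree'' point (where it must err), which is structurally the same idea as your $a,b$ straddling $1/2$, just expressed in the covariate rather than in time. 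Both proofs interpret the chance learner as uniform random guessing with $R_t^0=1/2$, consistent with the paper's own usage in \cref{proof:time-agnostic}.
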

See~\cref{s:proofs} for the proof.
We do not know yet whether (or when) strong and weak learnability are equivalent for prospective learning.

\section{Prospective Empirical Risk Minimization (ERM)}
\label{sec:perm}

In PAC learning, the hypothesis returned by ERM using the training data can predict arbitrarily well (approximate the Bayes risk arbitrarily well with arbitrarily high probability), with a sufficiently large sample size. This statement holds if (a) there exists a hypothesis in the hypothesis class whose risk matches the Bayes risk asymptotically, and (b) if risk on training data converges to that on the test data sufficiently quickly and uniformly over the hypothesis class~\cite{blumer1989learnability,Alon1997-js}. \cref{thm:time-aware} is an analogous result for prospective learning.

\begin{theorem}[\textbf{Prospective ERM is a strong prospective learner}]
\label{thm:time-aware}
Consider a finite family of stochastic processes $\mathcal{Z}$. If we have (a) consistency, i.e., there exists an increasing sequence of hypothesis classes $\HH_1 \subseteq \HH_2 \subseteq \dots$ with each $\HH_t \subseteq (\YY^\XX)^\naturals$ such that $\forall Z\in\mathcal{Z}$,
\beq{
    \lim_{t\to \infty}  \E \sbr{\inf_{h \in \HH_t} R_t(h) - R^*_t} = 0,
    \label{eq:consistency}
}
where $h \in \HH_t$ is a random variable in $\sigma(Z_{\leq t})$, and (b) uniform concentration of the limsup, i.e., $\forall Z\in\mathcal{Z}$,
\beq{
    \E \sbr{\max_{h \in \HH_t} \abs{\bar \ell_t(h,Z) - \max_{u_t\leq m \leq t} \f{1}{m} \sum_{s=1}^m \ell(s,h_s(x_s),y_s)}} \leq \g_t,
    \label{eq:concentration_limsup}
}
for some $\g_t \to 0$ and $u_t \to \infty$ with $u_t \leq t$ (all uniform over the family of stochastic processes),
then there exists a sequence $i_t$ that depends only on $\g_t$ such that a learner that returns
\beq{
    \hat h = \argmin_{h \in \HH_{i_t}} \max_{u_{i_t} \leq m \leq t} \f{1}{m} \sum_{s=1}^m \ell(s,h_s(x_s),y_s),
    \label{eq:prospective_erm}
}
is a strong prospective learner for this family. We define prospective ERM as the learner that implements~\cref{eq:prospective_erm} given train data $z_{\leq t}$.
\end{theorem}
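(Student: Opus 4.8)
The plan is to transpose the classical two‑part PAC argument---uniform convergence of the empirical objective to the population risk, plus an approximation (consistency) term---into the prospective setting, while keeping track of three features absent from PAC learning: the objective $\bar\ell_t$ is a time average (a $\limsup$) rather than a single expectation, the comparator $R^*_t$ is itself a random, $t$‑dependent quantity, and the class and window used by the learner (indexed by $i_t$) need not coincide with those in \cref{eq:consistency,eq:concentration_limsup}. A useful preliminary observation is that $\bar\ell_t(h,Z)$ does not actually depend on $t$: since $\ell\in[0,1]$, deleting finitely many leading terms changes a Ces\`aro average by $O(1/\tau)$, so $\bar\ell_t(h,Z)=\bar\ell_0(h,Z)=:\bar\ell(h,Z)=\limsup_{m\to\infty}\f{1}{m}\sum_{s=1}^m\ell(s,h_s(X_s),Y_s)$ for every finite $t$. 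Consequently $R_t(h)=\E\sbr{\bar\ell(h,Z)\mid Z_{\leq t}}$ is, for each fixed $h$, a bounded martingale in $t$; hence $t\mapsto\inf_{h\in\HH_n}R_t(h)$ (for $t\geq n$, using that $\HH_n$ is $\sigma(Z_{\leq n})$‑measurable) is a bounded supermartingale and $t\mapsto\E\sbr{\inf_{h\in\HH_n}R_t(h)}$ is non‑increasing; carrying a near‑minimizer of $R^*_t$ forward one time step likewise shows $t\mapsto\E\sbr{R^*_t}$ is non‑increasing, bounded, hence convergent.

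I would then fix $\e,\d>0$, let $\widehat{\mathcal L}_t(h):=\max_{u_{i_t}\leq m\leq t}\f{1}{m}\sum_{s=1}^m\ell(s,h_s(x_s),y_s)$ denote the learner's objective, pick a $\sigma(Z_{\leq t})$‑measurable near‑minimizer $h^\dagger$ of $R_t$ over $\HH_{i_t}$, and split
\[
R_t(\hat h)-R^*_t \;\leq\; 2\,\sup_{h\in\HH_{i_t}}\abs{R_t(h)-\widehat{\mathcal L}_t(h)} \;+\; \Big(\inf_{h\in\HH_{i_t}}R_t(h)-R^*_t\Big) + o(1),
\]
where the cross term $\widehat{\mathcal L}_t(\hat h)-\widehat{\mathcal L}_t(h^\dagger)\leq 0$ by definition of $\hat h$, and the factor $2$ absorbs the deviation at both $\hat h$ and $h^\dagger$. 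It then suffices to show each of the two remaining terms is below $\e/2$ with probability at least $1-\d$ for all $t$ beyond some $t'(\e,\d)$ that is uniform over $\mathcal Z$; that is precisely strong prospective learnability.

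For the deviation term I would start from \cref{eq:concentration_limsup}: writing $D_n:=\max_{h\in\HH_n}\abs{\bar\ell(h,Z)-\max_{u_n\leq m\leq n}\f{1}{m}\sum_{s=1}^m\ell(s,h_s(x_s),y_s)}$ we have $\E[D_n]\leq\g_n$, and conditioning on $Z_{\leq t}$ together with conditional Jensen gives $\abs{R_t(h)-\max_{u_{i_t}\leq m\leq i_t}\f{1}{m}\sum_{s=1}^m\ell(s,h_s(x_s),y_s)}\leq\E[D_{i_t}\mid Z_{\leq t}]$ for every $\sigma(Z_{\leq t})$‑measurable $h\in\HH_{i_t}$---including $\hat h$ and $h^\dagger$, via the substitution (``freezing'') property of conditional expectation---so Markov's inequality makes this supremum at most $\g_{i_t}/\d$ with probability $\geq 1-\d$. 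It remains to replace the window $[u_{i_t},i_t]$ by the learner's window $[u_{i_t},t]$: the lower side $\widehat{\mathcal L}_t(h)\geq\bar\ell(h,Z)-\g_{i_t}/\d$ is immediate, while for the upper side one bounds the extra partial averages $\f{1}{m}\sum_{s=1}^m\ell(s,h_s(x_s),y_s)$ with $i_t<m\leq t$ by invoking \cref{eq:concentration_limsup} at time $m$ (legitimate since $\HH_{i_t}\subseteq\HH_m$ there)---this is the step that dictates how fast $i_t$ may grow as a function of $\g$. For the approximation term, the supermartingale/monotonicity facts above give $\E\sbr{\inf_{h\in\HH_{i_t}}R_t(h)-R^*_t}\leq\E\sbr{\inf_{h\in\HH_{i_t}}R_{i_t}(h)-R^*_{i_t}}+\big(\E[R^*_{i_t}]-\E[R^*_t]\big)$; choosing $i_t\to\infty$ with $i_t\leq t$ (slowly, as a function of $\g_t$) sends the first summand to $0$ by \cref{eq:consistency} and the second to $0$ by convergence of $\E[R^*_\cdot]$, and Markov once more turns this into the required high‑probability bound. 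Finiteness of $\mathcal Z$ makes every rate above uniform over the family, which yields the uniform $t'(\e,\d)$.

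The step I expect to be the main obstacle is the uniform deviation bound, for two reasons: (i) reconciling the learner's window $[u_{i_t},t]$ and class $\HH_{i_t}$ with the windows and classes in \cref{eq:concentration_limsup}, i.e.\ showing that the partial averages falling outside $[u_{i_t},i_t]$ do not destroy the concentration---this is what pins down the admissible growth of $i_t$ in terms of $\g_t$; and (ii) passing rigorously from a bound on $\bar\ell(h,Z)$, which depends on the entire trajectory (in particular on $Z_{>t}$), to a bound on $R_t(h)=\E[\bar\ell(h,Z)\mid Z_{\leq t}]$ evaluated at the \emph{data‑dependent} hypothesis $\hat h$, which needs a measurable‑selection / regular‑conditional‑distribution argument (immediate when $\HH$ is countable). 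By contrast the off‑diagonal approximation bound is handled by the martingale monotonicity noted above, and converting in‑expectation bounds to high‑probability ones is just Markov's inequality together with the finiteness of $\mathcal Z$ for uniformity.
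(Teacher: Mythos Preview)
Your overall route---two-sided uniform deviation plus a separate approximation term---differs from the paper's, and the approximation side is handled cleanly: the supermartingale observation that $t\mapsto\inf_{h\in\HH_n}R_t(h)$ has non-increasing expectation for $t\geq n$, together with convergence of $\E[R_t^*]$, gives a correct and arguably more transparent treatment of the off-diagonal consistency error than the paper's Borel--Cantelli construction. The passage from $\bar\ell$ to $R_t=\E[\bar\ell\mid Z_{\leq t}]$ via conditional Jensen at the data-dependent $\hat h$ is also fine.

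The genuine gap is precisely the obstacle you label (i), and your proposed resolution does not close it. In your decomposition the problematic term is $\widehat{\mathcal L}_t(h^\dagger)-R_t(h^\dagger)$: the one-sided inequality $\widehat{\mathcal L}_t(h)\geq\max_{u_{i_t}\leq m\leq i_t}e_m(h)$ controls $R_t(\hat h)-\widehat{\mathcal L}_t(\hat h)$ via $\E[D_{i_t}\mid Z_{\leq t}]$, but for the other direction you need $\max_{u_{i_t}\leq m\leq t}e_m(h^\dagger)$ bounded \emph{above}. Invoking \cref{eq:concentration_limsup} at each intermediate $m$ yields $\widehat{\mathcal L}_t(h^\dagger)\leq\bar\ell(h^\dagger,Z)+\max_{i_t\leq m\leq t}D_m$, and controlling this max via Markov plus a union bound costs $\sum_{m=i_t}^{t}\g_m$; since the assumption is only $\g_t\to0$ (not summability) and strong learnability requires the bound for \emph{all} large $t$, the number of terms is unbounded and no admissible choice of $i_t$ rescues it. The paper avoids the two-sided bound entirely: it keeps only the one-sided concentration $\bar\ell(\hat h,Z)\leq\widehat{\mathcal L}_t(\hat h)+\sqrt{\g_{i_t}}$ (obtained a.s.\ eventually by extracting a subsequence with $\sum\sqrt{\g_{i_t}}<\infty$ and Borel--Cantelli), then trivially bounds $\widehat{\mathcal L}_t(h^{(t)})\leq\sup_{m\geq u_{i_t}}e_m(h^{(t)})$ and proves a separate key lemma that, for a carefully constructed comparator $h^{(t)}$, one has $\E\bigl[\sup_{m\geq u_{i_t}}e_m(h^{(t)})\mid Z_{\leq t}\bigr]-R_t^*\to0$ almost surely. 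In other words the ``extra'' partial averages beyond $i_t$ are absorbed on the \emph{comparator} side (via the full sup, not the $\limsup$), rather than on the concentration side; that lemma is the idea missing from your plan.
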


\cref{proof:thm:time-aware} provides a proof, it builds upon the work of~\citet{Hanneke21}.
The first condition, \cref{eq:consistency}, is analogous to the consistency condition in PAC learning. In simpler words, it states that the Bayes risk can be approximated well using the chosen sequence of hypothesis classes $\{\HH_t\}_{t=1}^\infty$. The second condition, \cref{eq:concentration_limsup}, is analogous to concentration of measure in PAC learning, it requires that the limsup in~\cref{eq:ell_bar} is close to an empirical estimate of the limsup (the second term inside the absolute value in~\cref{eq:concentration_limsup}).
At each time $t$, prospective ERM in~\cref{eq:prospective_erm} selects the best hypothesis $\hat h \in \HH_t$\footnote{Note that this hypothesis class has infinite sequences, $\HH_t \subseteq \rbr{\YY^\XX}^\naturals$.}  for future times $t'>t$, that minimizes an empirical estimate of the limsup using the training data $z_{\leq t}$.
Prospective ERM can exploit the difference between the latest datum in the training set with time $t$ and the time for which it makes predictions $t'$ by selecting specific sequences inside the hypothesis class $\HH_t$. For example, in~\cref{eg:case2} it can select sequences where alternating elements can be used to predict on data from even and odd times.

\begin{remark}[\textbf{How to implement prospective ERM?}]
\label{rem:implement}
An implementation of prospective ERM is therefore not much different than an implementation of standard ERM, except that there are two inputs: time $s$ and the datum $x_s$. Suppose we use a hypothesis class where each predictor is a neural network, this could be a multi-layer perceptron or a convolutional neural network. The training set $z_{\leq t}$ consists of inputs $x_s$ along with corresponding time instants $s$ and outputs $y_s$. To implement prospective ERM, we modify the network to take $(s, x_s)$ as input (using any encoding of time, we discuss one in~\cref{sec:experiments}) and train the network to predict the label $y_s$. In~\cref{eq:prospective_erm} we can set $u_{i_t} \equiv t$, doing so only changes the sample complexity. At inference time, this network is given the input $(t', x_{t'})$ to obtain the prediction $y_{t'}$. Note that if prospective ERM is implemented in this fashion, the learner need not explicitly calculate the infinite sequence of predictors.\footnote{Hereafter, when we write ERM in empirical studies, we will mean a learner that approximates ERM via stochastic gradient descent.}
\end{remark}

\begin{corollary}
There exist stochastic processes for which time-agnostic ERM is not a strong prospective learner, but prospective ERM is a strong learner.
\end{corollary}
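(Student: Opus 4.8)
The plan is to reuse the alternating Bernoulli process of \cref{eg:case2}, which is precisely the kind of process that witnesses \cref{prop:time-agnostic}, and to verify it against the hypotheses of \cref{thm:time-aware}. Concretely, take $\mathcal Z = \{Z\}$ to be the singleton family (a legitimate finite family) with $X_t = 1$ for all $t$, $Y_t \sim \mathrm{Bernoulli}(p)$ when $t$ is odd and $Y_t \sim \mathrm{Bernoulli}(1-p)$ when $t$ is even, for a fixed $p \ne 1/2$, under the time-invariant zero-one loss.

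First I would pin down the two quantities the corollary compares. Since the future labels $Y_{>t}$ are independent of $z_{\le t}$, no hypothesis can beat the per-step Bayes predictor at any future time, and a short interchange of expectation and $\limsup$ (legitimate because the loss lies in $[0,1]$) gives $R_t^* = \min(p,1-p)$ for every $t$. On the other hand a time-agnostic hypothesis uses a single bit $\hat y$ at all future times, so its future loss equals $\tfrac12\,\P(\hat y \ne Y_{\mathrm{odd}}) + \tfrac12\,\P(\hat y \ne Y_{\mathrm{even}}) = \tfrac12$ no matter which bit it is; hence whatever hypothesis $\hat h$ time-agnostic ERM returns on $z_{\le t}$ satisfies $R_t(\hat h) = \tfrac12$. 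Thus $R_t(\hat h) - R_t^* = \tfrac12 - \min(p,1-p) > 0$ is a fixed positive constant for all $t$, so for any $\e < \tfrac12 - \min(p,1-p)$ the event $\{R_t(\hat h) - R_t^* < \e\}$ has probability zero and time-agnostic ERM is not a strong prospective learner --- this is the half already recorded in \cref{s:proofs}.

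For the positive half I would instantiate \cref{thm:time-aware} with the constant sequence $\HH_t \equiv \HH$, where $\HH$ is the four-element class of period-two sequences $h = (a,b,a,b,\dots)$, $a,b \in \{0,1\}$ (which sits inside $(\YY^\XX)^\naturals$ since $\XX = \{1\}$). Condition (a) holds with zero gap at every $t$: the member with $a = \mathbf 1(p>1/2)$, $b = \mathbf 1(p<1/2)$ attains $R_t(h) = \min(p,1-p) = R_t^*$. For condition (b), fix $h = (a,b,\dots) \in \HH$; along the odd- and even-indexed subsequences the per-step losses $\ell(s,h_s(x_s),y_s)$ are i.i.d.\ Bernoulli, so the strong law forces both $\bar\ell_t(h,Z)$ and every partial average $\tfrac1m\sum_{s=1}^m \ell(s,h_s(x_s),y_s)$ to converge almost surely to the common deterministic limit $L_h = \tfrac12\,\P(a \ne Y_{\mathrm{odd}}) + \tfrac12\,\P(b \ne Y_{\mathrm{even}})$. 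Picking any $u_t \to \infty$ with $u_t \le t$, the maximum of the partial averages over $m \in [u_t,t]$ also tends to $L_h$ almost surely, and, the loss being bounded, dominated convergence upgrades this to $\E\big|\bar\ell_t(h,Z) - \max_{u_t \le m \le t}\tfrac1m\sum_{s=1}^m \ell(s,h_s(x_s),y_s)\big| \to 0$; a maximum over the four elements of $\HH$ then yields a single $\g_t \to 0$, uniform over the (singleton) family. Both hypotheses of \cref{thm:time-aware} hold, so prospective ERM is a strong prospective learner for $Z$, and together with the first half this proves the corollary.

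The step I expect to be the main obstacle is condition (b): one must check that the \emph{empirical maximum} over the growing window $[u_t,t]$ --- not merely a single partial average --- concentrates on the limsup, and that the rate $\g_t$ can be taken uniform over the family (trivial here, but the argument should be written so it survives enlarging $\mathcal Z$ to any finite set of such processes). Everything else is elementary: the value of $R_t^*$, the risk of a constant predictor on an alternating source, and the strong law applied separately along even and odd indices.
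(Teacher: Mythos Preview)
Your proposal is correct and matches the paper's (implicit) argument: the paper does not spell out a proof of this corollary, but it is meant to follow by pairing the first construction in the proof of \cref{prop:time-agnostic} (the period-two process on which any time-agnostic predictor has risk exactly $\tfrac12$) with the periodic-process analysis in \cref{s:periodic}, which shows that the finite class of period-$T$ sequences satisfies the hypotheses of \cref{thm:time-aware}. Your choice of the \cref{eg:case2} process with $X_t\equiv 1$ is an equivalent, slightly simpler witness, and your direct SLLN verification of condition~(b) for the four-element class $\HH$ is fine; the paper would instead invoke \cref{thm:countable_hypo} (any countable, hence finite, $\HH$ satisfying \cref{eq:countable_hypo} automatically furnishes the $u_t,\g_t$ needed in \cref{thm:time-aware}), which spares you the window argument you flagged as the main obstacle.
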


\begin{remark}[\textbf{Why we need an increasing sequence of hypothesis classes $\HH_1 \subseteq \HH_2 \dots$}]
We could have chosen $\HH_t = \HH_{t'}$ for all $t,t' \in \naturals$ to set up~\cref{thm:time-aware}. But since the learner does not have a lot of data at early times, it should use a small hypothesis class. Just like PAC learning, the sequence $(\g_t)_{t \in \naturals}$ in~\cref{eq:concentration_limsup} determines the convergence rate of a prospective learner. Therefore, using a monotonically increasing sequence of hypothesis classes is useful to ensure a good sample complexity.
\end{remark}

\begin{theorem}
\label{thm:countable_hypo}
Consider a finite family of stochastic processes $\mathcal{Z}$. If there exists a countable hypothesis class $\HH$ such that
\beq{
    \lim_{t\to\infty} \E \sbr{\inf_{h \in \HH} R_t(h) - R_t^*} = 0,
    \label{eq:countable_hypo}
}
for any stochastic process $Z\in\mathcal{Z}$, where $h \in \HH$ is a random variable in $\s(Z_{\leq t})$,
then there exist $\HH_t$, $u_t$, and $\g_t$ such that the two conditions of~\cref{thm:time-aware} are satisfied for this family.
\end{theorem}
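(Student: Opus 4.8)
The plan is to manufacture the triple $(\HH_t, u_t, \g_t)$ directly from an enumeration of $\HH$. The starting point is a simplification of the future loss: for a fixed (deterministic) sequence $h$ and $b_m(h) := \f{1}{m}\sum_{s=1}^m \ell(s,h_s(X_s),Y_s)\in[0,1]$, deleting the finite prefix $s\le t$ changes neither the Ces\`aro averages nor their limit superior, so $\bar\ell_t(h,Z) = \limsup_{m\to\infty} b_m(h) =: \bar L(h)$ for \emph{every} $t$. Consequently $R_t(h) = \E[\bar L(h)\mid\s(Z_{\le t})]$ is, for each fixed $h\in\HH$, a $[0,1]$-valued Doob martingale. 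I then enumerate $\HH=\{h^{(1)},h^{(2)},\dots\}$ and take $\HH_t = \{h^{(1)},\dots,h^{(n_t)}\}$ for a strictly increasing $n_t\to\infty$ to be pinned down last.

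First I would show that \cref{eq:consistency} holds for \emph{any} such $n_t$. Writing $D_{n,t}:=\E[\min_{i\le n}R_t(h^{(i)}) - \inf_i R_t(h^{(i)})]\ge 0$, we have $\E[\inf_{h\in\HH_t}R_t(h) - R^*_t] = D_{n_t,t} + \E[\inf_i R_t(h^{(i)}) - R^*_t]$, where the second term tends to $0$ by \cref{eq:countable_hypo}. Since $\min_{i\le n}R_t(h^{(i)})$ and $\inf_i R_t(h^{(i)})$ are nonnegative supermartingales in $t$, the array $a_{n,t}:=\E[\min_{i\le n}R_t(h^{(i)})]$ is nonincreasing in both indices, so its two iterated limits agree; this forces $\lim_t D_{n,t} = L_n - L_\infty$ with $L_n\downarrow L_\infty$ as $n\to\infty$. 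Given $\e>0$, fixing $n$ with $L_n - L_\infty\le\e$ and using $D_{n_t,t}\le D_{n,t}$ once $n_t\ge n$ yields $\limsup_t D_{n_t,t}\le\e$; letting $\e\downarrow 0$ gives \cref{eq:consistency} (the finiteness of $\mathcal{Z}$ is used here to pass to a maximum over the finitely many processes). The point of this step is that \emph{consistency imposes no lower bound on the growth rate of $\HH_t$}, which is exactly what the concentration bound needs.

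Next, for \cref{eq:concentration_limsup} I would split the error of a single $h^{(i)}$, namely $|\bar L(h^{(i)}) - \max_{u_t\le m\le t}b_m(h^{(i)})|$, into an upper part bounded by $P_i(u_t):=\E[\sup_{m\ge u_t}b_m(h^{(i)})-\bar L(h^{(i)})]$, which decreases to $0$ as $u_t\to\infty$ by the definition of $\limsup$ and dominated convergence; and a lower part bounded by $\d + \P[\tau_i(u_t,\d)>t]$, where $\tau_i(M,\d):=\min\{m\ge M: b_m(h^{(i)})\ge\bar L(h^{(i)})-\d\}$ is finite on every sample path, so that tail probability tends to $0$ as $t\to\infty$ for each fixed $M,\d$. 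Then I would assemble things by a milestone construction: choose $\e_r\downarrow 0$; for each $r$ choose a radius $v_r$ with $P_i(v_r)\le 2^{-i}\e_r$ for all $i\le r$ and all $Z\in\mathcal{Z}$; set $\d_r:=2^{-r}\e_r$; and choose times $s_1<s_2<\cdots$ with $s_r\ge v_r$ large enough that $\P[\tau_i(v_r,\d_r)>t]\le 2^{-i}\e_r$ for all $i\le r$, all $t\ge s_r$, and all $Z\in\mathcal{Z}$. On the block $t\in[s_r,s_{r+1})$, put $\HH_t=\{h^{(1)},\dots,h^{(r)}\}$ (so $n_t=r\to\infty$ and \cref{eq:consistency} holds by the previous step), $u_t=v_r$ (so $u_t\le s_r\le t$ and $u_t\to\infty$), and $\g_t=3\e_r\to 0$; a union bound over $i\le r$ then gives $\E[\max_{h\in\HH_t}|\bar\ell_t(h,Z)-\max_{u_t\le m\le t}b_m(h)|]\le\sum_{i\le r}(P_i(v_r)+\d_r+\P[\tau_i(v_r,\d_r)>t])\le 3\e_r=\g_t$, which is \cref{eq:concentration_limsup}.

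The hard part is the two-sided tension in the choice of $u_t$: it must grow fast enough that the $\limsup$-tail $\sup_{m\ge u_t}b_m(h)$ has effectively stabilized, but slowly enough relative to $t$ that the finite window $[u_t,t]$ still reaches a time at which $b_m(h)$ comes within $\d$ of $\bar L(h)$, and both requirements must hold \emph{uniformly} over the growing class $\HH_t$ and over $\mathcal{Z}$. The obvious worry is that \cref{eq:consistency} might force $\HH_t$ to grow at an uncontrollable rate, clashing with the union bound; the martingale identity $R_t(h)=\E[\bar L(h)\mid\s(Z_{\le t})]$ and the interchange of iterated limits are precisely what remove this worry, since they show an arbitrarily slow growth of $\HH_t$ already suffices for consistency and leaves the growth rate free for the concentration argument to choose.
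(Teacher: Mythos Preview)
Your argument is correct, and it takes a genuinely different route from the paper's.

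The paper follows the pathwise construction of~\citet{hanneke2021learning}: for a fixed finite class it defines sample-path quantities $t_u^h(\mathbf{z})$, $u_t^h(\mathbf{z})$, $u_t^{\HH}(\mathbf{z})$, lifts these to probabilities via $u_t^{\HH}(\delta,Z)$, takes a minimum over the finite family $\mathcal{Z}$, and then invokes Borel--Cantelli to get~\cref{eq:finite approx}. It then layers a second milestone scheme (via $j_t,\,t_i,\,i_t$) to pass from a single finite class to the growing sequence $\HH_t$. The paper does not explicitly verify~\cref{eq:consistency}; it treats it as immediate from $\cup_t \HH_t=\HH$.

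Your approach rests on a structural observation the paper never isolates: since deleting a finite prefix does not change a Ces\`aro $\limsup$, $\bar\ell_t(h,Z)=\bar L(h)$ is the \emph{same} random variable for every $t$, so $R_t(h)=\E[\bar L(h)\mid \s(Z_{\le t})]$ is a bounded Doob martingale and $\min_{i\le n}R_t(h^{(i)})$ is a supermartingale. This is what makes $a_{n,t}$ monotone in $t$ and drives your double-limit argument for~\cref{eq:consistency}; in particular, it shows that consistency holds for \emph{any} exhaustion $\HH_t\uparrow\HH$, which is exactly the ``no lower bound on the growth rate'' fact you then exploit for concentration. For~\cref{eq:concentration_limsup} you replace the Hanneke machinery by the explicit two-sided error $P_i(u_t)$ and $\P[\tau_i(u_t,\delta)>t]$, together with one milestone layer. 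This is more elementary and more transparent than the paper's proof; the paper's route, being closer to~\cite{hanneke2021learning}, has the advantage that readers familiar with that work can port quantitative refinements directly.

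Two cosmetic points worth tightening when you write it out: enforce $v_r\ge r$ (so that $u_t\to\infty$ is literally true, not merely arrangeable), and specify the triple on the initial segment $t<s_1$ (e.g.\ $\HH_t=\{h^{(1)}\}$, $u_t=1$, $\g_t=1$) so that $\HH_t$ is defined for all $t$ and increasing.
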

\cref{proof:thm:countable_hypo} provides a proof.
This theorem provides a concrete example for which the assumptions of~\cref{thm:time-aware} are satisfied.
In PAC learning, one first proves uniform convergence for a finite hypothesis class.
This can then be used to, say, calculate the sample complexity of ERM, or extended to infinite hypothesis classes using constructions such as VC-dimension and covering numbers~\cite{vapnik1998statistical}. The above theorem should be understood in the same spirit. It is a step towards characterizing the sample complexity of prospective learning.

\cref{s:app:discounted} proves an analogue of~\cref{thm:time-aware} for prospective learning problems with discounted losses. \cref{s:periodic} provides illustrative examples of prospective ERM for periodic processes and hidden Markov processes. For periodic processes, we can also calculate the sample complexity.


\section{Experimental Validation}
\label{sec:experiments}

This section demonstrates that we can implement prospective ERM on prospective learning problems constructed on synthetic data, MNIST and CIFAR-10. In practice, prospective ERM may approximately achieve the guarantees of~\cref{thm:time-aware}.
We will focus on the distribution changing, independently or not (\cref{eg:case2,eg:case3}). Recall that~\cref{eg:case1} is the same as the IID setting used in standard supervised learning problems. \cref{eg:case4} is more involved (see an example in~\cref{s:app:scenario4}) and, therefore, we leave more elaborate experiments for future work.
We  discuss experiments that check whether large language models can do prospective learning in~\cref{s:llm}.

\paragraph{Learners and hypothesis classes.}
Task-agnostic online continual learning methods are the closest algorithms in the literature that can address situations when data evolves over time.
We use the following three methods.

\begin{enumerate}[label=(\roman*),nosep]
    \item \textbf{Follow-the-Leader} minimizes the empirical risk calculated on all past data and is a no-regret algorithm ~\citep{cesa2006prediction}.
    We note that while this is a popular online learning algorithm, we do not implement the algorithm in an online fashion.
    \item \textbf{Online SGD} fine-tunes the network using new data in an online fashion. At every time-step, weights of the network are updated once using the last eight samples.
    \item \textbf{Bayesian gradient descent}~\citep{zeno2018task} is an online continual learning algorithm designed to address situations where the identity of the task is not known during both training and testing, i.e., it implements continual learning without knowledge of task boundaries.
\end{enumerate}
These three methods are not explicitly designed for prospective learning but they are designed to address the changing data distribution $t$.%
\footnote{There are many algorithms in the existing literature that the reader may think of as reasonable baselines.
We have chosen a representative and relevant set here, rather than an exhaustive one. For example, online meta-learning approaches are close to online-SGD; since the learner fine-tunes on the most recent  data. Algorithms in the literature on time-series (i) focus on predicting future data, say, $Y_{t'}$ given past data $y_{\leq t}$ without taking covariates $X_{t'}$ or some exogenous variables $X_{\leq t}$ into account, (ii) can usually only make predictions for a pre-specified future context window~\cite{Lim2021-yx}, and (iii) work for low-dimensional signals (unlike images).
}
We calculate the prospective risk of the predictor returned by these methods; note that they do not output a time-varying predictor and consequently, these methods output a time-agnostic hypothesis. \textcolor{black}{As a result, when we evaluate the prospective risk of these methods, we use the same hypothesis for all future time}.
For all three methods, we use a multi-layer perceptron (MLP) for synthetic data and MNIST, and a convolutional neural network (CNN) for CIFAR-10.

For \textbf{prospective ERM} the sequence of predictors is built by incorporating time as an additional input to an MLP or CNN as follows.
For frequencies $\w_i = \pi/i$ for $i = 1, \dots , d/2$, we obtain a $d$-dimensional embedding of time $t$ as
\(
    \varphi(t) = (\sin(\w_1 t), \dots, \sin(\w_{\scriptscriptstyle d/2} t), \cos(\w_1 t), \dots,  \cos(\w_{\scriptscriptstyle d/2} t)).
\)
This is similar to the position encoding in~\citet{vaswani2017attention}. A predictor $h_t(\cdot)$ uses a neural network that takes as input, an embedding of time $\varphi(s)$, and the input $x_s$ to predict the output $y_s$ for any time $s \in \naturals$. Using such an embedding of time is useful in prospective learning because, then, one need not explicitly maintain the infinite sequence of predictors $h \equiv (h_1,\dots,)$.

\paragraph{Training setup.}
We use the zero-one error $\mathbf{1}\{\hat y \neq y\}$ to calculate prospective risk for all problems; all learners are trained using a standard surrogate of this objective, the cross-entropy loss.
For all experiments, for each time $t$, we calculate the prospective risk $R_t(h)$ in~\cref{eq:prospective_risk} of the hypothesis created by these learners for a particular realization of the stochastic process $z_{\leq t}$.
For each prospective learning problem, we generate a sequence of 50,000 samples. Learners are trained on data from the first $t$ time steps ($z_{\leq t}$) and prospective risk is computed using samples from the remaining time steps.
Except for online SGD and Bayesian gradient descent, learners corresponding to different times are trained completely independently.
See~\cref{s:setup} for more details.

\begin{remark}[\textbf{Why we do not use existing benchmark continual learning scenarios}]
The tasks constructed below resemble continual learning benchmark scenarios such as Split-MNIST or Split-CIFAR10~\citep{zenke2017continual} where data from different distributions are shown sequentially to the learner.
However, there are three major differences. First, in these existing benchmark scenarios, data distributions do not evolve in a predictable fashion, and prospective learning would not be meaningful. Second, existing scenarios consider a fixed time horizon. We are keen on calculating the prospective risk for much longer horizons whereby the differences between different learners are easier to discern; our experiments go for as large as 30,000 time steps. Third, our tasks have the property that the Bayes optimal predictor changes over time.
\end{remark}

\subsection{Prospective learners for independent but not identically distributed data (\cref{eg:case2})}
\label{ss:scenario2_expt}

We create tasks using synthetic data, MNIST and CIFAR-10 datasets to design prospective learning problems when data are independent but not identically distributed across time (\cref{eg:case2}).

\paragraph{Dataset and Tasks.}
For the synthetic data, we consider two binary classification problems (``tasks'') where the input is one-dimensional.
Inputs for both tasks are drawn from a uniform distribution on the set $[-2, -1] \cup [1, 2]$. Ground-truth labels correspond to the sign of the input for Task 1, and the negative of the sign of the input for Task 2.
For MNIST and CIFAR-10 we consider 4 tasks corresponding to data from classes 1-5, 4-7, 6-9 and 8-10 in the original dataset, i.e., the first task considers classes 1-5 labelled 1-5 respectively, the second task considers classes 4-7 labelled 1-4, the third task considers classes 6-9 labeled 1-4 and the last task considers labels 8-10 labelled 1-3. In other words, images from class 1 in task 1, class 4 from task 2 and class 6 from task 3 are all assigned the label 1.
For the prospective learning problem based on synthetic data, the task switches every 20 time steps. For MNIST and CIFAR-10, the data distribution cycles through the 4 tasks, and the distribution of data changes every 10 time-steps. For more details, see~\cref{s:setup}.

\begin{figure}[!htbp]
    \centering
    \includegraphics[width=\linewidth]{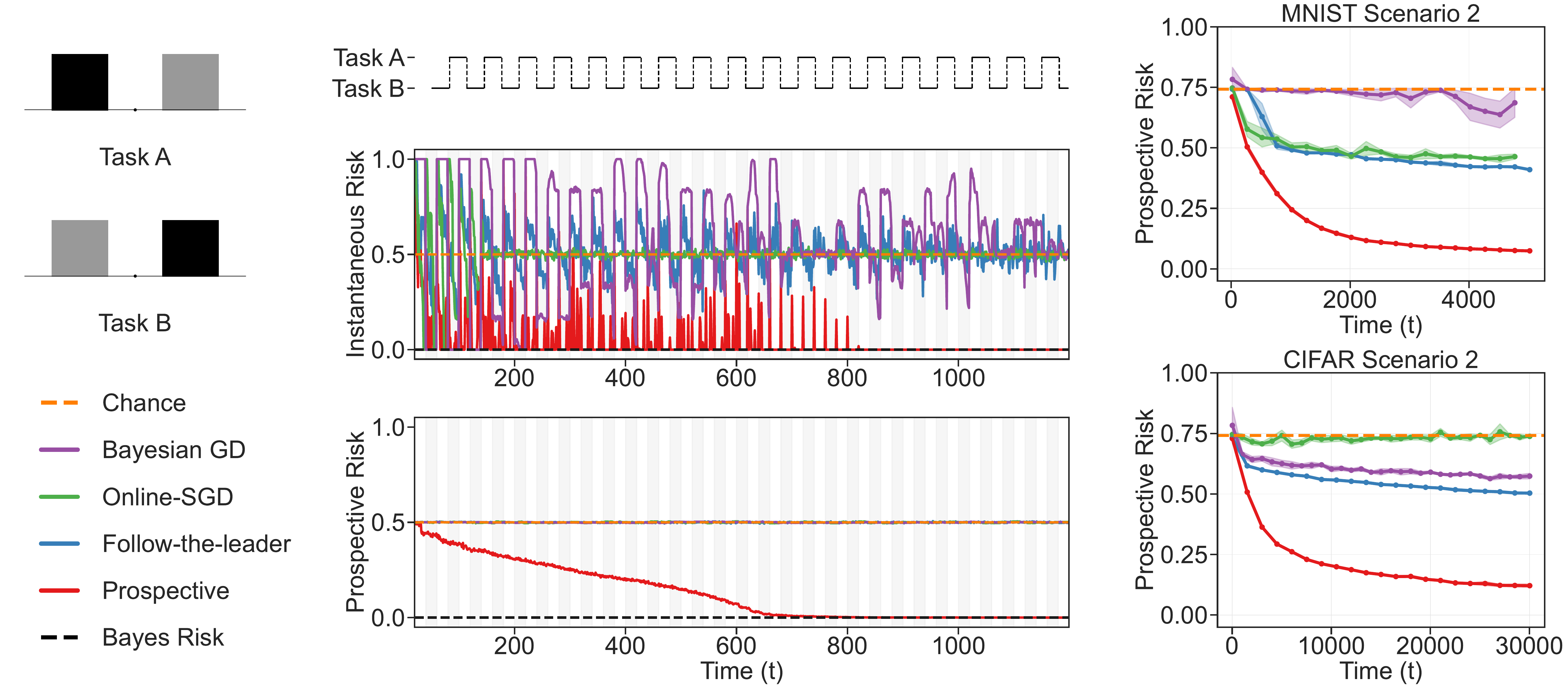}
    \caption{\textbf{Prospective ERM can achieve good instantaneous and prospective risk in~\cref{eg:case2}.}
    \textbf{Left:} Instantaneous and prospective risks for problems constructed using synthetic data (see text) across 5 random seeds (which govern the sequence of samples and the weight initializations of neural networks).
    Instantaneous risk spikes when the task switches for many online learning baseline algorithms.
    In contrast, prospective ERM has minimal spikes at later times and both instantaneous and prospective risks eventually converge to zero.
    \textbf{Right: } Prospective risk for different baseline algorithms and prospective ERM for tasks constructed using MNIST and CIFAR-10 for~\cref{eg:case2}.
    In all three cases, the risk of prospective ERM approaches Bayes risk while online learning baselines considered here do not achieve a low prospective risk.
    For comparison, the chance prospective risk is 0.5 for synthetic data and 0.742 for MNIST and CIFAR-10 tasks.
    }
    \label{fig:net_scenario2}
\end{figure}

\cref{fig:net_scenario2} shows that \textbf{prospective ERM can learn problems when data are independent but not identically distributed (\cref{eg:case2})}. For prospective learning problems constructed from synthetic data, the risk of prospective ERM converges to prospective Bayes risk over time. For the  MNIST and CIFAR-10 prospective problems, the prospective learning risk drops precipitously.  
In contrast, online learning baselines discussed above achieve a far worse prospective risk.
Observe that Follow-the-Leader (blue) performs  as well, or better, as online SGD and Bayesian GD. This is not surprising, while ERM models corresponding to each time $t$ were trained independently, networks corresponding to online SGD and Bayesian GD were training in an online fashion; in practice it is often difficult to tune online learning methods effectively~\citep{li2019rethinking}.%
\footnote{For CNNs on CIFAR-10, if one concatenates the time embedding directly to the input images as opposed to concatenating to a layer before softmax, like it is done here, the prospective risk in~\cref{fig:net_scenario2} (right) is much higher (worse by almost 0.2. See \cref{fig:net_cifar10}). The implementation details of time embedding do matter when implementing prospective learners in practice, even if~\cref{thm:time-aware} is true in general.}

\subsection{Prospective learners when data are neither independent nor identically distributed (\cref{eg:case3})}
\label{ss:scenario3_markov4}

\paragraph{Dataset and Tasks.}
\begin{figure}[!htbp]
\vspace*{-2ex}
    \centering
    \includegraphics[width=0.39\linewidth]{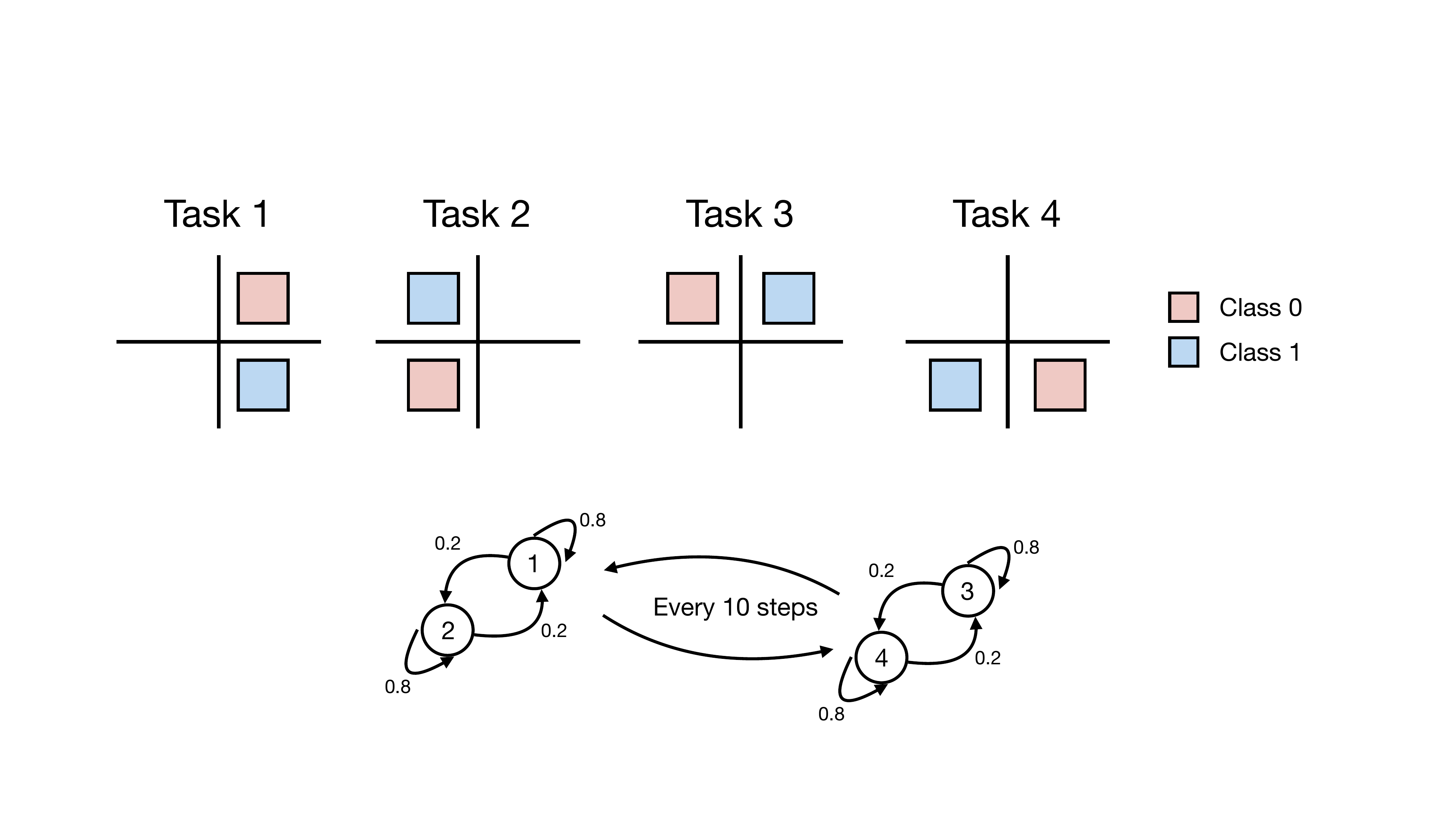}
    \includegraphics[width=0.59\linewidth]{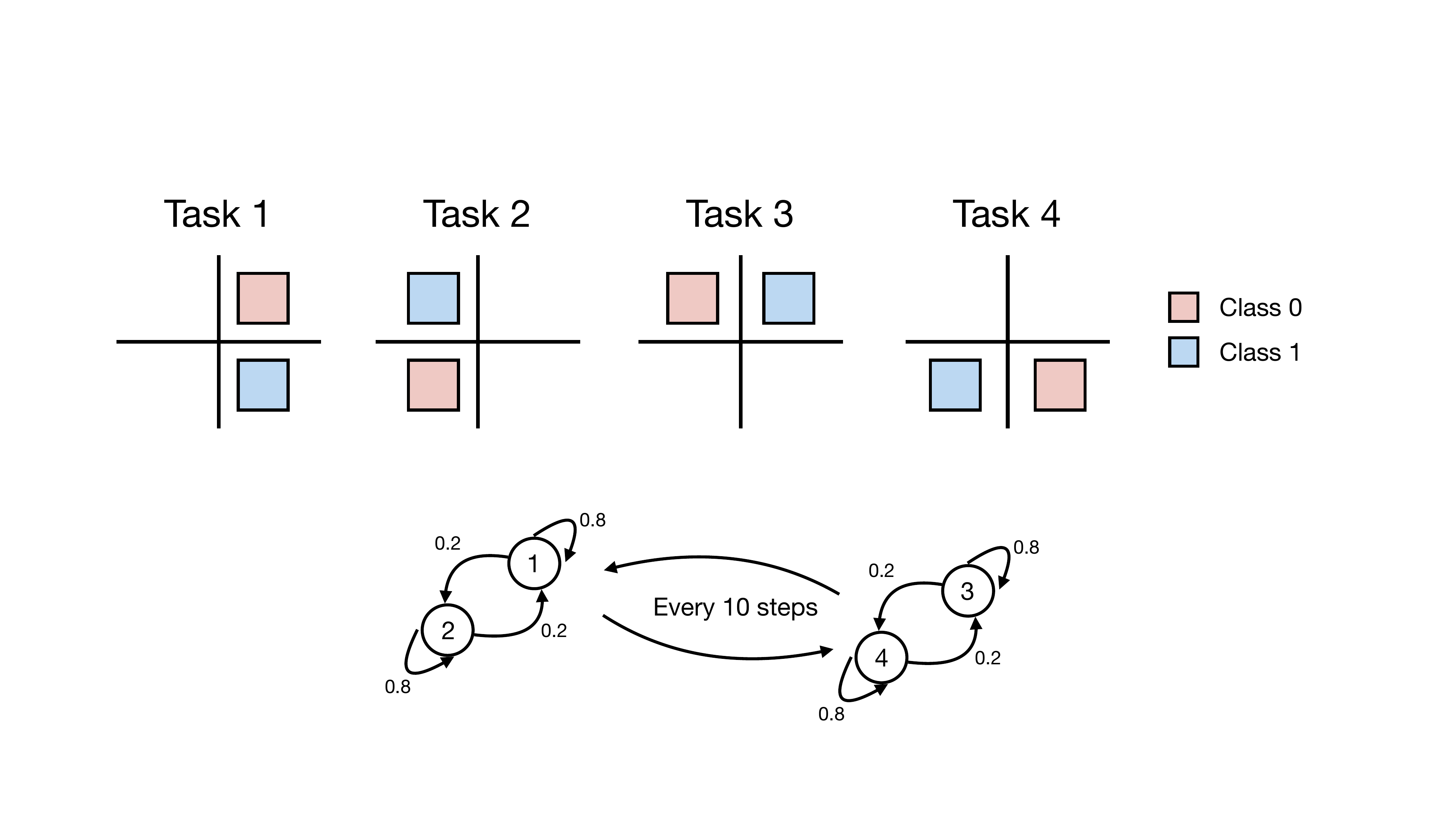}
    \caption{\textbf{Left:}
    For MNIST and CIFAR-10, we consider 4 tasks corresponding to the classes 1-5, 4-7, 6-9 and 8-10.
    Using these tasks, we construct Scenario 3 problems corresponding to a stochastic process which is a hierarchical hidden Markov model.
    After every 10 time-steps, a different Markov chain governs transitions among tasks (one Markov chain for tasks 1 and 2, and another for tasks 3 and 4).
    This ensures that the stochastic process does not have a stationary distribution.
    \textbf{Right:} For synthetic data, the 4 tasks are created using  two-dimensional input data as shown pictorially above. The four parts of the input domain are $\{(x_1,x_2): 1\leq x_1,x_2 \leq 2 \}$, $\{(x_1,x_2): 1 \leq x_1 \leq 2, \text{ and} \ -2\leq x_2 \leq -1 \}$, $\{(x_1,x_2): -2 \leq x_1, x_2 \leq -1 \}$ and $\{(x_1,x_2): -2 \leq x_1 \leq -1 \ \text{and} \ 1 \leq x_2 \leq 2 \}$. Colors indicate classes.
    The hierarchical hidden Markov model for transitions among the tasks is identical to the MNSIT and CIFAR-10 setting shown on the left.
    }
    \label{fig:scenario3_syndata}
\end{figure}

\begin{figure}[!h]
    \centering
    \includegraphics[width=0.325\linewidth]{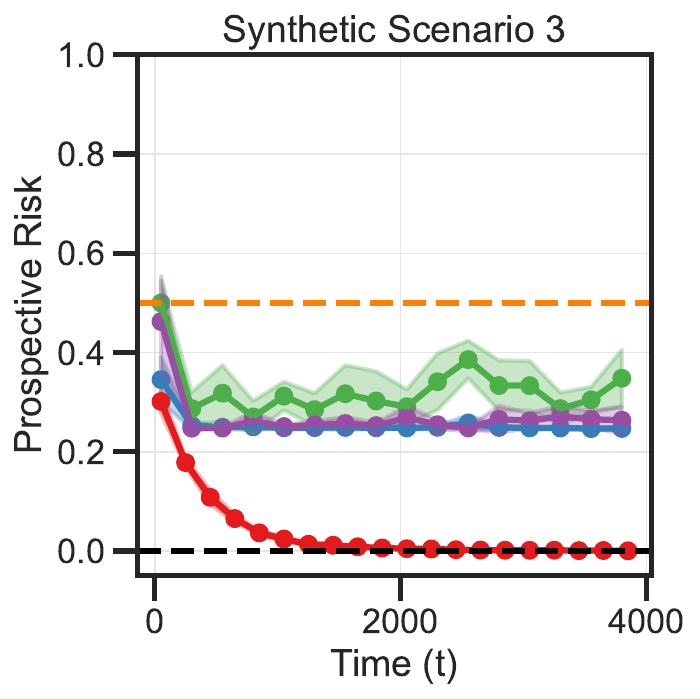}
    \includegraphics[width=0.325\linewidth]{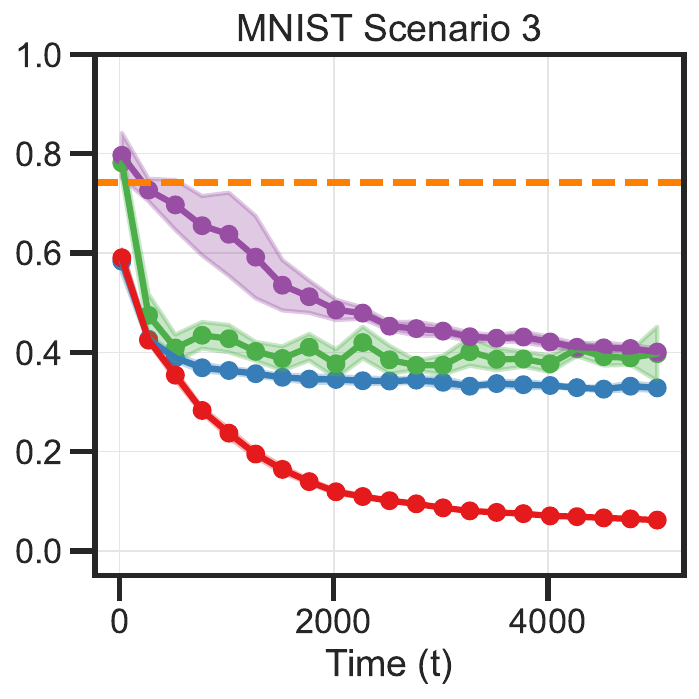}
    \includegraphics[width=0.325\linewidth]{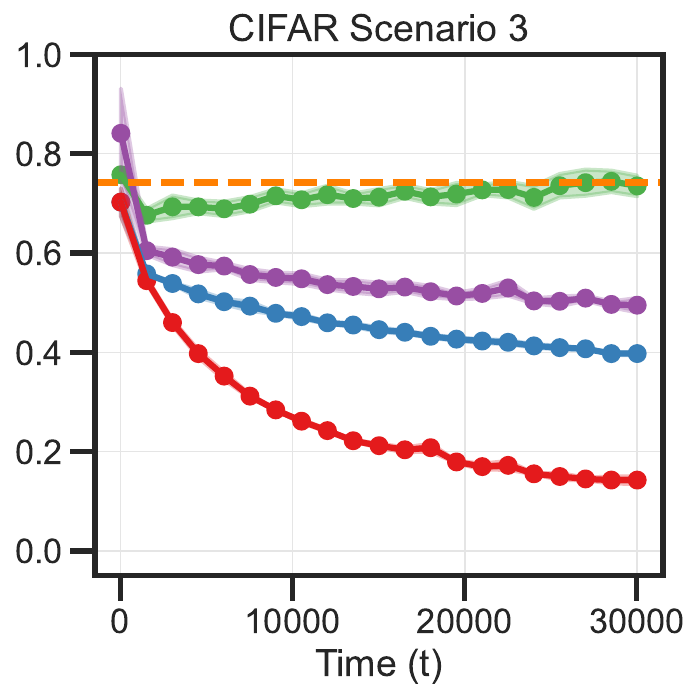}
    \includegraphics[width=0.9\linewidth]{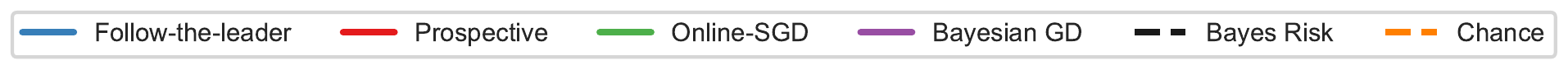}
    \caption{\textbf{Prospective ERM can achieve good prospective risk in~\cref{eg:case3}.}
    Prospective risk across 5 random seeds (which govern the sequence of samples and the weight initializations of neural networks). In all three cases, the risk of prospective ERM approaches Bayes risk while a number of baseline algorithms do not achieve a low prospective risk.
    Stochastic processes in these problems corresponding to Scenario 3 do not have an invariant distribution. This is why a time-agnostic hypothesis (ERM) that is constructed by the baseline algorithms does not achieve a good prospective risk.}
    \label{fig:net_scenario3}
\end{figure}
For synthetic data, we construct 4 binary classification problems with two-dimensional input data (see~\cref{fig:scenario3_syndata} and caption for details).
For CIFAR-10 and MNIST, we consider four tasks corresponding to the classes 1-5, 4-7, 6-9 and 8-10.
Using these tasks, we construct problems where the data distribution is governed by a stochastic process which is a hierarchical hidden Markov model (\cref{eg:case3}).
After every 10 time-steps, a different Markov chain governs transitions among tasks (one Markov chain for tasks 1 and 2, and another for tasks 3 and 4, as shown in~\cref{fig:scenario3_syndata}).
These choices ensure that the stochastic process does not have a stationary distribution.%
\footnote{As we discussed in~\cref{eg:case3}, prospective Bayes risk can be trivial in situations when the stochastic process has a stationary distribution.}


%
As~\cref{fig:net_scenario3} shows, \textbf{prospective ERM can prospectively learn problems when data is both independent and not identically distributed (\cref{eg:case3}}.
Stochastic processes in these problems corresponding to~\cref{eg:case3} do not have a stationary distribution. This is why a time-agnostic hypothesis (Follow-the-Leader) does not achieve a good prospective risk, unlike prospective ERM. \cref{s:app:additional_case3} discusses additional experiments for~\cref{eg:case3} for different kinds of Markov chains.

\section{Discussion}
\label{s:discussion}

Prospective learning, as we see it, is a paradigm of learning that characterizes many real-world scenarios which are currently modeled using much stronger (and less accurate) assumptions.
These simplifying assumptions have certainly enabled progress in machine learning. But systems deployed built upon these approaches have proven to be extremely fragile in certain real-world settings.
Today's machine learning-based systems fail to track changes in the data.
They certainly do not model how biological organisms learn robustly and effectively over time.
We believe characterizing which kinds of stochastic processes are prospectively learnable under which kinds of time-sensitive loss functions will be an important next theoretical step. Developing algorithms, from the perspective of prospective learning, which have theoretical guarantees in practice, will be another next step. Finally, building algorithms that scale and can therefore be deployed in real-world systems, will be important to demonstrate the utility of this approach. The precise real-world applications in which prospective learning based methods will outperform PAC learning, remains to be seen.

\clearpage
\setlength{\bibsep}{2ex}
\bibliographystyle{unsrtnat}
\bibliography{references,pratik}

\clearpage
\begin{appendix}
\section*{Acknowledgement}
This work was supported by funds provided by the National Science Foundation (IIS-2145164, CCF-2212519), the Office of Naval Research (N00014-22-1-2255), and by the National Science Foundation, DoD OUSD (R\&E) under Cooperative Agreement PHY-2229929 (The NSF AI Institute for Artificial and Natural Intelligence), NSF AI Institute
Planning award (\#2020312), NSF-Simons Research Collaborations on the Mathematical and Scientific Foundations of
Deep Learning (MoDL) and THEORINET (\#2031985). RR was funded by a fellowship from AWS AI to Penn Engineering's ASSET Center for Trustworthy AI. ADS was supported by a fellowship from the Mathematical Institute for Data Science (MINDS) at JHU. We are grateful to all those who provided helpful feedback on earlier drafts of this work, including Marlos Machado.


\renewcommand\thefigure{A.\arabic{figure}}
\renewcommand\thetable{A.\arabic{table}}
\setcounter{figure}{0}
\setcounter{table}{0}
\renewcommand{\figurename}{Figure}
\renewcommand{\tablename}{Table}


\section{Isn't this just\dots}
\label{s:isnt}

When we describe prospective learning to people the first time, they often wonder how it is different---both conceptually and formally---from other previously established learning frameworks. In fact, for many of them, the English language descriptions are seemingly identical to those which describe prospective learning.  However, the English language is often imprecise and this has created a lot of confusion among both theoreticians and practitioners as to the precise differences, potential benefits and pitfalls, between different learning frameworks. Here, we provide detailed formal distinctions between prospective learning and other related learning frameworks.  \Cref{tab:framework_comparison} summarizes the key distinctions between several machine learning frameworks, with further details provided below. The key difference between prospective learning and all other learning frameworks mentioned below is that in prospective learning, the hypothesis can make an inference (or take an action) arbitrarily far in the future.  Certain versions of forecasting also have that property (probabilistic forecasting~\cite{Gneiting2014-ru}), but forecasting has several other distinctions. 

\begin{table}[h!]
\renewcommand{\arraystretch}{1.75}
    \centering
     \caption{\textbf{Comparison of different machine learning frameworks}
     in terms of the distributional assumptions on the data.
     Task IID indicates that data within a task are IID, and that tasks are IID from some meta-distribution.
    Loss characterizes whether the assumed loss function is instantaneous or time-varying.  Optimal hypothesis indicates the total possible number of different optimal hypotheses (assuming each hypothesis has a unique risk).  Data availability indicates whether the data are available all at once (in batch), or after each task arrives (task sequential), or one data sample at a time (sequential). The answers are given for typical settings, further details are available in the paragraphs below.}
    \label{tab:framework_comparison}
    \resizebox{\linewidth}{!}{
    \begin{tabular}{c c c c c}
    \toprule
        \textbf{Framework}  & \textbf{Data Distribution} & \textbf{Loss} &  \textbf{\# Optimal  hypotheses}  & \textbf{Data  Availability}\\
        \midrule
        PAC Learning & IID & Instantaneous & 1 & Batch \\
        Transfer Learning & Change Point & Instantaneous & 1-2 & Batch \\
        Meta  Learning & Task IID & Instantaneous & \# Tasks & Batch/Sequential \\
        Lifelong Learning  & Task IID & Instantaneous & \# Tasks & Task Sequential \\
        Online Learning & None & Instantaneous & 1 & Sequential \\
        Forecasting & Stochastic Process & Fixed horizon & \# Time steps & Batch/Sequential  \\
        Reinforcement Learning & Markov Decision Process & Time-varying & 1 & Sequential \\
        Prospective Learning & Stochastic (Decision) Process & Instant./Time-varying & \# Time steps & Any \\
        \bottomrule
    \end{tabular}
    }

\end{table}


\subsection{\dots PAC learning?}

PAC learning~\citep{Valiant2013-gp} is a special case of prospective learning when the stochastic process
is time-invariant (meaning the data are IID)  and the loss is fixed. Also, it is only concerned with batch data. It is an interesting question as to whether prospective learning as we have defined here is useful for IID data. We do not know yet in general. In~\cref{s:app:promap}, we provide a simple example where prospection turns out to be beneficial, even in the IID setting. More broadly, we wonder whether the viewpoint proposed in this paper might lead to novel algorithms for solving learning problems on IID data that do not have closed form solutions.

\subsection{\dots transfer learning?}

In transfer learning~\citep{ben2010theory}, including domain (covariate) shift (adaptation)~\cite{Daume2007-cv, Sugiyama2007-gi}, and out-of-distribution (OOD)~\cite{De_Silva2023-ee} learning, there are two distributions, a source and a target distribution; thus, the distribution changes only once, rather than potentially once per time step.  Depending on whether the goal is to perform well only on the target, or both the source and the target, there are one or two optimal hypotheses. Also, that the distribution has changed is often known (though not always, as in OOD learning).

\subsection{\dots meta-learning?}

Meta-learning~\cite{Finn2017-fi, maurer2005algorithmic} is similar to multi-task learning~\cite{Baxter2000-le}, and includes as special cases zero-shot~\cite{Romera-Paredes2015-vp} and few-shot learning~\cite{Snell2017-ww}.  Here, the data are \textit{Task IID}, meaning that the distribution within a task is IID, and the distribution of tasks themselves is also IID, rendering it impossible to predict future distributions very well (the best one could do is guess the next task is whichever task is most likely).  Typically, that the task/distribution changes is known, but not always. In classical meta-learning, data are available in one batch, but in online meta-learning, data are sequentially available~\cite{Finn2019-ml}. The goal is to perform well on the next (unknown) distribution, as opposed to all future (unknown) distributions as in prospective learning.

\subsection{\dots lifelong learning?}

Lifelong (continual) learning~\citep{rameshModelZooGrowing2022, vogelstein2020representation, thrun1998lifelong} is nearly identical to online meta-learning~\citep{Finn2019-ml}.  However,  the data are typically available in one batch per task.  The goal is also a bit different, rather than performing well on the next distribution, in lifelong learning, the goal is also to continue performing well on previous distributions.  Often, the learner is aware that the distribution shifted~\citep{van2019three}, but not always~\citep{de2021continual}.

\subsection{\dots online learning?}

A key property of online learning~\citep{Shalev-Shwartz2011-yd} is that there are no distributional assumptions~\cite{Mohri2018-tf}, and therefore, the goal is not about generalization error. Instead, performance is evaluated relative to the best a fixed hypothesis could have done up until now.  Also, in online learning, the environment is often adversarial. 

\subsection{\dots forecasting?}

Forecasting~\cite{Petropoulos2022-ua},  time-series  or sequential analysis~\cite{Ghosh1991-kp, Cesa-Bianchi2006-tv} assume the data follow a stochastic process, much like prospective learning often does (e.g.,~\cref{eg:case2,eg:case3}), and therefore, the number of optimal hypothesis can be equal to the number of time steps.  However, in forecasting, the loss is associated with a fixed (pre-specified) horizon, or several horizons~\cite{Lim2021-yx}.  Forecasting also often assumes a parametric model, but not always~\cite{Chen2018-bv,Zeng2023-kr}. Probabilistic forecasting~\cite{Gneiting2014-ru} can also predict arbitrarily far in the future by iteratively updating its probabilistic forecasts.  However, this is prone to numerical errors, as evidenced in sequential Monte Carlo.

\subsection{\dots reinforcement learning?}

Reinforcement learning (RL)~\citep{Sutton1998-ez} is only concerned with situations where there is a control element, that is, the hypothesis chooses an action (which potentially impacts future distributions), rather than merely an inference (which does not).  Thus, it excludes~\cref{eg:case2,eg:case3}.  Moreover, RL theory focuses on Markov Decision Processes~\cite{Strehl2009-zb}, whereas PL operates on larger classes of stochastic decision processes, including non-Markov processes (e.g., see examples of non-Markov stochastic processes in~\cref{eg:case3}). Depending on context, PL also considers loss functions that are instantaneous.  Classical RL assumes data are sequentially available, yet offline RL operates in batch mode~\cite{Levine2020-mg}.
Perhaps most importantly, in classical RL, the optimal hypothesis (policy) is not time-varying, though recent generalizations are forthcoming~\citep{kumar2023continual}.   Also, in RL, there are typically many episodes, whereas in prospective learning there is only a single episode (though single-episode RL is also forthcoming~\citep{chen2022you}). 

To elaborate on the first point above, assume that our decisions do not impact the future, but the optimal hypothesis is  time-dependent, that is $h^*_t \neq h^*_{t'}$ for some $t \neq t'$. Why would we care about the risk for any $t' > t$ (like RL, but not like online/continual learning), given that our decision at time $t$ does not impact $Z_{t'}$ at all? We only ever incur the current loss, that is, $\ell(t, h_t(x_t), y_t)$. So, it would seem that as long as we minimize this current loss, there is no reason to care about any future loss.
First note that minimizing the expected cumulative future loss is sufficient for minimizing the loss averaged over a finite future horizon, this is formally shown in~\cref{cor:discounted_loss}. But more importantly, these two problem settings are rather different. Minimizing the prospective risk (expected cumulative future loss) forces the learner to learn/model all the different modes of variation in the data. Missing even a small (low energy) mode of variation can lead to large prospective risk. If the learner only seeks to minimize the current loss, it need not have any representation of how data evolves over time. It will not be a good prospective learner. Recall that online learners (which minimize the current loss) have large worst case regret. Prospective learning effectively evaluates the regret over the infinite future horizon. The two settings are closely related if data evolves slowly, as argued in~\citet{fakoor22data}.

\subsection{\dots recurrent neural networks?}
Recurrent neural networks (RNNs), including Long Short Term Memory (LSTM) networks~\cite{Sherstinsky2020-nm}, as well as  echo state machines and liquid state machines (and other reservoir computing techniques~\cite{Lukosevicius2009-wo}), and Gaussian Processes~\cite{Rasmussen2019-sz} seem like they are solving prospective learning problems.  Indeed, they are all reasonable architectures for satisfying the conditions of~\cref{thm:time-aware}. Insofar as they do satisfy those conditions, then they are indeed prospective learners.





\section{Calculations for scenarios in~\cref{s:examples}}
\label{s:scenarios}

\subsection{Can learning benefit from prospection in the IID scenario?}
\label{s:app:promap}

Consider \Cref{eg:case1} where the learner returns the hypothesis $h_{t'} = h_t = \text{threshold}(\hat p_t>0.5)$ for all $t' > t$, where $\hat p_t = \f{1}{t} \sum_{s=1}^t y_s$ is the maximum likelihood estimator (MLE). Alternatively, if we assume a prior distribution $\text{Beta}(\alpha, \beta)$ over $p$, then the resulting maximum \emph{a posteriori} (MAP) estimate is $\hat p_t = \f{\a + \sum_{s = 1}^t z_s - 1}{\a + \b + t - 2}$. We define a second prospective learner based on MAP that returns the sequence $\hat h_{\geq t} = (\hat h_t, \hat h_t, \cdots)$, where $\hat h_t = \text{threshold}(\hat p_t>0.5)$ for all future times beyond $t$. If the prior distribution has a small divergence with respect to the true posterior distribution, then the second learner converges faster to the Bayes optimal risk; for a poor choice of prior, the convergence is slower. However, in such situations, we show that we can modify the MAP-based learner to use prospection and incorporate ``time'' to result in faster convergence to the Bayes risk.

Let $y_1, \dots, y_t$ be the IID sample sequence observed up to time $t$. The idea here is to compute the rate of change $\Delta p(t)$ of the MAP estimate at time $t$ which is given by,
\begin{equation}\label{eq:promap}
    \Delta p(t) = \text{MAP}(y_1, \dots, y_t) - \text{MAP}(y_1, \dots, y_{t-1})
\end{equation}
Taking expectation on both sides of~\Cref{eq:promap}, and plugging in $\hat p_t = \text{MAP}(y_1, \dots, y_t)$ for the true parameter $p$, we construct the following estimate for $\Delta p(t)$.
\begin{equation*}
    \hat \Delta p(t) = \frac{(\alpha -1) + tp}{(\alpha -1) + (\beta -1) + t} - \frac{(\alpha -1) + (t-1)p}{(\alpha -1) + (\beta -1) + t-1}
\end{equation*}
Using this rate of change, we may forecast the estimate $p_{t'}$ at time $t' > t$ as follows.
\begin{equation*}
    \hat p_{t'} = \hat p_t + \sum_{s=t}^{t'-1} \hat \Delta p(s)
\end{equation*}

\begin{figure}[htb]
    \centering
    \includegraphics[width=0.6\textwidth]{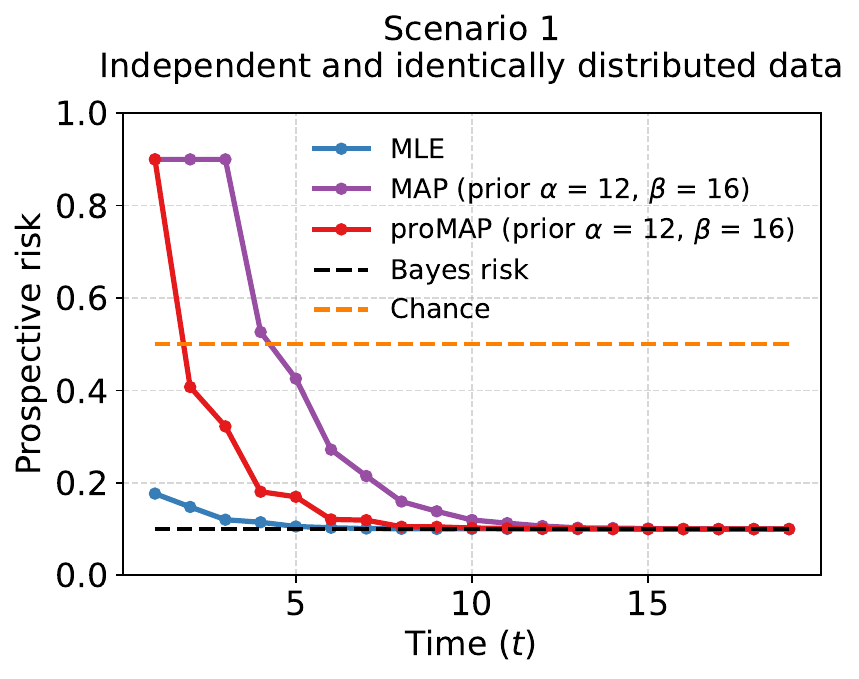}
    \caption{Prospective risk of MLE \textit{(blue)}, MAP \textit{(purple)}, prospective MAP \textit{(red)} and random-chance \textit{(orange)} based learners with respect to time. Both MAP and prospective MAP estimators assume a prior distribution of $\text{Beta}(12, 16)$ over $p$.}
    \label{fig:pro-map}
\end{figure}

We refer to this as the prospective MAP estimate. Based on it, we set the hypothesis to be $h_{t'} = \text{threshold}(\hat p_{t'}>0.5)$ for all future times beyond $t$. In~\Cref{fig:pro-map}, we plot the prospective risk the MLE, MAP, and prospective MAP-based learners. Due to an unfavorable prior, the MAP-based learner converges slowly. However, prospective MAP-based learner manages to leverage its forecasting to achieve a faster convergence rate despite having the same prior as the MAP. This shows that we can indeed benefit from prospection even in the IID case.

\subsection{Bayes risk for a Markov chain}
\label{s:app:scenario3}

We would like to compute the prospective Bayes risk, when the evolution of the samples is governed by a Markov transition matrix where $P(Y_{t+1} = 0 \mid Y_t=0) = \theta_0$ and $P(Y_{t+1} =1 \mid Y_t=1) = \theta_1$, i.e., the transition matrix is
\[
    \G = \bmat{\th_0 & 1-\th_0\\ 1-\th_1 & \th_1}.
\]
\begin{figure}[htb]
    \centering
    \includegraphics[width=0.2\linewidth]{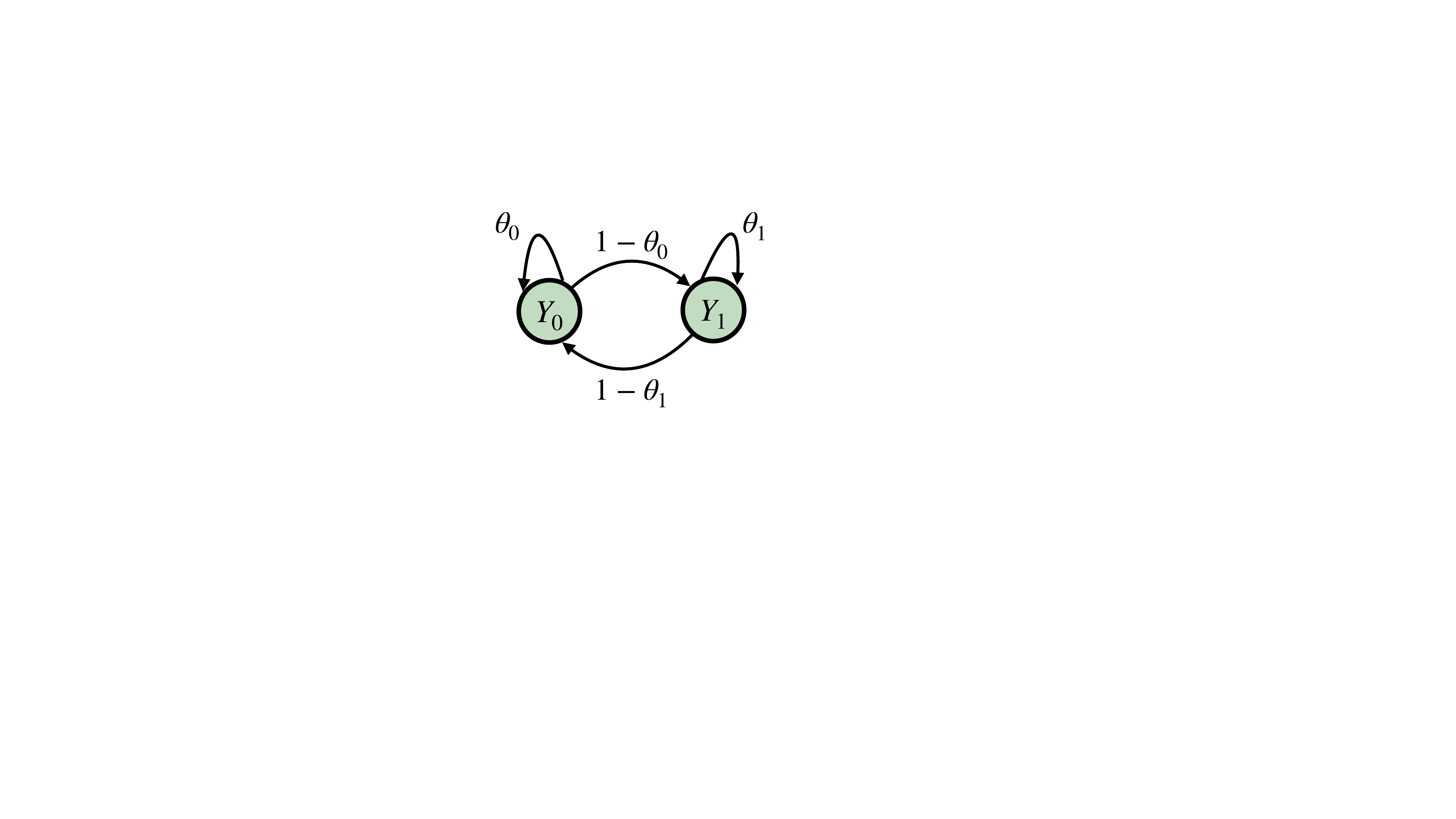}
    \caption{Markov chain describing the evolution of data}
    \label{fig:app:markov}
\end{figure}
The probability distribution at time $t'$ is given by $\G^{t' - t} (z_t, 1 - z_t)^T$.
The eigenvalues of the transition matrix are $\lambda_1 = 1$ and $\lambda_2 = \theta_0 + \theta_1 - 1$ with the corresponding eigenvectors being $(1, 1)^\top$ and $(\theta_0 -1, 1-\theta_1)^\top$. Diagonalizing $\G$ we get
\begin{align*}
 \G^{t' - t} &=
\begin{bmatrix}
   1 & \theta_0 - 1 \\
   1 & 1 - \theta_1 \\
\end{bmatrix}
\begin{bmatrix}
   \lambda_1^{t'-t} & 0 \\
   0 & \lambda_2^{t' -t} \\
\end{bmatrix}
\begin{bmatrix}
   1 & \theta_0 - 1 \\
   1 & 1 - \theta_1 \\
\end{bmatrix}^{-1} \\
&= \f{1}{(2 - \theta_0 - \theta_1)}
\begin{bmatrix}
   1 - \theta_1 + (1 - \theta_0) \lambda_2^{t'-t} &
   (1 - \theta_0) -  \lambda_2^n \\
   1 + \theta_1 + (1 - \theta_0) \lambda_2^{t'-t} &
   (1 - \theta_0) + \lambda_2^n.
\end{bmatrix}
\end{align*}%
which implies that the probability distribution of the state at time $t'$ is
 \[
    \pi_{t'} =
    \f{1}{(2 - \theta_0 - \theta_1)}
    \begin{bmatrix}
    1 - \theta_1 \\
    1 - \theta_0
    \end{bmatrix}
     + \f{\lambda_2^{t'-t} \rbr{(1 - \theta_0)(1 - z_t) - (1 - \theta_1)z_t}}{(2- \theta_0 - \theta_1)}
    \begin{bmatrix}
    1 \\ -1
    \end{bmatrix}.
\]
Hence, the optimal sequence of hypotheses is $h_{\geq t+1}^* = (h_{t+1}^*, h_{t+2}^*, \dots)$, where
\[
    h^*_{t'} = \argmax_{i \in \{0, 1\}}  \pi_{t'}(i)
\]
with Bayes risk equal to $R_t^* =\lim_{T \rightarrow \infty} \f{1}{T} \sum_{s=t}^T \min_{i \in \{0, 1\} } \pi_t'(i) $. This reduces to
\[
    R_t^* = \f{1}{(2 - \theta_0 - \theta_1)} \min(1 - \theta_0, 1 - \theta_1);
\]
the second term in the expression of $\pi_{t'}$ vanishes as $T \rightarrow \infty$. If $\theta_0=0.9$ and $\theta_1=0.5$, then $R_t^* = 1/6$.

If we restrict our attention to the case where $\theta_0 = \theta_1$, the discounted Bayes risk reduces to
\begin{align*}
(1-\g) \sum_{s=t+1}^{\infty} \gamma^{s - t-1} \ell(h_s^*)
= (1-\g) \sum_{s=t+1}^{\infty} \rbr{\f{\gamma^{s-t-1}}{2} - \f{\gamma^{s-t-1} \abs{\lambda_2^{s -t}}}{2}} = (1-\g) \rbr{\f{1}{2(1 - \gamma)} - \f{|\lambda_2|}{2(1 -  \abs{\lambda_2} \gamma)}}
\end{align*}
Substituting $\theta_0 = \theta_1 = 0.1$, the discount risk for $\gamma=0.9$ is $0.357$.

\subsection{Prospective learning in~\cref{eg:case4} when the future depends upon the current prediction}
\label{s:app:scenario4}
\label{a:eg:case4}

There are two types of prospective learners---one that passively observes the environment and makes inferences and another that acts on the environment and influences it. \Cref{eg:case1}, \Cref{eg:case2}, \Cref{eg:case3} fall into the first category which is the primary emphasis of our paper. \Cref{eg:case4} presents a prospective learning problem where the learner can influence the future realizations of the stochastic process through its decisions.

Our prospective learner is inspired from reinforcement learning, where the current state is $Y_{t-1}$, the action is $h_{t}$ and the next state is $Y_{t}$.
The reward at the $t^{\text{th}}$ time-step is $r(t, h_{t+1}, y_{t+1}) = \mathbf{1}\cbr{h_{t+1} = y_{t+1}}$
as a result of selecting action $h_{t+1}$ given that the previous output was $y_t$, and next output $y_{t+1}$.
The learner estimates the transition matrix corresponding to the MDP for each decision $h_{t+1}(X_{t+1}) \in \cbr{0,1}$ using a similar procedure as that of~\cref{eg:case3},
\[
   \forall k \in \cbr{0,1}:\
   \hat \Gamma_t(k) =
   \begin{bmatrix}
   \hat \theta_k  & 1 - \hat \theta_k \\
   1 - \hat \theta_k  & \hat \theta_k \\
   \end{bmatrix}
\]
where $\hat \th \equiv \hat \th_t \in [0, 1]^2$ after $t$ time steps.
Using this estimate of $\hat \G_t$, the learner solves for the value function (corresponding to the discounted prospective risk) that satisfies
\[
   \hat Q_t(y_{t}, h_{t+1}) = \sum_{y \in \{0, 1\}} \underbrace{\P(Y_{t+1}=y \mid Y_t=y_{t}, h_{t+1}=h)}_{=\hat \G_t(h_{t+1})_{y_{t},y}}
   \rbr{r(t, h_{t+1}, y) + \g \max_{\bar h} \hat Q_t(y, \bar h)}
\]
The value function can be solved using value iteration, iteratively until convergence. For a given $\hat \G$,
Banach's fixed point theorem guarantees this procedure will converge to the optimal value function in the tabular setting~\citep{watkins1992q}.
Once we have the Q-values $\hat Q_t$, we can use it to take actions. The optimal action at time $t'$ is $\argmax_h Q(y_{t'}, h)$. However, unlike reinforcement learning, we do not know $y_{t'}$ for $t' > t$ and we must instead make a sequence of decisions using state $y_{t}$. The sequence of decision made by the learner is $\hat h_{>t} = (\hat h_{t+1}, \dots) $ where
\[
   \hat \pi_{t'}^\top = \pi_t^\top \prod_{s=t+1}^{t'} \hat \G_{t}(\hat h_{s-1}),
\]
$\pi_t = (1-y_t, y_t)^\top$, i.e., $\pi_{t'}$ is the estimated distribution over the state at time $t'$, and
\[
   \hat h_{t'+1} = \argmax_h \pi_{t'}^\top \hat Q_t(\cdot, h).
\]
In~\cref{fig:scenarios}, we have used $\theta_0 = \theta_1 = 0.1$ and a discount factor $\gamma=0.9$.
We find that this learner approaches the Bayes risk ($0.357$ which we calculate in~\cref{s:app:scenario3}).

\section{Prospective ERM for discounted losses}
\label{s:app:discounted}
Like we discussed in~\cref{eg:case3}, in order to prospect meaningfully for some stochastic processes,
we might need to consider a discounted future loss, e.g., the one in~\cref{eq:ell_bar_discounted}. \cref{thm:time-aware} was proved only for the averaged future loss in~\cref{eq:ell_bar}. Here, we sketch out the proof of an analogous theorem for the discounted loss. Let
\[
     \ell_t^{(\tau)}(h,Z; \g) =  \rbr{\f{1-\g}{1-\g^{\tau+1}}} \sum_{s=t+1}^{t+\tau} \g^{s-t-1} \ell (h_s(x_s), y_s),
\]
where $\ell: \YY \times \YY \mapsto [0,1]$ is a bounded loss function and $0 < \g < 1$ is a constant. In general, we can use a probability measure $\mu^{(\tau)}$ supported on integers $\cbr{1,\dots,\tau}$ to write the loss as
\beq{
    \ell_t^{(\tau)}(h,Z; \mu^{(\tau)}) = \sum_{s=t+1}^{t+\tau} \mu^{(\tau)}_{s-t} \ell (h_s(x_s), y_s).
    \label{eq:ell_mu}
}
The averaged loss in~\cref{eq:ell_bar} corresponds to $\mu^{(\tau)}_s = 1/\tau$ for all $s$. The discounted loss above corresponds to $\mu^{(\tau)}_s = \rbr{\f{1-\g}{1-\g^{\tau+1}}} \g^{s-1}$.

In prospective learning, we are interested in the case when $\tau \to \infty$ and therefore let us define
\[
    \bar \ell_t(h, Z; \mu) = \limsup_{\t \to \infty} \ell_t^{(\tau)}(h, Z; \mu^{(\tau)}).
\]
where $\mu$ denotes the collection $\{\mu^{(\tau)}\}_{\tau=1}^\infty$. We can define $R_t(h; \mu)$ and $R_t^*(\mu)$ for this discounted loss using similar expressions as those in~\cref{eq:prospective_risk,eq:bayes_risk}. For clarity, let us use the notation $R_t(h, 1/\t) \equiv R_t(h)$ and $R_t^*(1/\t) \equiv R_t^*$ for the prospective risk and prospective Bayes risks corresponding to the averaged loss corresponding to $\mu^{(\tau)}_s = 1/\t$.

\begin{corollary}[\textbf{Prospective ERM is a strong learner with discounted losses}]
\label{cor:discounted_loss}
Let the assumptions of~\cref{thm:time-aware} hold. If there exists a constant $c > 0$ such that $\forall Z\in\mathcal{Z}$,
\beq{
    R_t(h; \mu) - R_t^*(\mu) \leq c \rbr{R_t(h, 1/\t) - R_t^*(1/\t)} \quad \forall t \in \naturals, h \in \HH_t
    \label{eq:inherited_loss}
}
i.e., if gap in the risk for the discounted loss is dominated uniformly by the gap in the risks for the averaged loss (over all realizations of the stochastic process), then prospective ERM implemented in~\cref{eq:prospective_erm} (implemented with the averaged loss) is a strong prospective learner, i.e., its discounted risk $R_t(\hat h, \mu)$ converges to the discounted Bayes risk $R_t^*(\mu)$.
\end{corollary}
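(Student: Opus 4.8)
The plan is to obtain the corollary directly from \cref{thm:time-aware} by rescaling the accuracy parameter and then transporting the resulting guarantee from the averaged loss to the discounted loss via the domination hypothesis \cref{eq:inherited_loss}. First I would fix $\e,\delta>0$ and invoke \cref{thm:time-aware} (whose conditions (a)--(b) are assumed to hold) with accuracy $\e/c$ and confidence $\delta$: there is a time $t'(\e/c,\delta)$, uniform over the finite family $\mathcal{Z}$, such that the output $\hat h$ of prospective ERM on $z_{\leq t}$ (the learner of \cref{eq:prospective_erm}, built with the \emph{averaged} loss, which is exactly the learner named in the corollary) satisfies $\P[R_t(\hat h,1/\tau)-R_t^*(1/\tau)<\e/c]\ge 1-\delta$ for every $t>t'(\e/c,\delta)$ and every $Z\in\mathcal{Z}$.

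Next I would verify that $\hat h$ lies in the class on which \cref{eq:inherited_loss} is assumed to hold. By construction $\hat h\in\HH_{i_t}$, and since the hypothesis classes are nested, $\HH_1\subseteq\HH_2\subseteq\cdots$, with $i_t\le t$, we get $\HH_{i_t}\subseteq\HH_t$, so $\hat h\in\HH_t$. Because \cref{eq:inherited_loss} is a pathwise inequality valid for every realization of every $Z\in\mathcal{Z}$ and every $h\in\HH_t$, it applies to the random hypothesis $\hat h$: on the event $\{R_t(\hat h,1/\tau)-R_t^*(1/\tau)<\e/c\}$ we obtain $R_t(\hat h;\mu)-R_t^*(\mu)\le c\,(R_t(\hat h,1/\tau)-R_t^*(1/\tau))<\e$. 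Hence this event is contained in $\{R_t(\hat h;\mu)-R_t^*(\mu)<\e\}$, so $\P[R_t(\hat h;\mu)-R_t^*(\mu)<\e]\ge 1-\delta$ for all $t>t'(\e/c,\delta)$. Setting $t''(\e,\delta):=t'(\e/c,\delta)$, this is exactly the statement that $\hat h$ is a strong prospective learner for the discounted loss over $\mathcal{Z}$, i.e.\ its discounted risk converges to the discounted Bayes risk.

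There is no hard analytic step here; the corollary is a ``wrapper'' around \cref{thm:time-aware}, and the only points that need care are bookkeeping ones: (i) rescaling the accuracy by $1/c$ is harmless because \cref{thm:time-aware} delivers an arbitrarily small accuracy; (ii) both the time $t'$ from \cref{thm:time-aware} and the constant $c$ in \cref{eq:inherited_loss} are uniform over $\mathcal{Z}$ by assumption, so their composition remains uniform; and (iii) $\hat h$ must genuinely belong to the class featured in \cref{eq:inherited_loss}, which is why the nesting $\HH_{i_t}\subseteq\HH_t$ matters. The genuinely substantive content --- checking that \cref{eq:inherited_loss} actually holds for concrete processes, e.g.\ the Markov chains of \cref{eg:case3} --- is a modeling question outside the scope of the corollary, and I would relegate it to worked examples such as those in \cref{s:app:scenario3}.
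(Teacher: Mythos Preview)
Your proof is correct and follows essentially the same route as the paper's: the paper simply notes that \cref{eq:inherited_loss} yields the tail--event inclusion $\P\bigl[|R_t(\hat h,\mu)-R_t^*(\mu)|\ge c\e\bigr]\le\P\bigl[|R_t(\hat h,1/\tau)-R_t^*(1/\tau)|\ge\e\bigr]$ and then invokes the convergence established in \cref{thm:time-aware}, which is precisely your argument phrased with complementary events and with $\e$ rescaled by $1/c$. Your extra bookkeeping (checking $\hat h\in\HH_{i_t}\subseteq\HH_t$ so that \cref{eq:inherited_loss} applies, and tracking uniformity over $\mathcal{Z}$) is a helpful clarification the paper leaves implicit, but the approach is the same.
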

\begin{proof}
The assumption in~\cref{eq:inherited_loss} ensures that
\[
    \P\rbr{ \abs{R_t(\hat h, \mu) - R_t^*(\mu)} \geq c \e} \leq \P\rbr{ \abs{R_t(\hat h, 1/\t) - R_t^*(1/\t)} \geq \e}
\]
for any $\e$ and $t$. The right hand-side is shown to converge to zero in the proof of~\cref{thm:time-aware} and  therefore the left-hand side also converges to zero.
\end{proof}
A corresponding~\cref{thm:countable_hypo} also holds for the discounted future loss.
Note that \cref{thm:time-aware,thm:countable_hypo} also hold in a slightly general setting when the measure $\mu^{(\tau)}(\w)$ is a random variable that depends upon the realization of the stochastic process $\w \in \Om$.

\section{An illustrative example of prospective ERM}
\label{s:periodic}

Suppose we have a stochastic process such that $Z_t \sim p_{(t \bmod T)}$ for some known period $T$, i.e., data is independent across time but not identically distributed, and the loss function $\ell(t,\cdot,\cdot)$ is time-invariant. \cref{eg:case2} is a special case with $T=2$. Assume that we can find a countable hypothesis class $\GG$ that contains the Bayes plug-in estimator for each $p_t$ with $t \in \{1,\dots,T\}$. Then $\HH_T = \cbr{h: h_{t+T}=h_t \text{ and } h_t\in \GG\ \forall t}$ satisfies~\cref{eq:countable_hypo}, and it is also countable.
This implies consistency and uniform concentration of the limsup for sequences in some sub-classes $\{\HH_t\}_{t=1}^\infty$ that expands to $\HH_T$. Note that even if we do not know the period, we can still implement prospective ERM using the hypothesis class $\cup_{T\in\naturals} \HH_T$; this is a countable set. Prospective ERM is therefore a strong prospective learner if the period $T$ is bounded.

\begin{remark}[\textbf{Implementing prospective ERM for periodic processes}]
\label{rem:implementing_prospective_erm}
If $\GG$ has a finite VC-dimension, choosing $\HH_t = \HH_T$ for any $t > T$ as the increasing sequence of hypothesis classes in \cref{thm:time-aware} guarantees the existence of $\lim_{m \to\infty} \f{1}{m} \sum_{s=1}^m \ell(s,h_s(x_s), y_s)$. We can therefore choose $u_t=t$ in~\cref{eq:concentration_limsup} and thereby select
\[
    \hat h = \argmin_{h \in \HH_t} \frac{1}{t} \sum_{s=1}^t \ell(s, h_s(x_s), y_s)
\]
in~\cref{eq:prospective_erm}. For $\HH_t = \HH_T$ selected above for the periodic process, this is identical to~\cref{eq:time_agnostic_erm}. In other words, implementing prospective ERM for a periodic process boils down to solving $T$ different time-agnostic ERM problems, each using data $\{z_{s T+k}\}_{s=0}^\infty$, $k\in\{1,...,T\}$.
Observe that this is precisely the prospective learner we used for the example in~\cref{eg:case2} and~\cref{fig:scenarios}.
\end{remark}

\begin{remark}[\textbf{Sample complexity of prospective ERM for a periodic process}]
We can calculate the sample complexity by exploiting the relatedness of the different distributions in the periodic process.
First assume $t  > T$, i.e., at least one sample from each distribution is available.
We again pick $\HH_t = \HH_T$ for all $t > T$.
Let us assume that $\hat h_t \in \GG$ for all times $t$.
Let $C \equiv C(\e/16, \GG^T)$ denote the covering number of a hypothesis class of $T$-length sequences of hypotheses $\GG^T = \cbr{(h,\dots,h): h \in \GG}$ using balls of radius $\e/16$ with respect to loss $\ell$.
Then, using~\citet[Theorem 4]{Baxter2000} we can show that, if
\beq{
    t \geq \max \cbr{\frac{64}{\e^2} \log \frac{4 C(\epsilon, \GG^T)}{\d}, \frac{16 T}{\e^2}},
    \label{eq:baxter}
}
then for prospective ERM in~\cref{eq:prospective_erm} we have
\[
    \E\sbr{R_t(\hat h)} \leq \lim_{t\to\infty} \E \sbr{\inf_{h \in \HH_T} R_t(h)} + 2 \e,
\]
with probability at least $1-\delta$.
The sample complexity in~\cref{eq:baxter} is dominated by the first term in the curly brackets;~\citet[Lemma 5]{Baxter2000} shows that $C(\e, \GG^T) \leq (C(\e, \GG))^T$.
Sample complexity of prospective ERM grows at most linearly with the period $T$, as one would expect.
\end{remark}

\begin{remark}[\textbf{Exact sample complexity for a periodic binary classification with one-dimensional Gaussian inputs}]
Let the period of the stochastic process be $T=2$ with inputs $X_t \in \reals$ and outputs $Y_t \in \{-1,1\}$.
Suppose $Y_t \sim \text{Bernoulli}(0.5)$. The distribution $\P(X_t \mid Y_t=y)$ is a Gaussian with mean $y \mu + \D (t \bmod T)$ and variance $\sigma^2$. In words, for even times $t$, the mean of the Gaussians are shifted to the right by $\D$.
Consider the time-invariant squared error loss $\ell(s,\hat y, y) = (\hat y-y)^2$.
Choose $\GG =  \{\mathbf{1}_A: A \in \{(-\infty,c) ,(c,\infty): c\in\reals\} \}$ to be the set of predictors for Fisher's linear discriminant (FLD); the prospective learner selects each element of its hypothesis from $\GG$.
The calculations in \citet{De_Silva2023-ee} for FLD can be used to show that if $\HH_t= \{(h,h,\dots): h\in\GG\}$ for all times $t$, then a time-agnostic ERM has risk
\[
    \E\sbr{R_t(\hat h)} \to 1/2 \sbr{\Phi\rbr{\D/(2\s)-\mu/\s}+\Phi\rbr{-\D/(2\s)-\mu/\s}},
\]
where $\Phi$ is the Gauss error function. But if $\HH_t= \{(h_1,h_2,h_1,h_2,\dots): h_1,h_2\in\GG\}$ for all times $t$, then prospective ERM can achieve Bayes risk
\[
    \E\sbr{R_t(\hat h)} = \Phi\rbr{-t \mu /(2\s) (t/2 (t/2+1))^{-1/2}}  \to \Phi(-\mu/\s) = \E\sbr{R_t^*}.
\]
\end{remark}

\begin{remark}[\textbf{Hidden Markov Models (HMMs)}]
Suppose $Z$ is sampled from an HMM whose hidden states evolve according to a $k^{\text{th}}$-order time-homogeneous Markov process with a finite state space.
Select a hypothesis class $\HH_k$ that consists of sequences $h \in \HH_k$ such that each $h$ satisfies
\[
    \aed{
    \forall t \in \naturals:\ & h_t \in \GG \text{ and,}\\
    \forall t_1, t_2 \in \naturals:\ & \text{if } h_{t_1+s} = h_{t_2+s} \ \forall s \in \cbr{1\dots,k}, \text {then } h_{t_1+k+1}=h_{t_2+k+1}.
    }
\]
This is the hypothesis class that contains sequences of predictors that depend only on the past $k$ predictors.
If we assume, as above, that $\GG$ is countable, then so is $\HH_k$. And it also satisfies~\cref{eq:countable_hypo} because of the $k^{\text{th}}$-order Markov property. We can therefore implement prospective ERM using $\HH_k$ as the hypothesis class.
Observe that in the case when the Markov process underlying the HMM is deterministic, our example models the output from an auto-regressive language model that uses greedy decoding. The length of the context window is $k$, the hidden state of the HMM is the logit at each step (the next hidden state is a deterministic function of the previous $k$ ones), and the output of the HMM $Z_t$ is the next token. Our theory therefore shows that the output of such a model is prospectively learnable if the learner has access to the sequence of tokens.
\end{remark}


\section{Proofs}
\label{s:proofs}

\subsection{Proof of~\cref{prop:time-agnostic}}
\label{proof:time-agnostic}
Let $\XX = \{-1, 1 \}$ and $\YY = \{ 0, 1 \}$. Consider two distributions $P_1$ and $P_2$ (\cref{fig:nonweak}):
\[
    \aed{
        P_1(X=x) = P_2(X=x) &= \f{1}{2} \quad \forall x,\\
        P_1(Y=1 \mid X=x) &=
        \begin{cases}
        \th & \text{if } x=1 \\
        1-\th  & \text{if } x=-1,
     \end{cases}\\
        P_2(Y=1 \mid X=x) &= \begin{cases}
        1-\th & \text{if } x=1 \\
        \th &  \text{if } x=-1,
     \end{cases}
    }
 \]
In other words, the inputs have the same marginals but the labels are flipped between $P_1$ and $P_2$.
Consider a stochastic process $Z$ such that $Z_{2t+1} \sim P_1$ and $Z_{2t} \sim P_2$ where $t \in \naturals$.
\begin{figure}[b]
    \centering
    \includegraphics[width=0.5\linewidth]{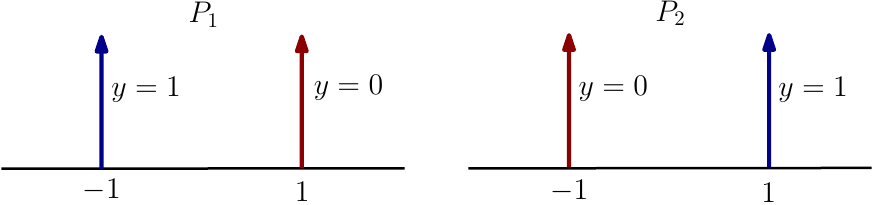}
    \caption{A simple stochastic process that is not weakly prospectively learnable.}
    \label{fig:nonweak}
\end{figure}

Let $\GG$ be any hypothesis class and let $\ell(s, \hat y, y) = \mathbf{1}(\hat y \neq y)$ be the time-invariant zero-one loss.
The time-agnostic learner uses a sequence of hypotheses $h \equiv (h_t)$ where $h_t = h_{t'} \, \forall \, t, t' \in \naturals$ to make predictions at all times.
The future loss is
\[
    \aed{
    \bar \ell_t(h, Z)
    =\lim_{\tau \to \infty} \f{1}{2\tau} \sum_{s=t+1}^{t+2\tau} \ell (s, h_s(X_s), Y_s)
    = R_1(h) + R_2(h)  = \f{1}{2},
    }
\]
almost surely; here $R_1(h)$ and $R_2(h)$ are risks on data from distributions $P_1$ and $P_2$ at odd and even times, respectively.
The last equation follows from the fact that $R_1(h) = 1 - R_2(h)$ because the labels are flipped.
Prospective Bayes risk is zero if the hypothesis class $\GG$ contains the Bayes optimal hypotheses for each of the two distributions.
The future loss evaluates to $1/2$ for all realizations and so does the prospective risk.
The prospective risk of a hypothesis sequence that makes random predictions (zero or one with equal probability at each instant) is also $1/2$.
This stochastic process is not weakly prospective learnable.

\begin{figure}[H]
    \centering
    \includegraphics[width=0.6\linewidth]{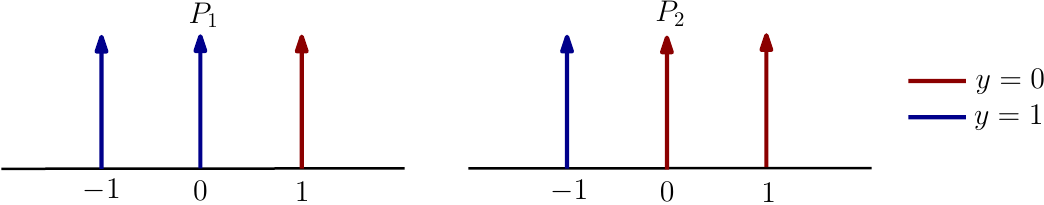}
    \caption{A simple stochastic process that is weakly but not strongly prospectively learnable.}
    \label{fig:time-agnostic-erm-2}
\end{figure}
Now consider the two distributions shown in~\cref{fig:time-agnostic-erm-2},
\begin{align*}
    P_1(X=x) = P_2(X=x) &= \f{1}{3} \quad \forall x,\\
    P_1(Y=1 \mid X=x) &= \begin{cases}
         \th & \text{if } x \leq 0 \\
         1-\th  & \text{if } x=1,
     \end{cases}\\
    P_2(Y=1 \mid X=x) &= \begin{cases}
         1-\th & \text{if } x \geq 0 \\
         \th &  \text{if } x=-1.
     \end{cases}
 \end{align*}
Inputs are supported on the set $\{-1, 0, 1\}$ this time. Again consider a stochastic process $Z$  such that $Z_{2t+1} \sim P_1$ and $Z_{2t} \sim P_2$ for $t \in \naturals$. For a time-agnostic learner, since its hypothesis $h$ at each time step has to predict incorrectly at $x=0$, we have $R_1(h) + R_2(h) \geq \f 1 3$. The future loss is
\[
    \bar \ell_t(h, Z)
    =\lim_{\tau \to \infty} \f{1}{2 \tau} \sum_{s=t+1}^{t+2\tau} \ell (s, h(X_s), Y_s)
    = R_{1}(h) + R_{2}(h)  \geq \f 1 3.
\]
almost surely. It follows that the prospective risk $R_t(h) \geq \f{1}{3}$ for any hypothesis. Prospective Bayes risk is again zero and therefore this stochastic process is not strongly prospectively learnable. It is however weakly learnable.

A hypothesis that predicts $\hat y = \pm 1$ with equal probability has $R_t^0 = 0.5$.
If the data contains samples for $x \in \cbr{-1,1}$, ERM will select a hypothesis that minimizes the empirical risk which necessitates that $h(1)=0$ and $h(-1)=1$. Therefore $R_1(h) + R_2(h) \leq \f{1}{3} + \e$, since $h$ predicts correctly at $x=\pm 1$, and incorrectly at $x=0$ exactly one of the two distributions.
The constant $\e$ can be chosen to be $\propto t^{-1/2}$ after receiving data from $t$ timesteps.
The probability with which we do not get samples at $x=1$ or at $x=-1$, is $2 \times 3^{-t}$. Therefore the probability that $R_1(h) + R_2(h) \leq \f{1}{3} + \e$ is at least $1 - 3^{-t+1}$ after $t$ time steps.
This learner is therefore better than the chance learner whose risk is $R_t^0$ and it is a weak prospective learner.
This shows that there exist stochastic processes that are weakly prospective learnable using time-agnostic ERM but not strongly.

\subsection{Proof of~\cref{thm:time-aware}}
\label{proof:thm:time-aware}

We first show that for each $Z\in \mathcal{Z}$, if \cref{eq:consistency,eq:concentration_limsup} holds, then the risk of estimator in \cref{eq:prospective_erm} converges in probability to the Bayes optimal, i.e.
\[
    \P\rbr{\abs{R_t(\hat{h}^{(t)}) - R_t^*} < \epsilon} \to 1.
\]

By taking 
\[
t = \max \left\{ t:\P\rbr{\abs{R_t(\hat{h}^{(t)}) - R_t^*} < \epsilon} \geq 1-\delta \quad \forall t'\geq t, Z\in\mathcal{Z} \right\}
\]
we can trivally show that the strong prospecitve learnability of estimator in \cref{eq:prospective_erm} holds over the family $\mathcal{Z}$.

We first state a lemma that gives a choice of a random hypothesis $h^{(t)} \in \sigma(Z_{\leq t})$ converging to the Bayes optimal risk under the consistency assumption.
First we define the shorthand
\[
    e_m(h) \equiv \f{1}{m} \sum_{s=1}^m \ell(s,h_s(X_s), Y_s).
\]
\begin{lemma}
\label{lem:optim}
For a stochastic process $Z$, for an increasing sequence of hypothesis $\HH_1\subseteq \HH_2 \subseteq \dots$ such that the consistency condition in~\cref{eq:consistency} is satisfied, for any $u_t$ satisfying $u_t\leq t$, $u_t\to\infty$, there exists $h^{(t)} \in \sigma(Z_{\leq t})$, a $\HH_t$-valued random variable, such that
\begin{equation}
\label{eq:sup_approx}
    \E\sbr{\sup_{u_t\leq m\leq \infty} e_m(h^{(t)}) \mid Z_{\leq t} } - R_t^* \to 0
\end{equation}
almost surely.
\end{lemma}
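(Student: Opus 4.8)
The plan is to take $h^{(t)}$ to be (within $\epsilon_t$ of) the minimizer over $\HH_t$ of the map $h\mapsto\E\sbr{\sup_{u_t\le m\le\infty}e_m(h)\mid Z_{\le t}}$, for some $\epsilon_t\downarrow 0$, and then to sandwich this conditional expectation between $R_t^*$ from below and $R_t^*+o(1)$ from above. Existence of such a $\sigma(Z_{\le t})$-measurable (near-)selection is immediate when $\HH_t$ is countable --- the situation covered by \cref{thm:countable_hypo} --- and otherwise follows from a standard measurable-selection argument once one checks joint measurability of $(h,\omega)\mapsto\sup_m e_m(h)$.

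For the lower bound I would first record the elementary fact that the initial $t$ summands of $e_m(h)=\f{1}{m}\sum_{s=1}^m\ell(s,h_s(X_s),Y_s)$ are diluted by the $1/m$ factor, so that for every fixed $h$ and every realization $\limsup_{m\to\infty}e_m(h)=\limsup_{\tau\to\infty}\f{1}{\tau}\sum_{s=t+1}^{t+\tau}\ell(s,h_s(X_s),Y_s)=\bar\ell_t(h,Z)$, and hence $\sup_{m\ge n}e_m(h)\downarrow\bar\ell_t(h,Z)$ as $n\to\infty$; in particular $\sup_{u_t\le m\le\infty}e_m(h)\ge\bar\ell_t(h,Z)$. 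Taking $\E\sbr{\cdot\mid Z_{\le t}}$ (using $h^{(t)}\in\sigma(Z_{\le t})$) gives $\E\sbr{\sup_{u_t\le m\le\infty}e_m(h^{(t)})\mid Z_{\le t}}\ge R_t(h^{(t)})\ge R_t^*$, so only the matching upper bound remains.

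For the upper bound I would compare against an $\eta$-minimizer $h'\in\HH_t$ of $R_t$ over $\HH_t$; since $h^{(t)}$ was chosen $\epsilon_t$-optimal for the same objective, it suffices to show that $\E\sbr{\sup_{u_t\le m\le\infty}e_m(h')-\bar\ell_t(h',Z)\mid Z_{\le t}}$ is small in expectation, after which $\E\sbr{R_t(h')-R_t^*}\to 0$ follows from the consistency hypothesis \cref{eq:consistency} and the choice of $\eta$. To bound the nonnegative gap $\sup_{u_t\le m\le\infty}e_m(h')-\bar\ell_t(h',Z)$, split the supremum at $m=t$: on $u_t\le m\le t$, \cref{eq:concentration_limsup} bounds $\max_{u_t\le m\le t}e_m(h')$ by $\bar\ell_t(h',Z)$ plus $\max_{h\in\HH_t}\abs{\bar\ell_t(h,Z)-\max_{u_t\le m\le t}e_m(h)}$, a term of expectation at most $\gamma_t$; for $m=t+\tau>t$, write $e_{t+\tau}(h')=\f{t}{t+\tau}e_t(h')+\f{\tau}{t+\tau}\big(\f{1}{\tau}\sum_{s=t+1}^{t+\tau}\ell(s,h'_s(X_s),Y_s)\big)$, bound $e_t(h')$ once more via \cref{eq:concentration_limsup} (the case $m=t$), and use $\limsup_{\tau\to\infty}\f{1}{\tau}\sum_{s=t+1}^{t+\tau}\ell\le\bar\ell_t(h',Z)$ together with boundedness of $\ell$ and dominated convergence to drive the remaining transient to $0$.

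The step I expect to be the main obstacle is precisely this last one: making the $m>t$ tail vanish uniformly. Since $h^{(t)}$ --- and therefore the realization-dependent index past which $\f{1}{\tau}\sum_{s=t+1}^{t+\tau}\ell$ has settled near its $\limsup$ --- changes with $t$, one cannot apply dominated convergence to a fixed hypothesis, and the argument must track the interplay of $u_t$, $\epsilon_t$, and $\gamma_t$ (e.g.\ letting $u_t$ grow slowly relative to $t$), in the spirit of \citet{Hanneke21}. A secondary subtlety is that \cref{eq:consistency} only gives $L^1$-convergence of the nonnegative quantity $\inf_{h\in\HH_t}R_t(h)-R_t^*$, so the asserted almost-sure convergence should be understood as holding along a subsequence --- which is all that \cref{thm:time-aware} needs, as it is free to choose the sequence $i_t$ --- or else the conclusion can equivalently be phrased as convergence in mean.
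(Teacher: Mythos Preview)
Your lower bound and your recognition that almost-sure convergence must come via a subsequence/Borel--Cantelli step are both correct, and you have put your finger on the real difficulty: uniformity of $\sup_{m\ge u_t}e_m(h)\downarrow\bar\ell_t(h,Z)$ as the hypothesis varies with $t$. But the upper-bound argument has a genuine gap. The lemma is stated under the consistency hypothesis \cref{eq:consistency} \emph{alone}; the concentration hypothesis \cref{eq:concentration_limsup} is a separate assumption of \cref{thm:time-aware} that is used only later in that proof, not here. Your split-at-$m=t$ strategy therefore has nothing to lean on for the range $u_t\le m\le t$, and even if one granted concentration, the $m>t$ branch remains heuristic: for intermediate $\tau$ the partial average $\f{1}{\tau}\sum_{s=t+1}^{t+\tau}\ell$ need not be near its $\limsup$, and the rate at which it settles depends on the moving target $h'$, which is precisely the uniformity you flagged but did not resolve.

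The paper sidesteps both issues by \emph{decoupling} the two choices rather than minimizing the sup-objective. First, using only consistency, it extracts a subsequence $j_k$ and $\HH_{j_k}$-valued $h^{(j_k)}\in\sigma(Z_{\le j_k})$ with $\E\sbr{R_{j_k}(h^{(j_k)})-R_{j_k}^*}\le 4^{-k}$. Then, with each $h^{(j_k)}$ \emph{frozen}, bounded convergence on the pointwise monotone limit $\sup_{u_i\le m<\infty}e_m(h^{(j_k)})\downarrow\bar\ell_{j_k}(h^{(j_k)},Z)$ produces an index $i_k$ with $\E\sbr{\E\sbr{\sup_{u_{i_k}\le m<\infty}e_m(h^{(j_k)})\mid Z_{\le j_k}}-R_{j_k}^*}\le 2\cdot 4^{-k}$; because the hypothesis is fixed while $u$ grows, no uniformity over $\HH_t$ and no concentration is needed. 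Markov plus Borel--Cantelli upgrades this to almost-sure convergence along $k$, and the re-indexing $k_t=\max\{k:\max(i_k,j_k)\le t\}$ together with $h^{(t)}:=h^{(j_{k_t})}\in\HH_{j_{k_t}}\subseteq\HH_t$, $u_{i_{k_t}}\le u_t$, delivers \cref{eq:sup_approx} for the given sequence $u_t$. The idea your attempt is missing is exactly this freeze-then-grow manoeuvre: pick the hypothesis first from consistency, and only afterwards let $u$ run for that fixed hypothesis.
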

\begin{proof}
By \cref{eq:consistency}, there exists $h^{(t)} \in \sigma(Z_{\leq t})$, a $\HH_t$-valued random variable such that
\[
    \lim_{t\to \infty}  \E\sbr{R_t(h^{(t)}) - R_t^* } = 0
\]
Here, $h^{(t)} \in \sigma(Z_{\leq t})$ means that $h^{(t)}$ is constant on the set $\{Z_{\leq t}=z_{\leq t} \}$. By the assumption on $h^{(t)}$, we have
\begin{equation*}
    R_t(h^{(t)}) \geq \inf_{h\in \HH_t} R_t(h) \geq R_t^*
\end{equation*}
 almost surely. We can choose a sub-sequence $\{j_k\}$, such that $\E\sbr{ R_{j_k}(h^{(j_k)}) - R_{j_k}^* } \leq 4^{-k}$. For all random variables $h^{(t)}$ satisfying the above assumption, the bounded convergence theorem implies that 
\begin{align*}
\E[R_t(h^{(t)})-R_t^*]
&= \E\sbr{  \E\sbr{ \limsup_{m\to\infty} e_m(h^{(t)}) \mid Z_{\leq t} } - R_t^* }\\
&= \E\sbr{  \E\sbr{ \lim_{i\to\infty} \sup_{u_i\leq m\leq \infty} e_m(h^{(t)}) \mid Z_{\leq t} } } - \E \sbr{R_t^* }\\
&= \lim_{i\to\infty} \E\sbr{  \E\sbr{\sup_{u_i\leq m\leq \infty} e_m(h^{(t)}) \mid Z_{\leq t} } - R_t^* }
\end{align*}
In particular, this implies that for any integer $k$ there exists an integer $i_k$ such that
\begin{align*}
\E\sbr{  \E\sbr{\sup_{u_{i_k}\leq m\leq \infty} e_m(h^{(j_k)}) \mid Z_{\leq j_k} } - R_{j_k}^* }
\leq  \E\sbr{ R_{j_k}(h^{(j_k)}) - R_{j_k}^* } + 4^{-k}
\leq 2 \times 4^{-k}.
\end{align*}
By the definition of limsup, we have
\begin{equation*}
    \E\sbr{\sup_{u_i\leq m\leq \infty} e_m(h^{(t)}) \mid Z_{\leq t} } \geq \E\sbr{  \bar \ell_t(h^{(t)}, Z) \mid Z_{\leq t} } \geq R_t^*
\end{equation*}
and therefore we can use Markov's inequality to get
\begin{align*}
&\sum_{k=0}^\infty \P\left(  \E\sbr{\sup_{u_{i_k}\leq m\leq \infty} e_m(h^{(j_k)}) \mid Z_{\leq j_k} } - R_{j_k}^* > 2^{(1/2)-k}\right)\\
\leq& \sum_{k=0}^\infty \frac{1}{2^{(1/2)-k}} \E\sbr{  \E\sbr{\sup_{u_{i_k}\leq m\leq \infty} e_m(h^{(j_k)}) \mid Z_{\leq j_k} } - R_{j_k}^* }\\
\leq& \sum_{k=0}^\infty \frac{2}{2^{(1/2)-k}} 4^{-k}
< \infty.
\end{align*}
Therefore, by Borel-Cantelli lemma, with probability one, there exists a (random) integer $k_0$ such that for all $k \geq k_0$,
\begin{align*}
 \E\sbr{\sup_{u_{i_k}\leq m\leq \infty} e_m(h^{(j_k)}) \mid Z_{\leq j_k} } - R_{j_k}^* \leq 2^{(1/2)-k}
\end{align*}

We will now scale $i_k$, $j_k$ by some integer $k_t$ such that $i_{k_t}, j_{k_t} \leq t$. This ensures that $u_{i_{k_t}} \leq u_t$ and $\HH_{i_{k_t}} \subseteq \HH_t$. This scaling is necessary to relate the ``empirical estimate'' of the $\limsup$ of $\hat{h}$ to the actual $\limsup$ of $h^{(t)}$. To that end, define
\begin{align*}
    k_t = \max\{k \in \naturals \cup \{0\}: \max\{i_k,j_k\} \leq t \}.
\end{align*}
Since $i_k \le \infty$ and $j_k \le \infty$, we also have $\lim_{t\to \infty} k_t = \infty$. Let $\alpha_t = 2^{(1/2)-k_t}$ and notice that $\alpha_t \to 0$. We can construct an integer-valued random variable $t_0$ such that for all $t\geq t_0$, we have $k_t\geq k_0$, and therefore
\begin{align*}
     \E\sbr{\sup_{u_{i_{k_t}}\leq m\leq \infty} e_m(h^{(j_{k_t})}) \mid Z_{\leq j_{k_t}} } - R_{j_{k_t}}^* \leq \alpha_t.
\end{align*}
Now choose $h^{(t)} = h^{(j_{k_t})}$ for every $t\in\naturals$. Since $j_{k_t} \le t$, we have $\HH_{j_{k_t}} \subseteq \HH_t$ and $\sigma(Z_{\leq j_{k_t}}) \subseteq \sigma(Z_{\leq t})$. This implies that $h^{(j_{k_t})}$ is an $\HH_t$-valued random variable and $h^{(j_{k_t})} \in \sigma(Z_{\leq t})$. Also, since $i_{k_t} \leq t$ and $\{u_t\}_{t=1}^\infty$ is non-decreasing, we have $u_{i_{k_t}} \leq u_t$ for all $t\in\naturals$. Hence, with probability one, $\forall t\geq t_0$,
\begin{align*}
     \E\sbr{\sup_{u_t\leq m\leq \infty} e_m(h^{(t)}) \mid Z_{\leq j_{k_t}} } - R_{j_{k_t}}^*
    \leq 
     \E\sbr{\sup_{u_{i_{k_t}}\leq m\leq \infty} e_m(h^{(t)}) \mid Z_{\leq j_{k_t}} } - R_{j_{k_t}}^*
    \leq  \alpha_t
\end{align*}
Since $\alpha_t\to0$, we have
\begin{equation*}
     \E\sbr{\sup_{u_t\leq m\leq \infty} e_m(h^{(t)}) \mid Z_{\leq j_{k_t}} } - R_{j_{k_t}}^* \to 0 \text{  a.s.}
\end{equation*}
Again using the bounded convergence theorem,
\begin{equation*}
     \E\sbr{\sup_{u_t\leq m\leq \infty} e_m(h^{(t)}) \mid Z_{\leq t} } - R_{t}^* \to 0 \text{  a.s.}
\end{equation*}
\end{proof}

Now we continue the proof of \cref{thm:time-aware} by exploiting the convergence of the empirical $\limsup$ to the true $\limsup$.
We construct a sub-sequence of integers $i_t$ such that $\g_{i_t}$ in \cref{eq:concentration_limsup} decays exponentially. Markov's inequality implies that
\begin{align*}
    &\sum_{t=0}^\infty \P \rbr{\max_{h\in\HH_{i_t}} \abs{\bar \ell_t(h, Z) - \max_{u_{i_t}\leq m \leq {i_t}} e_m(h)} > \sqrt{\g_{i_t}}}\\
     \leq &\sum_{t=0}^\infty \frac{1}{\sqrt{\g_{i_t}}} \E\sbr{\max_{h\in\HH_{i_t}} \abs{\bar \ell_t(h,Z) - \max_{u_{i_t}\leq m \leq {i_t}} e_m(h)}}
    \leq \sum_{t=0}^\infty \sqrt{\gamma_{i_t}} < \infty.
\end{align*}
By the Borel-Cantelli lemma, with probability one, there exists $t_1\in\naturals$, random, and $t_1\in \FF$, such that $\forall t\geq t_1$,
\begin{align*}
    \max_{h\in\HH_{i_t}} \abs{\bar \ell_t(h,Z) - \max_{u_{i_t}\leq m \leq {i_t}} e_m(h) } \leq \sqrt{\gamma_{i_t}}.
\end{align*}
Let $j_t = \max \{t': i_{t'} \leq t\}$, then we have $i_{j_t} \to \infty$. Since $i_{j_t} \leq t$, we have 
\begin{align*}
    \bar \ell_t(h,Z) - \max_{u_{i_{j_t}}\leq m \leq t} e_m(h)
    \leq  \bar \ell_t(h,Z) - \max_{u_{i_{j_t}}\leq m \leq {i_{j_t}}} e_m(h).
\end{align*}
Construct a random variable $t_2$ such that $\forall t\geq t_2$, we have $j_t \geq t_1$.
Hence, we have for all $t\geq t_2$,
\begin{align*}
    &\max_{h\in\HH_{i_{j_t}}} \cbr{\bar \ell_t(h,Z) - \max_{u_{i_{j_t}} \leq m \leq {t}} e_m(h)}\\
    &\leq \max_{h\in\HH_{i_{j_t}}} \abs{\bar \ell_t(h,Z) - \max_{u_{i_{j_t}}\leq m \leq {i_{j_t}}} e_m(h)}\\
    &\leq \sqrt{\gamma_{i_{j_t}}}.
\end{align*}
Let $i_t = i_{j_t}$ and notice that since we have $i_t\to \infty$ we also have $i_t\leq t$. This gives a sub-sequence that depends only on $\g_t$. Then, with probability one, $\forall t\geq t_2$
\begin{align*}
    \max_{h\in\HH_{i_t}} \cbr{\bar \ell_t(h,Z) - \max_{u_{i_t} \leq m \leq {t}} e_m(h)} \leq \sqrt{\gamma_{i_t}}.
\end{align*}

Let $\{h^{(t)}\}_{t=1}^\infty$, where $h^{(t)} \in \sigma(Z_{\leq t})$ is a $\HH_{i_t}$-valued random variable chosen as in \cref{lem:optim} with $\HH_t$ chosen to be $\HH_{i_t}$ and $u_t$ chosen to be $u_{i_t}$. Since $\hat{h}^{(t)} \in\sigma(Z_{\leq t})$, with probability one, for $t \geq t_2$,
\begin{align*}
\bar \ell_t(\hat{h}^{(t)}, Z) &\leq \max_{u_{i_t}\leq m \leq t} e_m(\hat{h}^{(t)}) + \sqrt{\gamma_{i_t}}\\
&\leq \max_{u_{i_t}\leq m \leq t} e_m(h^{(t)}) + \sqrt{\gamma_{i_t}}\\
&\leq \sup_{u_{i_t}\leq m\leq \infty} e_m(h^{(t)}) + \sqrt{\gamma_{i_t}}.
\end{align*}
Hence,
\[
    \E \sbr{  \bar \ell_t(\hat{h}^{(t)}, Z) \mid Z_{\leq t} } - \E\sbr{ \sup_{u_{i_t}\leq m\leq \infty} e_m(h^{(t)}) \mid Z_{\leq t} } \to 0
\]
almost surely. By \cref{lem:optim}, we have,
\[
    \E \sbr{ \bar \ell(\hat{h}^{(t)},Z) \mid Z_{\leq t} } - R_{t}^* \to 0
\]
almost surely. Hence, by the bounded convergence theorem,
\[
    0 = \E \sbr{ \lim_{t\to\infty} \left(\E \sbr{  \bar \ell(\hat{h}^{(t)},Z) \mid Z_{\leq t} } - R_{t}^*\right) }
    = \lim_{t\to\infty}  \E \sbr{ \E \sbr{  \bar \ell(\hat{h}^{(t)},Z) \mid Z_{\leq t} } - R_{t}^* }
\]
which implies that 
\[
    \P\rbr{\abs{R_t(\hat{h}^{(t)}) - R_t^*} \geq \delta} \leq \frac{1}{\delta} \E \sbr{ \E \sbr{  \bar \ell(\hat{h}^{(t)}, Z) \mid Z_{\leq t} } - R_{t}^* } \to 0.
\]
We have therefore proved that $\hat{h}^{(t)}$ is a strong prospective learner.

\subsection{Proof of~\cref{thm:countable_hypo}}
\label{proof:thm:countable_hypo}

This construction follows closely with ~\citet[Section 4]{hanneke2021learning}, and we omit some details here.
For finite class $\HH$ of sequence of hypothesis, we give a possible choice of $u_t$ with
\begin{equation}\label{eq:finite approx}
    \lim_{t\to\infty}\E\sbr{\sup_{t'\geq t} \max_{h\in\HH} \abs{\bar \ell_t(h,Z) - \max_{u_t\leq m\leq t} e_m(h)}} =0.
\end{equation}
for all process $Z$ in the finite family $\mathcal{Z}$.
For a data sequence $\mathbf{z} = \{z_s=(x_s,y_s)\}_{s=0}^\infty$ and a hypothesis sequence $h\in \HH$, we define
\begin{equation*}
    t_u^h(\mathbf{z}) = \min \left\{ t \in \naturals: t \geq u, \forall t'\geq t \sup_{u\leq m \leq \infty} e_m(h) \leq \max_{u\leq m \leq t'} e_m(h) + 2^{-u} \right\},
\end{equation*}
and
\begin{align*}
    u_t^h(\mathbf{z}) &= \max \{ u\in \cbr{1,\dots,t}:\ t \geq t_u^h(\mathbf{z}) \},\\
    u_t^\HH(\mathbf{z}) &= \min_{h \in \HH} u_t^h (\mathbf{z});
\end{align*}
note that $\HH$ is finite and therefore the minimum exists.
For the stochastic process $Z$, let
\begin{align*}
    u_t^\HH(\delta,Z) &= \max \left\{ u\in \cbr{1,\dots,t}: \P_{\mathbf{z}\sim Z}(u_t^\HH(\mathbf{z}) \geq u) = 1-\delta \right\},\\
    u_t(Z) &= \max \left\{s\in \naturals \cup \{0\}: u_t^{\HH}(2^{-u},Z) \geq u \right\},\\
    u_t &= \min \left\{ u_t(Z): Z\in\mathcal{Z}\right\}
\end{align*}
Since $\mathcal{Z}$ is finite, we have $u_t\to \infty$.
And we also have $\forall Z\in \mathcal{Z}$,
\[
    \P \left( u_t^{\HH}(Z) \geq u_t \right) \geq 1-2^{-u_t},
\]
By Borel-Cantelli Lemma, this construction gives a sequence $u_t$ s.t. \cref{eq:finite approx} holds, i.e. $\forall Z\in\mathcal{Z}$,

Suppose $\{\HH_t\}_{t=1}^\infty$ is a sequence of non-empty finite sets of hypothesis sequences, and $\{\g_t\}_{t=1}^\infty$ is a sequence in $(0,\infty)$ with $\g_1 \geq 1$. Then for any finite family $\mathcal{Z}$, we can extend this construction to get a choice of $u_t$, $\HH_t$ and $\gamma_t$ such that~\cref{eq:concentration_limsup} holds
\[
    \E\sbr{ \max_{h\in\HH_t} \abs{\bar \ell_t(h,Z) - \max_{u_t\leq m \leq t} e_m(h)}} \leq \g_t.
\]
Since we have a $u_t$ for a given hypothesis class $\HH$, for each $i \in \naturals$, we can construct a sequence $\{u_{i,t}\}_{t=1}^\infty$ such that $\lim_{t\to\infty} u_{i,t} = \infty$, $u_{i,t}< t$ and $\forall Z\in\mathcal{Z}$,
\[
    \lim_{t\to\infty} \E \sbr{\sup_{t'\geq t} \max_{h\in\HH_i} \abs{\bar \ell_t(h,Z) - \max_{u_{i,t}\leq m < n'} e_m(h)}} = 0.
\]
Now let
\begin{equation*}
    j_t = \max \left\{ i\in \{1,\dots,t\}: \forall i'\leq i, \sup_{t''\geq t} \E\sbr{\sup_{t'\geq t''} \max_{h\in\HH_{i'}} \abs{\bar \ell_t(h,Z) - \max_{u_{i',t''}\leq m\leq t'} e_m(h)} } \leq \g_{i'} \forall Z\in\mathcal{Z} \right\}
\end{equation*}
and
\begin{equation*}
    t_i = \min \left\{ t: j_t\geq i, u_{i,t}>u_{i-1,n_{i-1}} \right\}.
\end{equation*}
If $i_t = \max \left\{ i: t_i < t \right\}$, then we have $i_t \to \infty$, and for $u_i = u_{i,t_i}$ we have
\begin{equation*}
    \E\sbr{ \max_{h\in\HH_{i_t}} \abs{\bar \ell_t(h,Z) - \max_{u_{i_t}\leq m \leq t} e_m(h)}} \leq \g_{i_t}.
\end{equation*}
Since $\HH$ is countable, we can choose a sequence of finite hypothesis classes $\HH_t$ such that $\cup_{t\in\naturals}\HH_t = \HH$. By choosing $u_t = u_{i_t}$, with $\HH_t = \HH_{i_t}$, and $\g_t = \g_{i_t}$, we now have a possible choice for $u_t$, $\HH_t$ and $\g_t$ in~\cref{thm:time-aware}.

\section{Experimental Setup}
\label{s:setup}

\subsection{Training and evaluation}
\label{s:app:train-eval}

\paragraph{Training setup.} Each learner receives a $t$-length sequence of samples $z_{\leq t}$ drawn from the stochastic process, as the training data. Upon training, the learner is expected to make predictions on future samples that correspond to times $t' > t$ up to a fixed horizon $T$. At each future time $t'$, we do not train (modify the weights) using samples after time $t$ (because we do not have them, but we will make predictions on these samples). Given samples $z_{\leq t}$, a time-aware hypothesis class minimizes the empirical prospective risk
\[
    \hat \ell_{t}(h, Z) = \frac{1}{t} \sum_{s=1}^{t} \ell(s, h_s(x_s), y_s);
\]
For an MLP or CNN, $h_s$ corresponds to a network that takes time $s$ as input.

\paragraph{Hyper-parameters} All the networks are trained using stochastic gradient descent (SGD) with Nesterov’s momentum and cosine-annealed learning rate.
The networks are trained at a learning rate of $0.1$ for the synthetic tasks, and learning rate of $0.01$ for MNIST and CIFAR.
The weight-decay is set to $1 \times 10^{-5}$. The images from MNIST and CIFAR-10 are normalized to have mean $0.5$ and standard deviation $0.25$. The models were trained for 100 epochs, which is many epochs after achieving a training accuracy of $1$.

\paragraph{Evaluation} We estimate the prospective risk of each learner using a Monte Carlo estimate. 
For a given training dataset $z_{\leq t}$, we estimate a sequence predictors $h \equiv (h_t)$ which we use to make predictions on future samples.
We wish to approximate the prospective risk (\Cref{eq:prospective_risk}) for the estimated sequence of predictors. We do so, for a single future realization $z_{>t}$ of this process, which yields the estimate
\begin{equation*}
    \hat R_t(h) = \frac{1}{(T-t)} \sum_{s=t+1}^T \ell \left(s, h_s(x_s^j), y_s^{j}) \right).
\end{equation*}
In our experiments, $T=50$,$000$ for CIFAR-10 and MNIST while $T=10$,$000$ for the synthetic data experiments.
For a single learning algorithm, we compute the empirical prospective risk at $15$-$40$ different time steps which results in a significant number of GPU hours in order to plot the learning curves.
For every time step, we compute the mean and standard deviation of the empirical prospective risk using $5$ random seeds. 

\subsection{Architectural Details}
\label{s:app:architectures}

\begin{figure}[h!]
    \centering
    \includegraphics[width=0.75\textwidth]{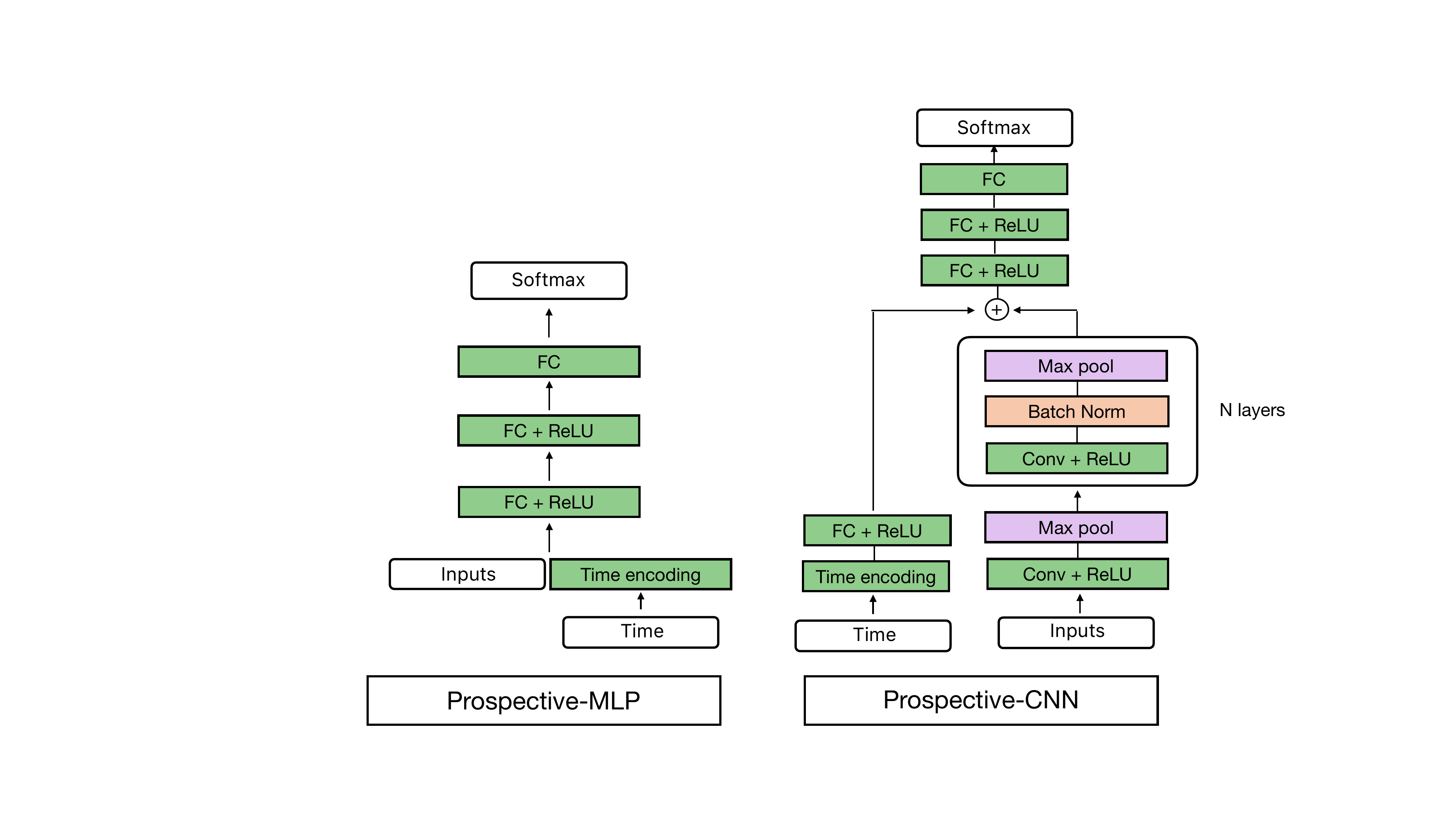}
    \caption{Schematic illustration prospective-MLP and prospective-CNN.}
    \label{fig:time_arch}
\end{figure}

We considered the following architecture choices for the time-agnostic restropective algorithms like ERM that ignore time and the ordering associated with the samples in $z_{\leq t}$.

\paragraph{Retrospective-MLP/CNN.} A multi-layer perceptron (MLP) with two hidden layers with $256$ units is used for the synthetic tasks and the MNIST task. For CIFAR-10, we use a a small convolutional network with 0.12M parameters. It comprises of 3 convolution layers (kernel size 3 and 80 filters) interleaved with max-pooling, ReLU, batch-norm layers, with a fully-connected classifier layer.

\paragraph{Prospective ERM with MLP and CNNs.}
In order to incorporate time into the hypothesis class, we consider an embedding function $\varphi : \mathbb{R} \rightarrow \mathbb{R}^d$ that takes raw time as an input and returns a $d$-dimensional vector denoted as the time-embedding. In our experiments, we define $\varphi:\mathbb{R} \rightarrow \mathbb{R}^d$ as a function that maps
\begin{gather*}
    t \mapsto  (\sin(\w_1 t), \dots, \sin(\w_{\scriptscriptstyle d/2} t), \cos(\w_1 t), \dots,  \cos(\w_{\scriptscriptstyle d/2} t)),
\end{gather*}
where, $\omega_i = \pi / i, \ i = 1, \dots, d/2$ to the be the collection of angular frequencies. We briefly discuss the rationale for this choice in~\Cref{fig:time-embed-mats}. In our experiments, we use $d=50$.

We make our classifiers a function of time by including time $t$ as an input the neural network. This allows the network to vary its hypothesis over time. For MLPs, we concatenate the input with its corresponding time-embedding $\varphi(t)$ which is fed as input. For the CNN (see~\Cref{fig:time_arch}), we add the time-embedding to the output of the convolutional layers instead of concatenating it to the inputs. We also tried concatenating the time-encoding to the inputs of the CNN but found that it performed poorly in both scenarios 2 and 3 (see~\Cref{fig:net_cifar10}).

\begin{figure}[htb]
    \centering
    \includegraphics[width=0.49\linewidth]{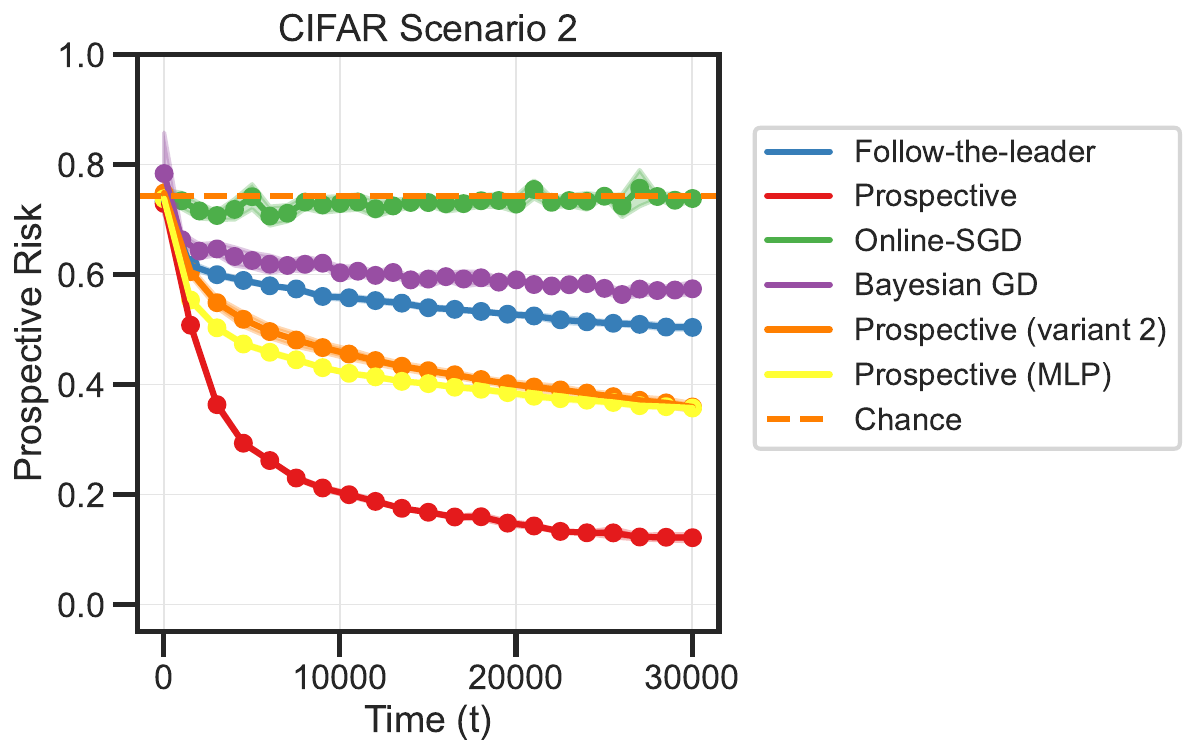}
    \includegraphics[width=0.49\linewidth]{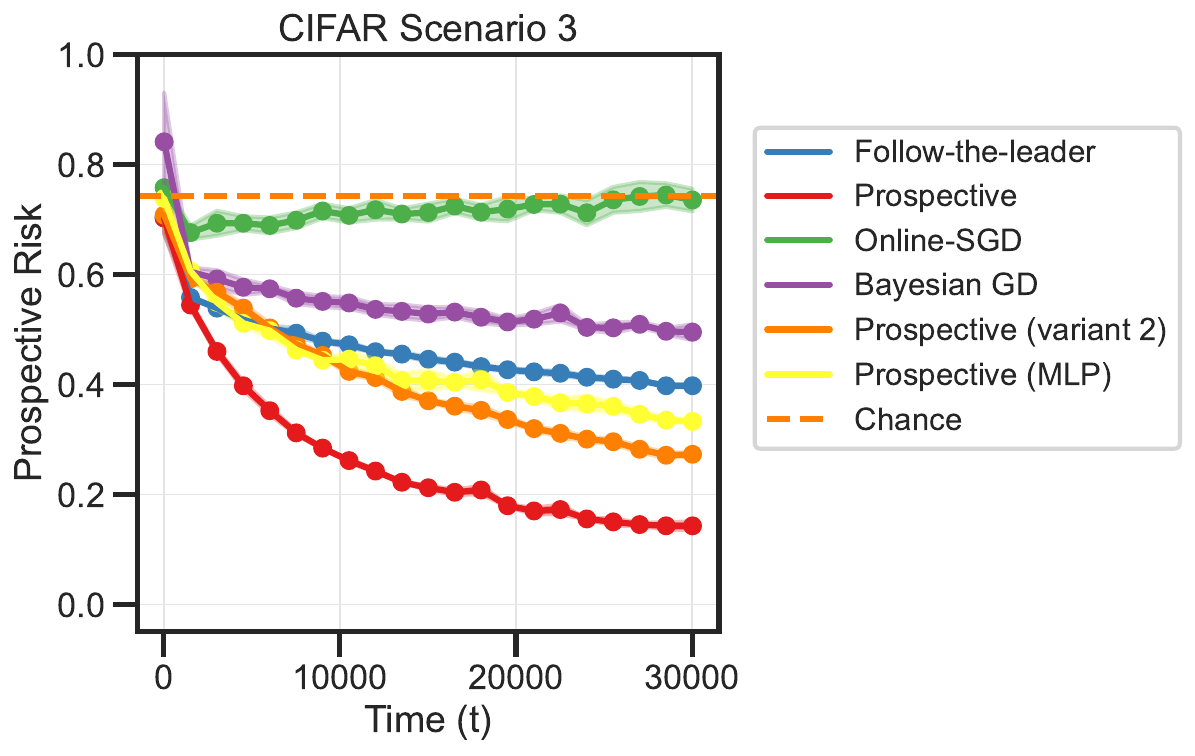}
    \caption{Prospective risk of the CNN architecture on CIFAR-10 for scenarios 2 and 3. The performance of the CNN architecture is significantly worse when the time-embedding is concatenated to the input (variant 2).}
    \label{fig:net_cifar10}
\end{figure}

\paragraph{Frequencies for embedding time}\label{p:app:time_embedding}
In the original Transformer architecture,~\citet{vaswani2017attention} use a position-embedding using the frequencies $\omega_i = 1/10000^{2i/d}$ $ \ i = 1, \dots, d/2$. There are two key differences: (1) We use the absolute time $t$ instead of the relative position, (2) We use the angular frequencies $2\pi / i$.
In~\Cref{fig:time-embed-mats} \textit{(right)}, we illustrate the time-embeddings when we use the two different choices for angular frequencies. For $d=128$, we find that the frequencies from~\citet{vaswani2017attention} result in slowly changing features which makes it less suitable for our task, i.e., many of the dimensions are constant over time which makes many of the dimensions uniformative for the task.
In our experiments, we found out that MLPs and CNNs that use the frequencies from~\citet{vaswani2017attention} perform poorly on the MNIST task for Scenario 2 and 3. 

\begin{figure}[h]
    \centering
    \includegraphics[width=\textwidth]{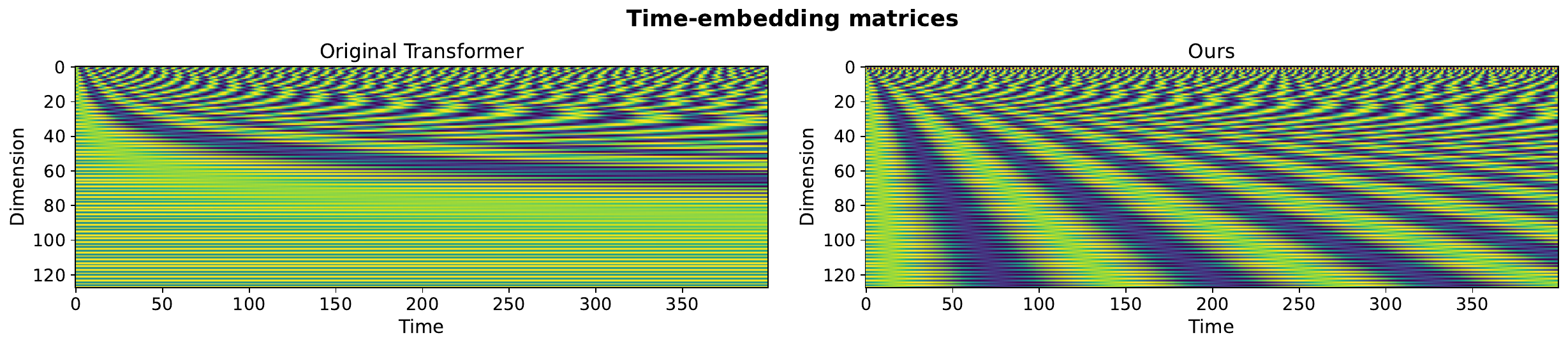}
    \caption{The time-embeddings computed using (1) frequencies from~\citet{vaswani2017attention} \textit{(left)}, and (2) the frequencies from proposed in our work \textit{(right)}. }
    \label{fig:time-embed-mats}
\end{figure}

\section{Additional experiments for~\cref{eg:case3}}
\label{s:app:additional_case3}

We conduct more experiments on prospective learning problems in addition to the ones in~\cref{ss:scenario3_markov4}. 

\subsection{Markov chain with periodic resets}

\paragraph{Dataset and Tasks.}
For synthetic data, we consider the 2 binary classification problems described in~\cref{ss:scenario2_expt}. For CIFAR-10 and MNIST, we consider 2 tasks corresponding to the classes 1-5, and the classes 1-5 but with each class $y$ relabeled to $(y+1)\pmod 5$.
Using these tasks, we construct Scenario 3 problems corresponding to a stochastic process which is a hidden Markov model on two states.
The tasks are governed by a Markov process with transition matrix $P(S_{t+1}=k \mid S_t=k) = 0.1$, where $S_t$ is the task at time $t$.
Additionally after every 10 time-steps, the state of the Markov chain is reset to the first task. 
This ensures that the stochastic process does not have a stationary distribution.
Similar to the previous experiments, for each problem, we generate a sequence of 50,000 samples.
Learners are trained on data from the first $t$ time steps ($z_{\leq t}$) and prospective risk is computed using samples from the remaining time steps.

\paragraph{Learners and hypothesis classes.}
For this scenario, we conduct experiments using follow-the-leader and prospective ERM.
Both methods use MLPs for synthetic and MNIST tasks, and a CNN for the CIFAR-10 task. Note that prospective ERM uses an embedding of time as input in addition to the datum.
Training and evaluation setup is identical to that of~\cref{eg:case2}.

\begin{figure}[htb]
    \centering
    \includegraphics[width=0.32\linewidth]{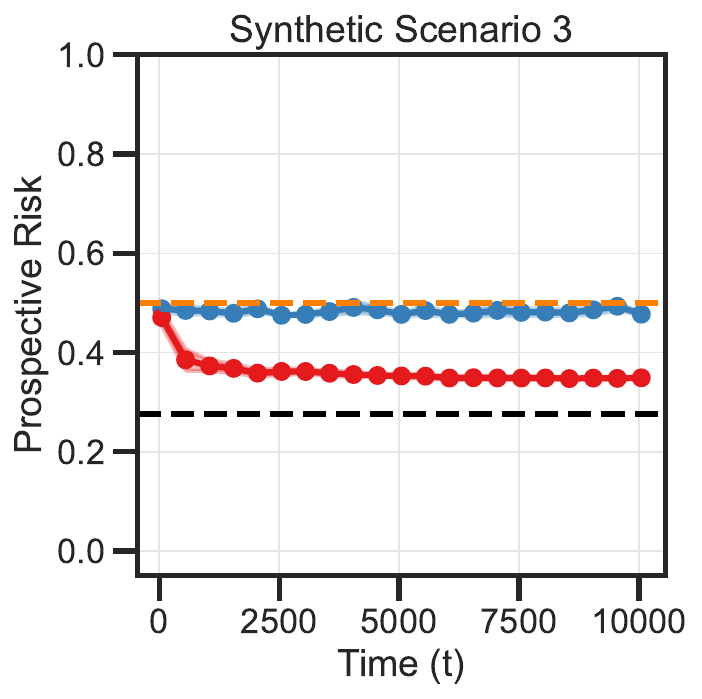}
    \includegraphics[width=0.32\linewidth]{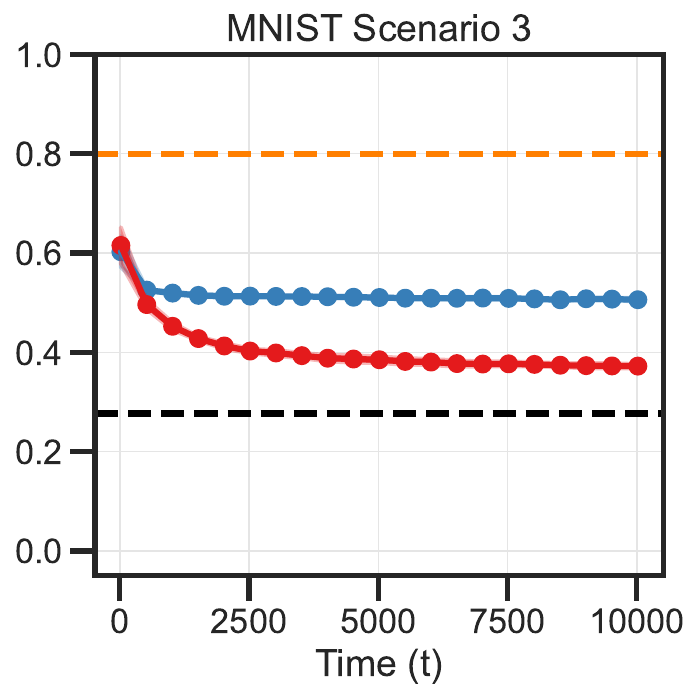}
    \includegraphics[width=0.32\linewidth]{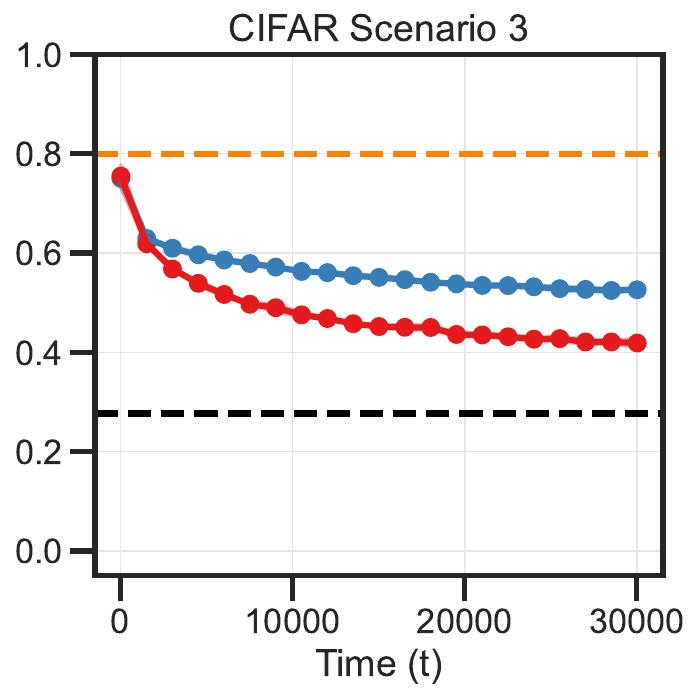}
    \includegraphics[width=0.55\linewidth]{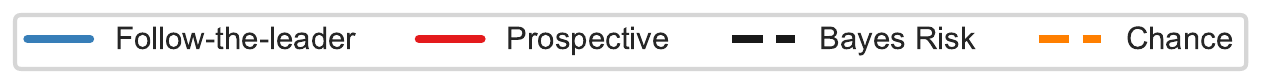}
    \caption{\textbf{Prospective ERM can achieve good prospective risk in~\cref{eg:case3}.}
        We plot the prospective risk across 5 random seeds (which govern the sequence of samples and the weight initialization of the neural networks).
        In all three cases, the risk of prospective ERM approaches Bayes risk while Follow-the-Leader does not achieve a low prospective risk.
        Bayes risk for MNIST and CIFAR-10 problems is calculated by assuming that Bayes risk on  individual tasks is zero.
    }
    \label{fig:net_scenario3_m2}
\end{figure}

As~\Cref{fig:net_scenario3_m2} shows, \textbf{prospective ERM can also prospectively learn problems in Scenario 3 when data is neither independent nor identically distributed}.

\subsection{Stationary Markov chain}

\paragraph{Dataset and Tasks.}
For synthetic data, we consider the 2 binary classification problems described in~\cref{ss:scenario2_expt}. For CIFAR-10 and MNIST, we consider 2 tasks corresponding to the classes 1-5, and the classes 1-5 but with each class $y$ relabeled to $(y+1) \mod 5$.
Using these tasks, we construct Scenario 3 problems corresponding to a stochastic process which is a hidden Markov model on 2 states.
The tasks are governed by a Markov process with transition matrix $P(S_{t+1}=k \mid S_t=k) = 0.1$, where $S_t$ is the task at time $t$.
Unlike the previous subsection (\Cref{fig:net_scenario3_m2}), in this experiment, the Markov chain equilibriates to the stationary distribution. 
Similar to the previous experiments, for each problem, we generate a sequence of 50,000 samples.
Learners are trained on data from the first $t$ time steps ($z_{\leq t}$) and prospective risk is computed using samples from the remaining time steps.

\paragraph{Learners and hypothesis classes.}
For this scenario, we conduct experiments using follow-the-leader and prospective ERM.
Both methods use MLPs for the synthetic and MNIST tasks, and a CNN for the CIFAR-10 task. Note that prospective ERM uses an embedding of time as input in addition to the datum.
Training is identical to that of~\cref{eg:case2}. For evaluation, we compute the empirical prospective risk in~\cref{fig:net_scenario3_m2_s} and empirical discounted prospective risk in~\cref{fig:net_scenario3_m2_sd}.

\begin{figure}[h!]
    \centering
    \includegraphics[width=0.32\linewidth]{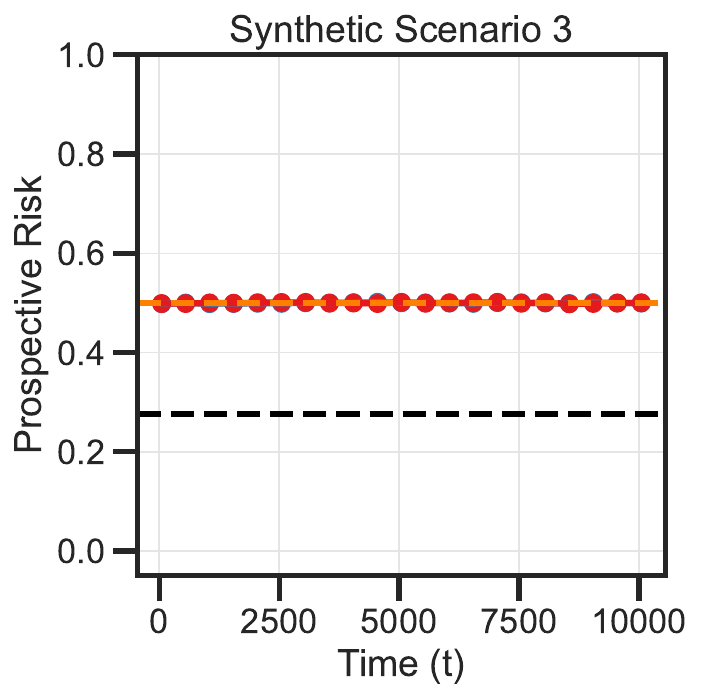}
    \includegraphics[width=0.32\linewidth]{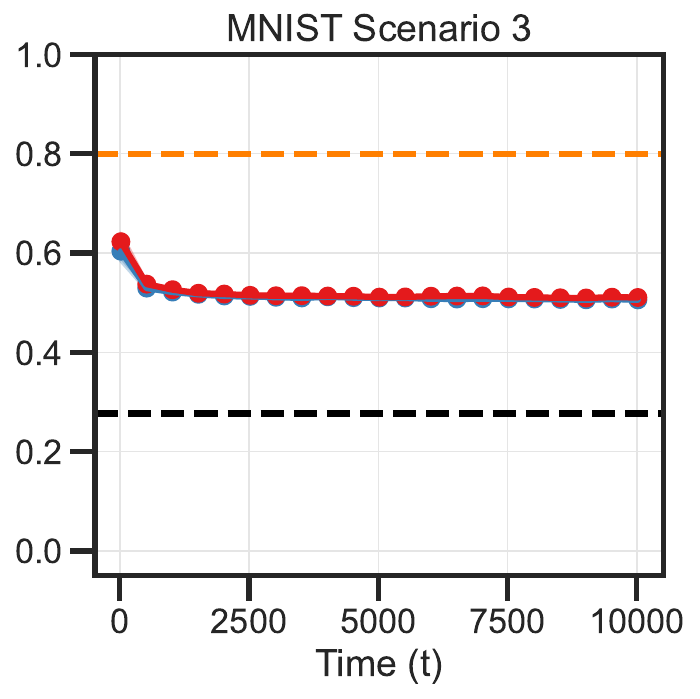}
    \includegraphics[width=0.32\linewidth]{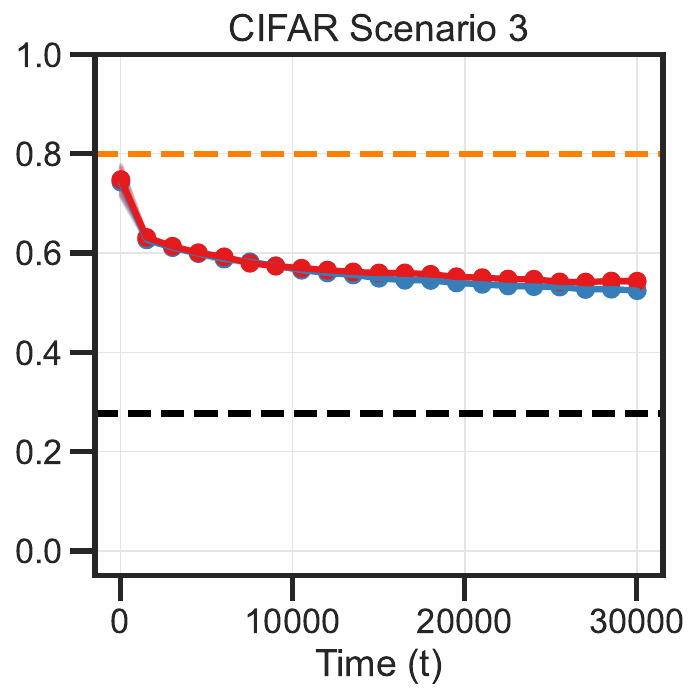}
    \includegraphics[width=0.55\linewidth]{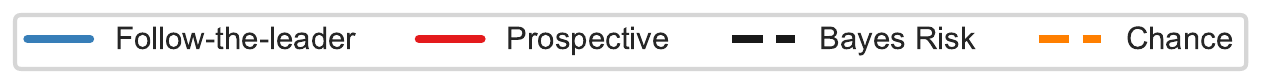}
    \caption{\textbf{For a task defined on a stationary Markov process, the Bayes risk is trivial and can be achieved by a hypothesis that doesn't change over time.}
        We plot the prospective risk across 5 random seeds (which govern the sequence of samples and the weight initialization of the neural networks).
        In all three cases, both follow-the-leader and prospective ERM approach the Bayes risk. The stationary distribution has an equal probability of seeing either task and a fixed hypothesis can achieve Bayes risk on this problem.
    }
    \label{fig:net_scenario3_m2_s}
\end{figure}

\begin{figure}[htb]
    \centering
    \includegraphics[width=0.32\linewidth]{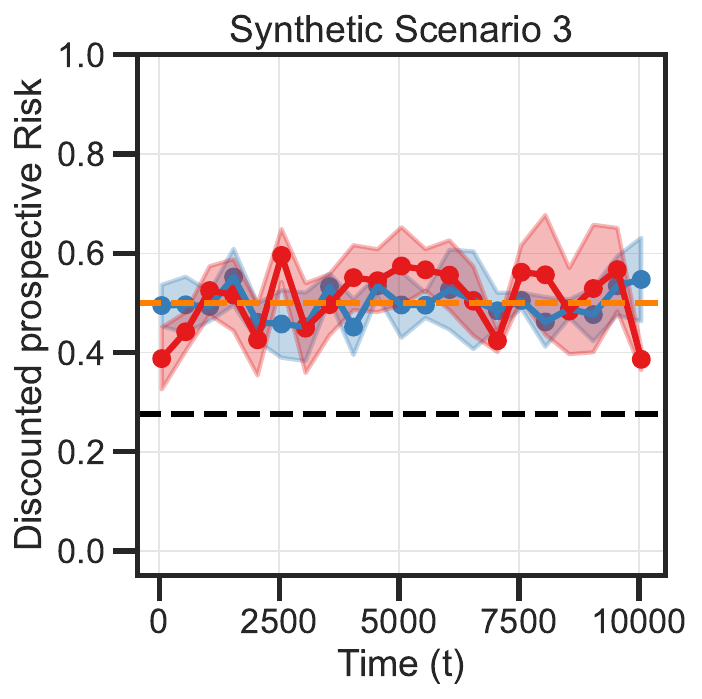}
    \includegraphics[width=0.32\linewidth]{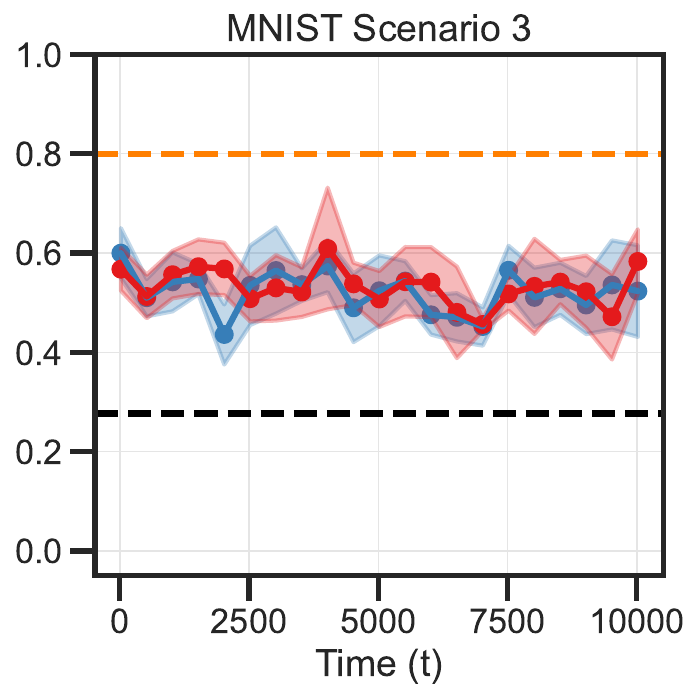}
    \includegraphics[width=0.32\linewidth]{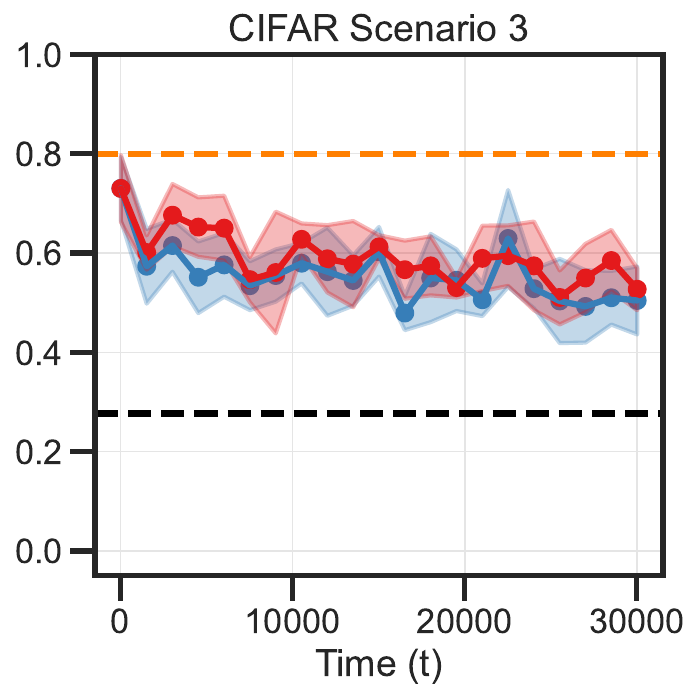}
    \includegraphics[width=0.55\linewidth]{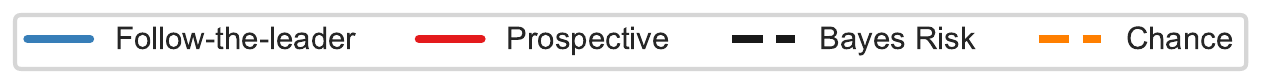}
    \caption{\textbf{Both prospective ERM and follow-the-leader achieve similar discounted prospective risks (with discount factor 0.95).}
        We plot the discounted prospective risk across 5 random seeds. Both follow-the-leader and prospective ERM achieve similar discounted risks. Note that the error bars are larger since the risk is computed over fewer samples, i.e., the discount factor reduces the effective number of test data points.
    }
    \label{fig:net_scenario3_m2_sd}
\end{figure}

\section{Large language models may not be good prospective learners}
\label{s:llm}

It is an interesting question whether LLMs which are trained using auto-regressive likelihoods with Transformer-based architectures can do prospective learning. To study this, we used LLama-7B~\citep{touvron2023llama} and Gemma-7B~\citep{team2024gemma} to evaluate the prospective risk for~\cref{eg:case1,eg:case2,eg:case3}. The prompt contains a few samples from the stochastic process (sub-sequences of $(Y_t)$ consisting of 0s and 1s) and an English language description of the family of stochastic processes that generated the samples. The LLM is tasked with completing the prompt with the next 20 most likely sequence of samples.

\begin{figure}[!h]
    \centering
    \includegraphics[width=0.325\linewidth]{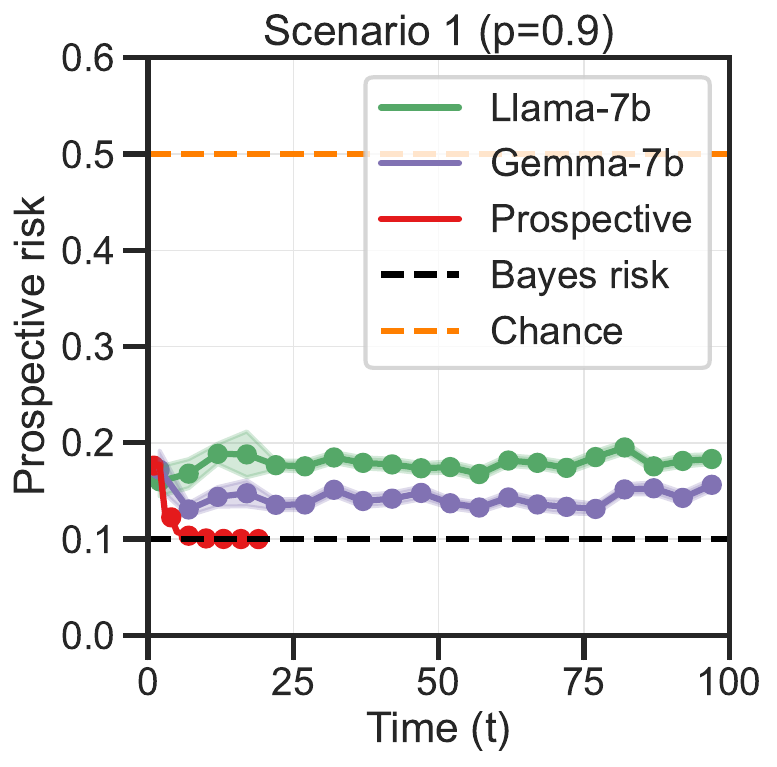}
    \includegraphics[width=0.325\linewidth]{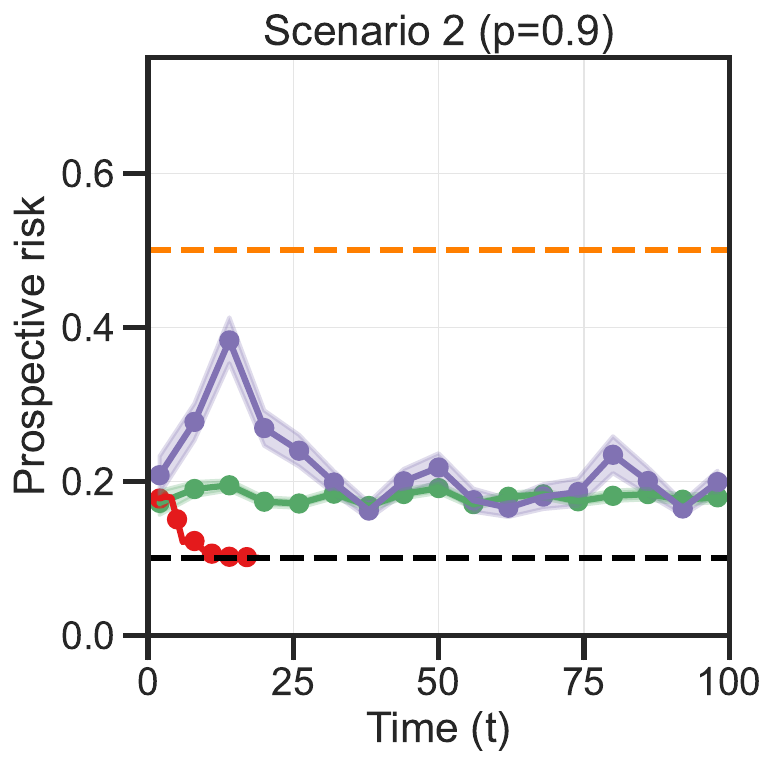}
    \includegraphics[width=0.315\linewidth]{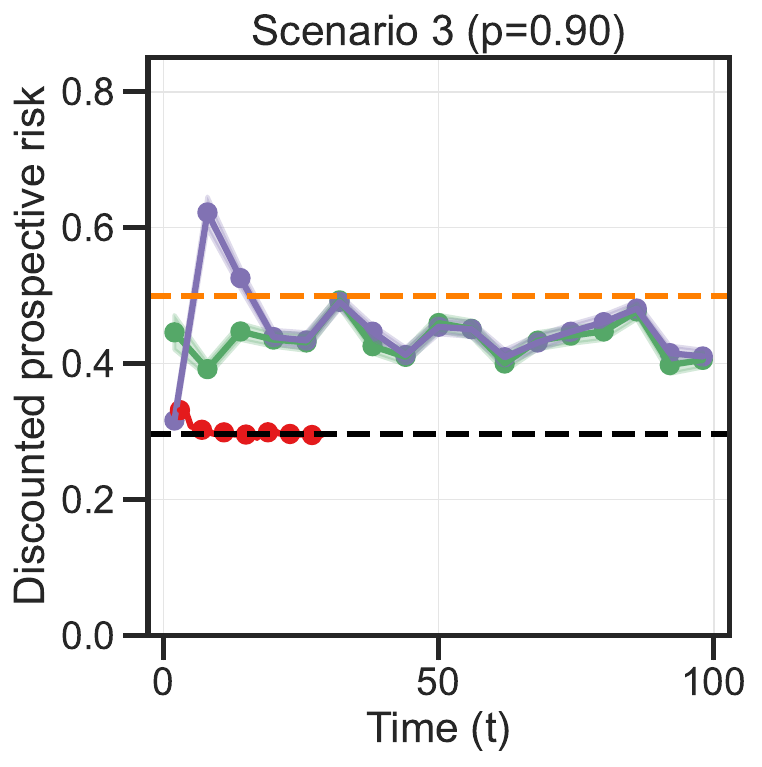}
    \caption{The prospective risk of LLMs when evaluated on the three scenarios, when averaged over draws of the training data. The LLM does not improve with more data unlike a prospective MLE-learner. This suggests that LLMs are incapable of prospection.}
    \label{fig:llm_scenarios}
\end{figure}

\textbf{Selecting the appropriate prompt}
LLMs can be brittle and are known to generate different completions depending on if the prompt was in English, Thai or Swahili~\citep{deng2023multilingual}.
This makes it difficult to evaluate prospective learning in LLMs.
Therefore, in our experiments, we do not describe prospective risk or other details about prospective learnability in the prompt.
We simply describe the data generating process and some samples from this process in the prompt and prompt  the model to generate the most likely completion. The prompts are described in detail in~\cref{s:app:llm_prompt}; we also experimented with a few variants of these prompts.

We use greedy decoding to generate a sequence of tokens, i.e., the token with the highest probability is sampled at every step.
We vary the number of time-steps in the prompt from 1 to 100 which corresponds to the amount of training data.
For a particular value of time $t$, we generate 20 more tokens and compute (an estimate of) the prospective risk of this completion; this is the test data.
We report the prospective risk computed on 100 different realizations of the stochastic process, i.e., each point in~\cref{fig:llm_scenarios} is the prospective risk on the next 20 samples, averaged over 100 realizations of the training data. In~\cref{fig:llm_scenarios}, we find that LLMs do not obtain better prospective risk with more samples, i.e., Llama-7B and Gemma-7B do not seem to be doing prospective learning. It is quite surprising that they do not achieve Bayes risk even on independent and identically distributed data. We  note that these experiments do not definitively answer whether LLMs can learn prospectively.

\begin{figure}[!bt]
    \centering
    \includegraphics[width=0.75\linewidth]{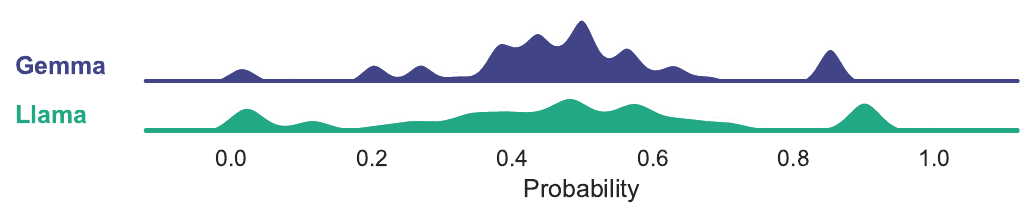}
    \caption{We prompt LLMs to generate the outcomes of 10 Bernoulli trials with $p=0.75$. We plot the probability of generating token 1 over all possible sequences of 10 Bernoulli trials and find that the outcomes are generated with probabilities that range from 0 to 1 with an average of 0.5. Ideally, the token 1 should should always be generated with $p=0.75$, i.e., the LLMs cannot simulate outcomes of a Bernoulli distribution.
    }
    \label{fig:llm_strip}
\end{figure}

\paragraph{Can LLMs even generate outcomes of a sequence of Bernoulli trials?}
We prompted an LLM to generate a sequence of 0s and 1s sampled from a Bernoulli distribution with probability $p=0.75$.
We then plot the probability of generating each 0 or 1, for all sequences of length 10 in~\cref{fig:llm_strip}. Ideally, the strip plot would be concentrated around 0.25 for 0 and 0.75 for 1, i.e., 0s should be generated with frequency close to 0.25.
However, we find this is not the case and LLMs seem incapable of even generating a sequence of Bernoulli trials.
This provides some context to the results discussed above. LLMs do not seem to be doing prospective learning, but they cannot even sample from a Bernoulli distribution under these experimental conditions.%
\footnote{Responses of ChatGPT-turbo and GPT-4o were more verbose compared to those of Llama-7B and Gemma-7B. ChatGPT responded correctly to Scenario 1, perhaps as a result of using a scratchpad~\citep{nye2021show,wei2022chain} for generating the results of intermediate steps of the algorithm. But it did not achieve a small prospective risk for Scenarios 2 and 3. Gemini and GPT-4o refused to give a complete response to Scenario 3 and only outlined the sequence of steps, albeit correctly.
}

\subsection{Prompts for testing prospective learning in LLMs}
\label{s:app:llm_prompt}
We use the following 3 prompts to generate a sequence of predictions using in LLama-7B and Gemma-7B. We found that the LLMs always generated a sequence of 0s and 1s and we did not need to post-process the response or change how the tokens were sampled. We generate 20 samples using greedy decoding; the language models are executed with the weights in 16-bit precision. We tried a few different variants for providing prompts to the LLM, e.g., by adding spaces between the 1s and 0s, the results are qualitatively similar.

\begin{tcolorbox}[
    standard jigsaw,
    title=Scenario 1,
    opacityback=0
]
Consider the following sequence of outcomes generated from a single Bernoulli distribution. 

\vspace*{0.3cm}11110101111110111111111111

\vspace*{0.3cm}The next 20 most likely outcomes are:
\end{tcolorbox}

\begin{tcolorbox}[
    standard jigsaw,
    title=Scenario 2,
    opacityback=0
]
Consider the following sequence of outcomes generated by two Bernoulli distributions, where all even outcomes are generated by a Bernoulli distribution with parameter 'p' and odd outcomes are generated from a Bernoulli distribution with parameter '1-p'. 

\vspace*{0.3cm}
10101010101010101010101000101010101010101

\vspace*{0.3cm}
The next 20 most likely sequence of outcomes are:
\end{tcolorbox}

\begin{tcolorbox}[
    standard jigsaw,
    title=Scenario 3,
    opacityback=0
]
Consider the following sequence of states generated by a Markov process with 2 states (0, 1):

\vspace*{0.3cm}10101101010100101010

\vspace*{0.3cm}The next 20 most likely outcomes are:
\end{tcolorbox}

To make the LLM generate a sequence of Bernoulli trials with probability 0.75, we used the following prompt.
\begin{tcolorbox}[
    standard jigsaw,
    title=Bernoulli trials,
    opacityback=0
]
Generate outcomes of 10 Bernoulli trials where 0 is generated with probability 0.25 and 1 with probability 0.75
\end{tcolorbox}
\end{appendix}

\end{document}